\def\eqref#1{equation~\ref{#1}}
\def\1{\bm{1}}
\DeclareMathAlphabet{\mathsfit}{\encodingdefault}{\sfdefault}{m}{sl}
\SetMathAlphabet{\mathsfit}{bold}{\encodingdefault}{\sfdefault}{bx}{n}
\def\sR{{\mathbb{R}}}
\providecommand{\zvar}{\mathbf{z}}
\providecommand{\wvar}{\mathbf{w}}
\providecommand{\yvar}{\mathbf{y}}
\providecommand{\uvar}{\mathbf{u}}
\providecommand{\vvar}{\mathbf{v}}
\providecommand{\zset}{\mathcal{Z}}
\providecommand{\wset}{\mathcal{W}}
\providecommand{\mset}{\mathcal{M}}
\providecommand{\xset}{\mathcal{X}}
\providecommand{\dset}{\mathcal{D}}
\theoremstyle{plain}
\newtheorem{theorem}{Theorem}[section]
\newtheorem{lemma}[theorem]{Lemma}
\theoremstyle{definition}
\theoremstyle{remark}
\icmltitlerunning{Analyzing the Latent Space of GAN through Local Dimension Estimation}
\begin{document}

\twocolumn[
\icmltitle{Analyzing the Latent Space of GAN through Local Dimension Estimation}



\icmlsetsymbol{equal}{*}

\begin{icmlauthorlist}
\icmlauthor{Jaewoong Choi}{kias}
\icmlauthor{Geonho Hwang}{kias}
\icmlauthor{Hyunsoo Cho}{snu}
\icmlauthor{Myungjoo Kang}{snu}
\end{icmlauthorlist}

\icmlaffiliation{kias}{Korea Institute for Advanced Study}
\icmlaffiliation{snu}{Seoul National University}

\icmlcorrespondingauthor{Jaewoong Choi}{chjw1475@kias.re.kr}
\icmlcorrespondingauthor{Myungjoo Kang}{mkang@snu.ac.kr}


\vskip 0.3in
]



\printAffiliationsAndNotice{}  

\begin{abstract}
The impressive success of style-based GANs (StyleGANs) in high-fidelity image synthesis has motivated research to understand the semantic properties of their latent spaces. In this paper, we approach this problem through a geometric analysis of latent spaces as a manifold. In particular, we propose a local dimension estimation algorithm for arbitrary intermediate layers in a pre-trained GAN model. The estimated local dimension is interpreted as the number of possible semantic variations from this latent variable. Moreover, this intrinsic dimension estimation enables unsupervised evaluation of disentanglement for a latent space. Our proposed metric, called \textit{Distortion}, measures an inconsistency of intrinsic tangent space on the learned latent space. Distortion is purely geometric and does not require any additional attribute information. Nevertheless, Distortion shows a high correlation with the global-basis-compatibility and supervised disentanglement score. Our work is the first step towards selecting the most disentangled latent space among various latent spaces in a GAN without attribute labels.
\end{abstract}

\begin{figure}[h]
  \begin{center}
    \includegraphics[width=\columnwidth]{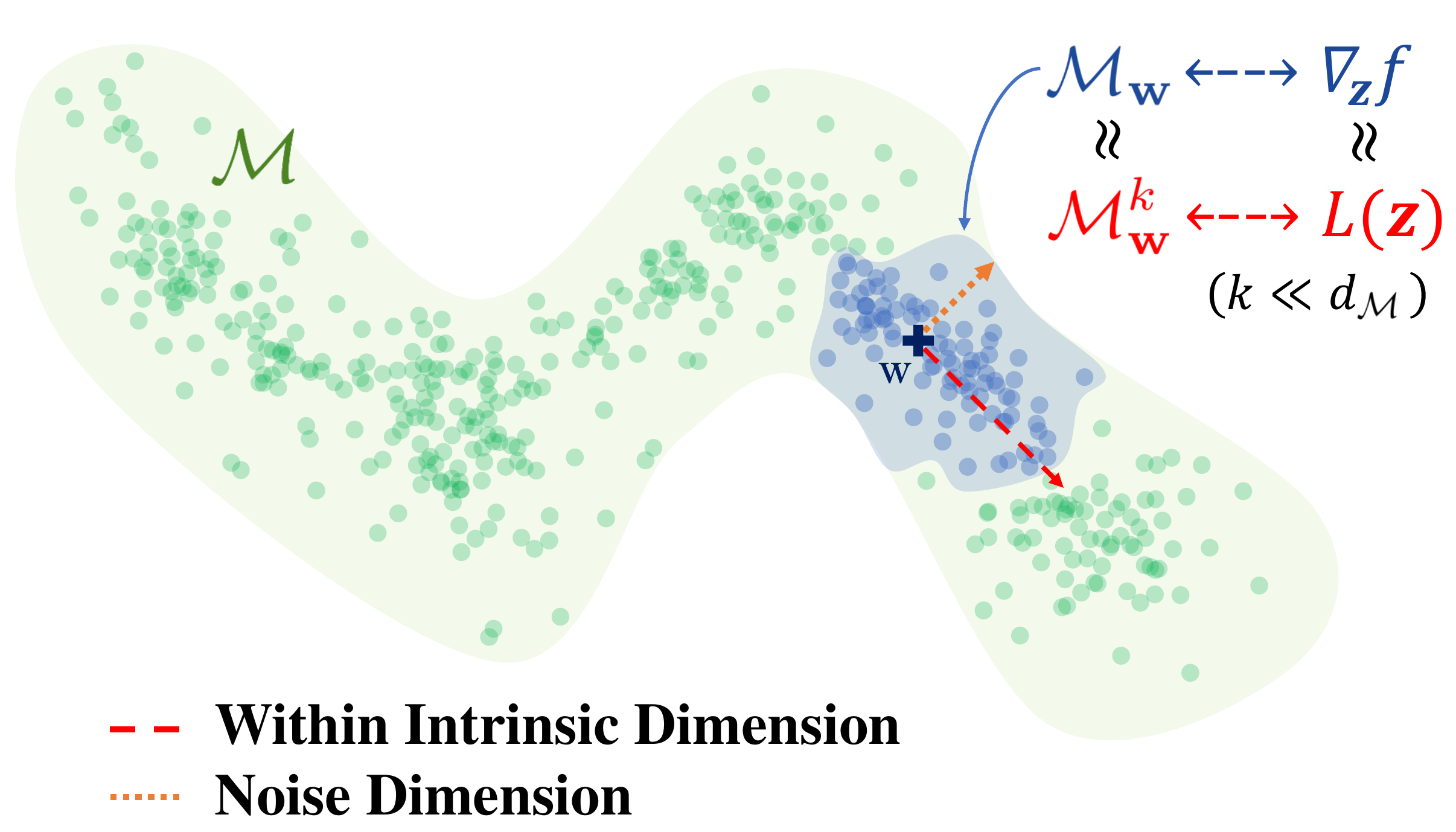}
  \end{center}
  \caption{
   \textbf{Local Dimension Estimation on Latent Manifold $\mset$.} Our goal is to find the dimension $k$ such that the $k$-dimensional submanifold $\mset_{\wvar}^{k}$ represents the denoised local latent manifold $\mset_{\wvar} \subseteq \sR^{d_{\mset}}$.   
   }
  \label{fig:concept}
  \vspace{-10pt}  
\end{figure}

\section{Introduction}
Generative Adversarial Networks (GANs) \citep{goodfellow2014generative} have achieved remarkable success in generating realistic high-resolution images \citep{karras2018progressive, karras2019style, karras2020analyzing, AliasFreeGAN, karras2020training, brock2018large}. Nevertheless, understanding how GAN models represent the semantics of images in their latent spaces is still a challenging problem. To this end, several recent works investigated the disentanglement \citep{bengio2013representation} properties of the latent space in GAN \citep{goetschalckx2019ganalyze, jahanian2019steerability, plumerault2020controlling, shen2020interpreting}. 
A latent space is called \textit{disentangled} if there is a bijective correspondence between each semantic attribute and each axis of latent space when represented with the optimal basis. (See the appendix for detail.)
Due to this bijective correspondence, the disentangled latent space is a concise and interpretable representation of data \cite{bengio2013representation}.

The style-based GAN models \citep{karras2019style, karras2020analyzing} have been popular in previous studies for identifying a disentangled latent space in a pre-trained model. First, the space of style vector, called $\wset$-space, was shown to provide a better disentanglement property compared to the latent noise space $\zset$ \citep{karras2019style}. After that, several attempts have been made to discover other disentangled latent spaces, such as $\wset^{+}$-space \citep{abdal2019image2stylegan} and $\mathcal{S}$-space \citep{wu2020stylespace}. 
However, the degree of disentanglement was assessed by the manual inspection \citep{karras2019style,abdal2019image2stylegan,wu2020stylespace} or by the quantitative scores employing a pre-trained feature extractor \citep{karras2019style} or an attribute annotator \citep{karras2019style,wu2020stylespace}. 
The manual inspection is vulnerable to sample dependency, and the quantitative scores depend on the pre-trained models and the set of selected target attributes. Therefore, we need an \textit{unsupervised} quantitative evaluation scheme for the disentanglement of latent space that \textit{does not rely on pre-trained models}.

In this paper, we investigate the semantic property of a latent space in pre-trained GAN models by analyzing its geometrical property. Specifically, we propose a method for estimating the local (manifold) dimension of the learned latent space. 
Recently, \citet{LocalBasis} demonstrated that, in the learned latent space, the basis of tangent space describes the disentangled semantic variations.
In this regard, the local intrinsic dimension represents the number of disentangled semantic variations from the given latent variable.
Furthermore, this intrinsic dimension estimate leads to an unsupervised quantitative score for the disentanglement property of the latent space, called \textit{Distortion}. Our experiments demonstrate that our proposed metric shows a high correlation with the global-basis-compatibility and supervised disentanglement score.
(The global-basis-compatibility will be defined in Sec \ref{sec:distortion}.)
Our contributions can be summarized as follows:
\begin{enumerate}[topsep=-1pt, itemsep=-1pt]
    \item We propose a local intrinsic dimension estimation scheme for an intermediate latent space in pre-trained GAN models. The estimated dimension and the corresponding refined latent manifold describe the principal variations of generated images.   
    \item We propose a layer-wise disentanglement score, called \textit{Distortion}, that measures the inconsistency of intrinsic tangent space. The proposed metric shows a high correlation with the global-basis-compatibility and supervised disentanglement score.
    \item We analyze the intermediate layers of the mapping network through Distortion metric. Our analysis elucidates the superior disentanglement of $\wset$-space compared to the other intermediate layers and suggests a criterion for finding a similar-or-better alternative.
\end{enumerate}

\section{Related Works}
\paragraph{Style-based Generator}
Recently, GANs with style-based generator architecture \citep{karras2019style, karras2020analyzing, AliasFreeGAN, StyleGANXL} have achieved state-of-the-art performance in realistic image generation. In conventional GAN architecture, such as DCGAN \citep{radford2015unsupervised} and ProGAN \citep{karras2018progressive}, the generator synthesizes an image by transforming a latent noise with a sequence of convolutional layers. 
On the other hand, the style-based generator consists of two subnetworks: \textit{mapping network} $f : \zset \rightarrow \wset$ and \textit{synthesis network} $g :  \sR^{n_{0}} \times \wset^{L}  \rightarrow \xset$. The synthesis network is similar to conventional generators in that it is composed of a series of convolutional layers $\{g_{i}\}_{i=1, \cdots, L}$. The key difference is that the synthesis network takes the learned constant feature $\yvar_{0} \in \sR^{n_0}$ at the first layer $g_{0}$, and then adjusts the output image by injecting the layer-wise styles $\wvar$ and noise (Layer-wise noise is omitted for brevity.):
\begin{equation} \label{eq:stylegan_syn}
    \yvar_{i} = g_{i} (\yvar_{i-1}, \wvar) \,\, \mathrm{with} \,\, \wvar = f(\zvar) 
    \,\, \mathrm{for} \,\, i = 1, \cdots, L,
\end{equation}
where the style vector $\wvar$ is attained by transforming a latent noise $\zvar$ via the mapping network $f$. 

\paragraph{Understanding Latent Semantics}
The previous attempts to understand the semantic property of latent spaces in StyleGANs are categorized into two topics: (i) finding more disentangled latent space in a model; (ii) discovering meaningful perturbation directions in a latent space corresponding to disentangled semantics. More precisely, the semantic perturbation in (ii) is defined as a latent perturbation that changes a generated image in only one semantics.
Several studies on (i) suggested various disentangled latent spaces in StyleGAN models, for example, $\wset$ \citep{karras2019style}, $\wset^{+}$ \citep{abdal2019image2stylegan}, $P_{N}$ \citep{Pnspace}, and $\mathcal{S}$-space \citep{wu2020stylespace}. 
However, the disentanglement of each latent space was compared by observing samples manually \citep{karras2019style,abdal2019image2stylegan,wu2020stylespace} or by quantitative metrics relying on pre-trained models \citep{karras2019style,wu2020stylespace}. Our Distortion metric achieves a high correlation with the supervised metric, e.g., DCI metric \cite{DCIMetric}, while not requiring a pre-trained model.

Also, the previous works on (ii) are classified into local and global methods. The local methods find sample-wise perturbation directions  \citep{ramesh2018spectral, patashnik2021styleclip, abdal2021styleflow, LowRankSubspace, LocalBasis}. On the other hand, the global methods search sample-independent perturbation directions that perform the same semantic manipulation on the entire latent space \citep{harkonen2020ganspace, shen2020closed, voynov2020unsupervised}. \textbf{Throughout this paper, we refer to these local methods as \textit{local basis} and these global methods as \textit{global basis}.}
GANSpace \citep{harkonen2020ganspace} showed that the principal components obtained by PCA can serve as the global basis. SeFa \citep{shen2020closed} suggested the singular vectors of the first weight parameter applied to latent noise as the global basis. 
These global basis showed promising results, but they were successful in a limited area. Depending on the sampled latent variables, these methods exhibited limited semantic factorization and sharp degradation of image fidelity \cite{LocalBasis, frechetbasis}. In this regard, \citet{LocalBasis} suggested the need for diagnosing a global-basis-compatibility of latent space. 
If there is no ideal global basis for semantic attributes in the target latent space in the first place, all global basis can only attain limited success on it. Our Distortion metric can serve as a meaningful score for testing the global-basis-compatibility of latent spaces (Sec \ref{sec:distortion}).

\vspace{-5pt}
\paragraph{Semantic Analysis via Latent Manifold} \label{sec:background_local_basis}
\citet{LocalBasis} proposed an unsupervised method for finding local semantic perturbations based on the local geometry, called \textit{Local Basis (LB)}. Throughout this paper, we denote Local Basis as LB to avoid confusion with the general term "local basis" in the previous paragraph.
Assume the support $\zset$ of input prior distribution $p(\zvar)$ is the entire Euclidean space, i.e., $\zset = \sR^{d_{\zset}}$, for example, Gaussian prior $p(\zvar) = \mathcal{N}(0, I)$. 
We denote the target latent space by $\mset = f(\zset)\subseteq \sR^{d_{\mset}}$ and refer to the subnetwork between them by $f$.
Note that $\mset$ is defined as an image of the trained subnetwork $f$. Thus, we call $\mset$ the \textit{learned latent space} or \textit{latent manifold} following the manifold interpretation of \citet{LocalBasis}.

LB is defined as the ordered basis of tangent space $T_{\wvar} \mset_{\wvar}^{k}$ at $\wvar = f(\zvar) \in \mset$ for the $k$-dimensional local approximating manifold $\mset_{\wvar}^{k}$. Here, $\mset_{\wvar}^{k} \subseteq \mset$ indicates a $k$-dimensional submanifold of $\mset$ that approximates $\mset$ around $\wvar$ (Fig \ref{fig:concept}) with $k \leq d_{\mset}$:
\begin{multline}
    \mset^{k}_{\wvar} \approx \mset_{\wvar} \\
    \textrm{where} \quad
    \mset_{\wvar} = \left\{ f\left(\zvar_{\epsilon} \right) \mid \| \zvar_{\epsilon} - \zvar \| < \epsilon \right\}
    \subseteq \mset. 
\end{multline}
Using the fact that $\mset=f(\zset)$, the local approximating manifold $\mset_{\wvar}^{k}$ can be discovered by solving the low-rank approximation problem of $df_{\zvar}$, i.e., the Jacobian matrix $\nabla_{\zvar} f$ of $f$ because of the assumption $T_{\zvar}\zset = \sR^{d_{\zset}}$:
\begin{equation}
    \textrm{minimize}_{L} \,\, \| df_{\zvar} - L \|_{2}, \quad \textrm{where} \,\, \operatorname{rank}(L) \leq k.
\end{equation}
The analytic solution of this low-rank approximation problem is obtained in terms of Singular Value Decomposition (SVD) by Eckart–Young–Mirsky Theorem \citep{SVD_lowrank}. From that, $\mset_{\wvar}^{k}$ and the corresponding LB are given as follows: For the $i$-th singular vector $\uvar^{\zvar}_{i}\in \sR^{d_{\zset}}$, $\vvar^{\wvar}_{i}\in \sR^{d_{\mset}}$, and $i$-th singular value $\sigma^{\zvar}_{i}\in \sR$ of $df_{\zvar}$ with $\sigma^{\zvar}_1\geq \cdots\geq \sigma^{\zvar}_{n}$,
\begin{align}
    &\text{(SVD) } \,\, df_{\zvar}(\uvar^{\zvar}_{i}) = \sigma^{\zvar}_{i} \cdot \vvar^{\wvar}_{i} \quad \text{for}\,\, \forall i, \label{eq:jacobian_svd}\\
    &\text{LB}(\wvar) = \{ \vvar^{\wvar}_{i} \}_{1\leq i\leq n} \quad \text{where} \,\, \wvar = f(\zvar), \\
    \mset^{k}_{\wvar} &= \left\{ f\left(\zvar + \sum_{1 \leq i \leq k} t_{i} \cdot \uvar^{\zvar}_{i} \right) \mid t_{i} \in (-\epsilon_{i} , \epsilon_{i} )\right\}. \label{eq:approx_mfd}
\end{align}
Note that \textbf{the tangent space of $\mset_{\wvar}^{k}$ is spanned by the top-$\boldsymbol{k}$ LB}, i.e. $T_{\wvar} \mset^{k}_{\wvar} = \operatorname{span}\{ \vvar^{\wvar}_{i} : 1 \leq i \leq k \} $. 
\citet{LocalBasis} demonstrated that LB serves as the local semantic perturbations from the base latent variable $\wvar$. However, estimating the number of meaningful perturbations remains elusive.
Since LB is defined as singular vectors, LB presents the candidates as much as $\min(d_{\zset}, d_{\mset})$, e.g., 512 for $\wset$-space in StyleGANs. To address this problem, we propose the local dimension estimation that can refine these candidates up to 90\%. Moreover, this local dimension estimation leads to an unsupervised disentanglement metric (Sec \ref{sec:distortion}).

\begin{figure*}[t]
    \centering
    \begin{subfigure}[b]{0.3\textwidth}
    \centering
    \includegraphics[width=\columnwidth]{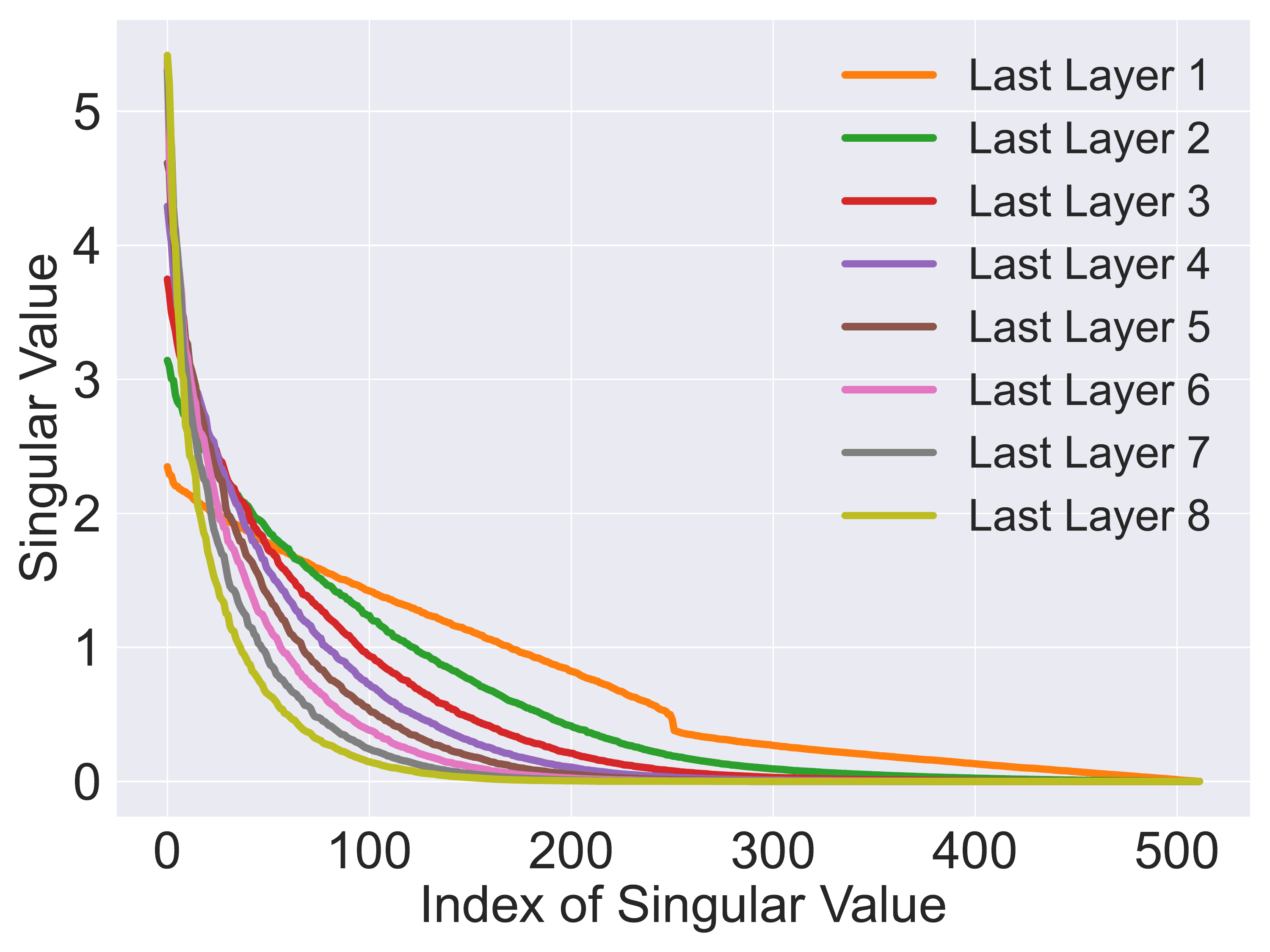}
    \caption{Singular Value Distribution}
    \label{fig:sv_dist}
    \end{subfigure}
    \quad
    \begin{subfigure}[b]{0.3\textwidth}
    \centering
    \includegraphics[width=\columnwidth]{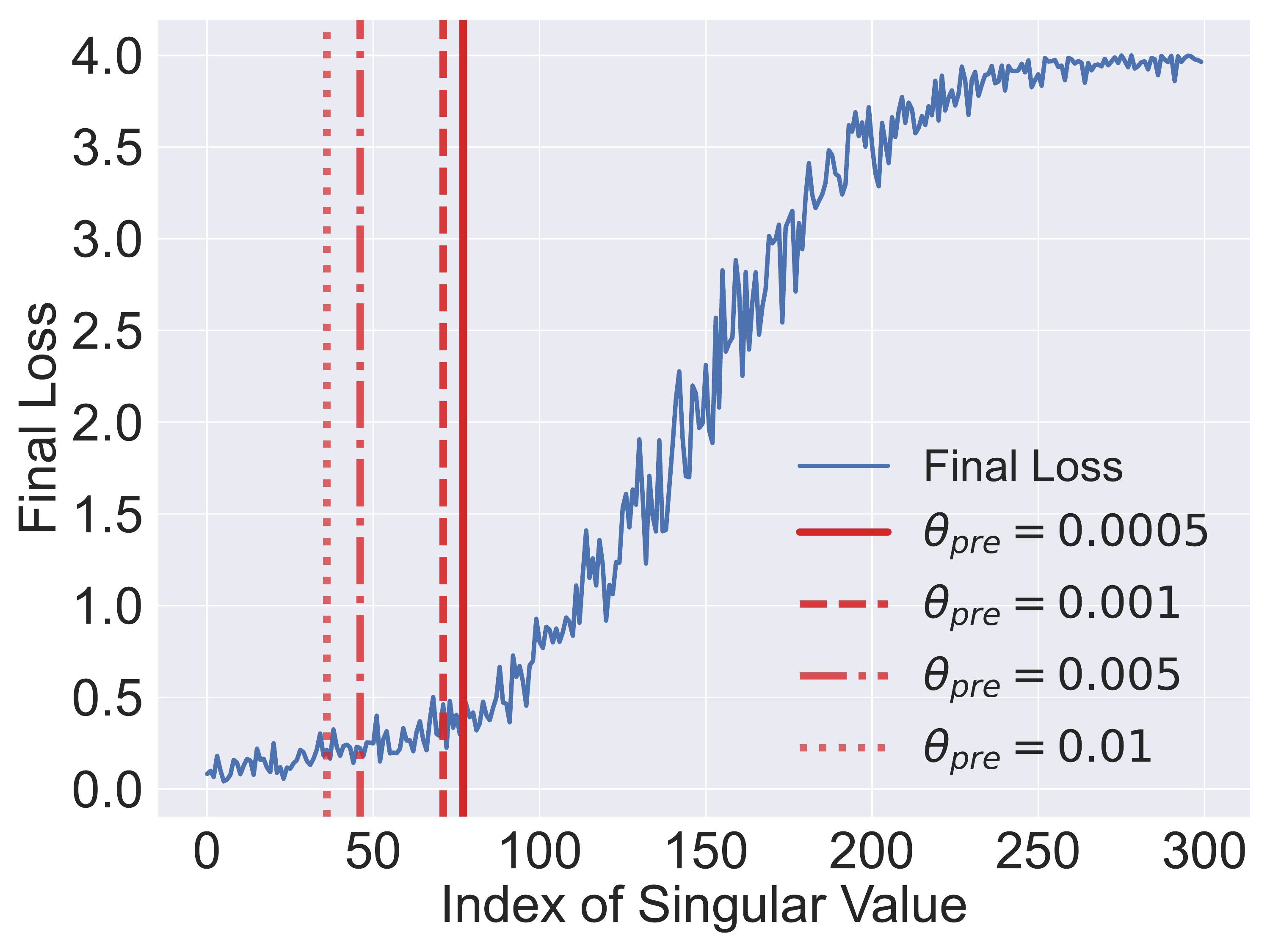}
    \caption{Off-manifold Results}
    \label{fig:off_mfd}
    \end{subfigure}
    \quad
    \begin{subfigure}[b]{0.3\textwidth}
    \centering
    \includegraphics[width=\columnwidth]{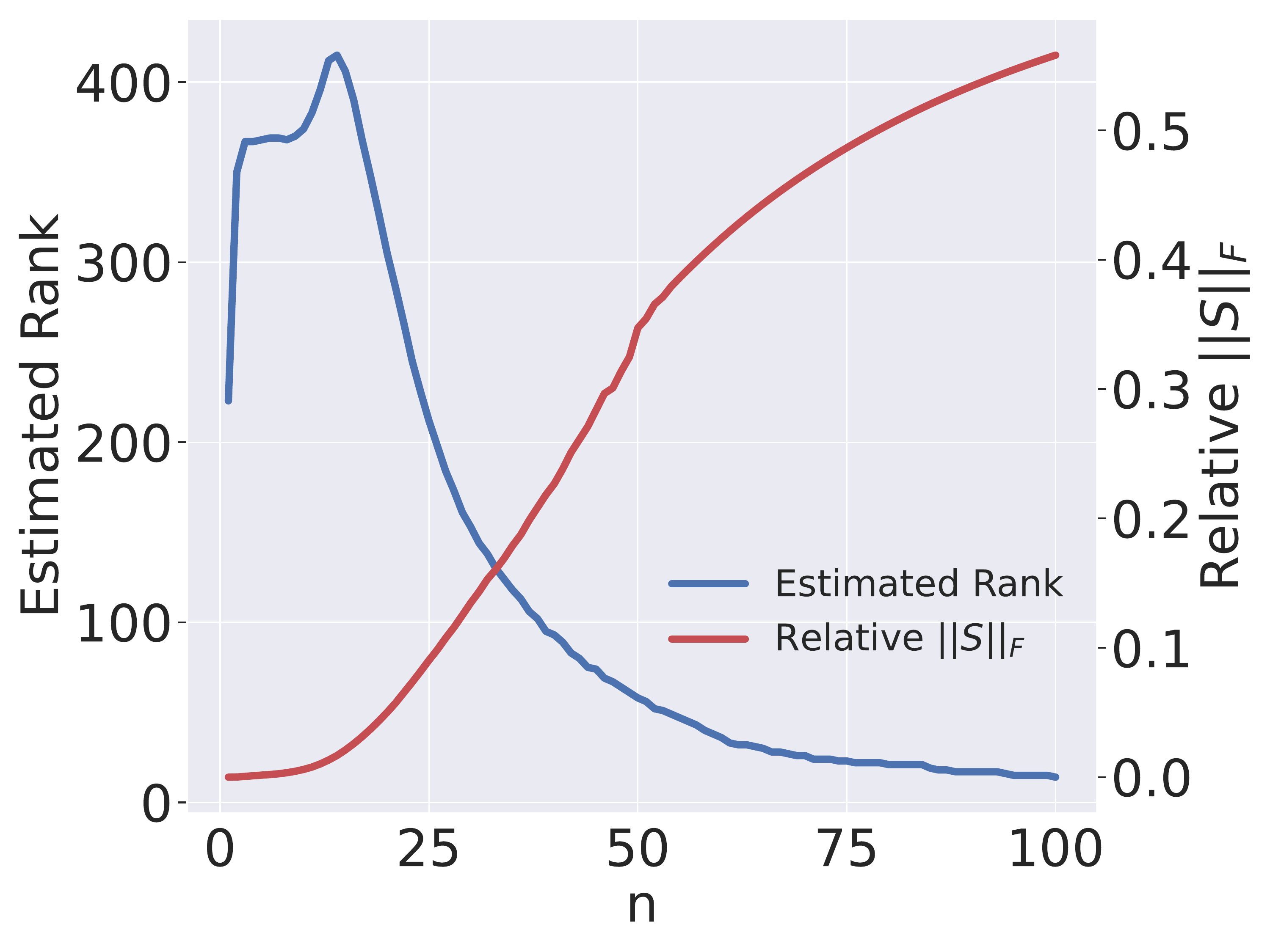}
    \caption{Rank Estimation with Sparsity}
    \label{fig:sparse_obj}
    \end{subfigure}
    \caption{
    \textbf{(a) Singular Value Distribution} of Jacobian matrix for each subnetwork of the mapping network in StyleGAN2. 
    \textbf{(b) Off-manifold Results} in $\wset$-space of StyleGAN2.
    \textbf{(c) Rank Estimation with Sparsity} under various $n = 1/\gamma$.
    }
    \vspace{-10pt}
\end{figure*}

\section{Latent Dimension Estimation}
In this section, we propose our local dimension estimation scheme for a \textit{learned} latent manifold in a pre-trained GAN model. We interpret the latent manifold $\mset=f(\zset)$ as a noisy manifold, and hence the subnetwork differential $df_{\zvar}$ as a noisy linear map. The local dimension at $\wvar \in \mset$ is estimated as the intrinsic rank of $df_{\zvar}$. Then, we evaluate the validity of this estimation scheme. In this section, our analysis of learned latent manifold is focused on the intermediate layers in the mapping network of StyleGAN2 trained on FFHQ (See Fig \ref{fig:architecture_stylegan} for architecture). However, our scheme can be applied to any $\zvar$-differentiable intermediate layers for an input latent noise $\zvar$.


\subsection{Method}
Throughout this work, we follow the notation presented in Sec \ref{sec:background_local_basis}. Consider a target latent manifold $\mset$ given by a subnetwork $f$, i.e., $\mset=f(\zset)$. Our goal is to estimate the intrinsic local dimension of $\mset$ around $\wvar = f(\zvar)$. 
Formally, the local dimension at $\wvar \in \mset$ is the dimension of Euclidean space $\sR^{k}$ whose open set is homeomorphic to the $\epsilon$-neighborhood of $\wvar$. However, the latent manifold exhibits some \textit{noise} in practice. 
The bijective condition of homeomorphism is too strict under the presence of noise.
Therefore, we define the local intrinsic dimension of learned latent space as the local dimension of \textit{denoised} latent space (Fig \ref{fig:concept}). In particular, we discover this intrinsic dimension by estimating the intrinsic rank of noisy subnetwork differential $df_{\zvar}$, i.e., the Jacobian $(\nabla_{\zvar}f)(\zvar)$.
The correspondence between the local dimension and rank of  $df_{\zvar}$ is described in Eq \ref{eq:approx_mfd} because the rank of a linear map is the same as the number of singular vectors with non-zero singular values. 

\vspace{-3pt}
\paragraph{Motivation}
Before presenting our dimension estimation method, we provide motivation for introducing the lower-dimensional approximation to $\mset$. Figure \ref{fig:sv_dist} shows the singular value distribution of Jacobian matrices evaluated for the subnetworks of the mapping network in StyleGAN2. \textit{Last layer $i$} denotes the subnetwork from the input noise space $\zset$ to the $i$-th fully connected layer. The distribution of singular values $\{\sigma_{i}^{\zvar}\}_{i}$ gets monotonically sparser as the subnetwork gets deeper. In particular, $\wset$-space, i.e., \textit{Last layer 8}, is extremely sparse as much as $\sigma_{150}^{\zvar} / \sigma_{1}^{\zvar} \approx 0.005$. Therefore, it is reasonable to prune the singular values with negligible magnitude and consider the lower-dimensional approximation of the learned latent manifold. 

\vspace{-5pt}
\paragraph{Local Dimension Estimation}
Our intrinsic rank estimation algorithm distinguishes the large meaningful components and the small noise-like components given the singular values $\{\sigma_{i}^{\zvar}\}_{i}$ of $(\nabla_{\zvar}f)(\zvar)$. 
The Pseudorank algorithm \citep{RankEstimate} estimates the number of meaningful components in a linear mixture model from noisy samples. We reinterpret this algorithm to find the intrinsic rank of noisy $(\nabla_{\zvar}f)(\zvar)$.
Assume the \textit{isotropic Gaussian noise} on the Jacobian $(\nabla_{\zvar}f)(\zvar) \in \sR^{d_{\mset} \times d_{\zset}}$:
\begin{equation} \label{eq:noisy_jacobian}
    (\nabla_{\zvar}f)(\zvar) = L(\zvar) + \sigma \cdot \left(\epsilon_{1}, \cdots, \epsilon_{d_{\mset}}\right)^{\intercal},
\end{equation}
with $\epsilon_{i} \sim \mathcal{N}(0, I_{d_{\zset}})$ 
where $L(\zvar)$ denotes the denoised low-rank representation of $(\nabla_{\zvar}f  )(\zvar)$. Then, taking the expectation over the noise distribution gives:
\begin{equation}
    \mathbb{E}_{\epsilon}\left[(\nabla_{\zvar}f)^{\intercal}(\zvar) \cdot (\nabla_{\zvar}f)(\zvar) \right] 
    = L^{\intercal}(\zvar) \cdot L(\zvar) + \sigma^{2} \cdot I_{d_{\zset}}.
\end{equation}
The eigenvalues of $\left[(\nabla_{\zvar}f)^{\intercal}(\zvar) \cdot (\nabla_{\zvar}f)(\zvar) \right]$ are the squares of sigular values $\{(\sigma_{i}^{\zvar})^{2}\}_{i}$, and the noise covariance term $\sigma^{2} \cdot I_{d_{\zset}}$ increases all eigenvalues by $\sigma^{2}$. This observation explains our intuition that large singular values correspond to signals and small ones correspond to noise. 
Therefore, determining the intrinsic rank of $(\nabla_{\zvar})f(\zvar)$ is closely related to the largest eigenvalue $\lambda_{1}$ of the empirical covariance matrix $S = \frac{1}{d_{\zset}} \sum_{i} \epsilon_{i} \cdot \epsilon_{i}^{\intercal}$, which is the threshold for distinguishing between signal and noise. 
In the random matrix theory, it is known that the distribution of this largest eigenvalue $\lambda_{1}$ for $n$-samples of $\mathcal{N}(0, I_{p})$ converges to a Tracy-Widom distribution $F_{\beta}$ of order $\beta=1$ for real-valued observations \citep{TracyWidom} (See the appendix for detail.):
\begin{multline}
    P\left(\lambda_{1} < \sigma^{2} \left( \mu_{n, p} + s\cdot \sigma_{n, p} \right) \right) \rightarrow F_{\beta}(s) \\
    \textrm{as $n, p \rightarrow \infty$ with $c=p/n$ fixed.}
\end{multline}
Using the above theoretical results, our algorithm applies a sequence of nested hypothesis tests. Given the Jacobian $(\nabla_{\zvar}f)(\zvar) \in \sR^{d_{\mset} \times d_{\zset}}$, let $p = \min(d_{\mset}, d_{\zset})$. Then, for $k=1, 2, \cdots, p-1$,
\begin{equation}
    \mathcal{H}_{0} : \textrm{rank at least } k  \,\,\, \text{vs.} \,\,\,
    \mathcal{H}_{1} : \textrm{rank at most } (k-1)  
\end{equation}
For each $k$, the hypothesis test consists of two parts. First, the noise level $\sigma$ of $(\nabla_{\zvar}f)(\zvar)$ in Eq \ref{eq:noisy_jacobian} should be estimated to perform a hypothesis test. 
We adopted the consistent noise estimate $\sigma_{est}(k)$ from \citet{RankEstimate}, which is derived by assuming that $\lambda_{k+1}, \lambda_{k+2} \cdots, \lambda_{p}$ are the noise components for $\lambda_{i} = (\sigma_{i}^{\zvar})^{2}$.
Second, we test whether $\lambda_{k}$ belongs to the noise components based on the corresponding Tracy-Widom distribution as follows:
\begin{equation} \label{eq:rank_test}
    \lambda_{k} \leq \sigma_{est}^{2}(k) \left( \mu_{n, p-k} + s(\alpha)\cdot \sigma_{n, p-k} \right),
\end{equation}
where $\alpha$ denotes a confidence level. We chose $\alpha=0.1$ in our experiments. The above test is repeated until Eq \ref{eq:rank_test} is satisfied.  Then, the estimated rank $K$ becomes $K=k-1$ because Eq \ref{eq:rank_test} means that $\lambda_{k}$ is not large enough to be judged as a signal component.

\begin{figure*}[t]
    \centering
    \begin{subfigure}[b]{0.3\textwidth}
    \centering
    \includegraphics[width=\columnwidth]{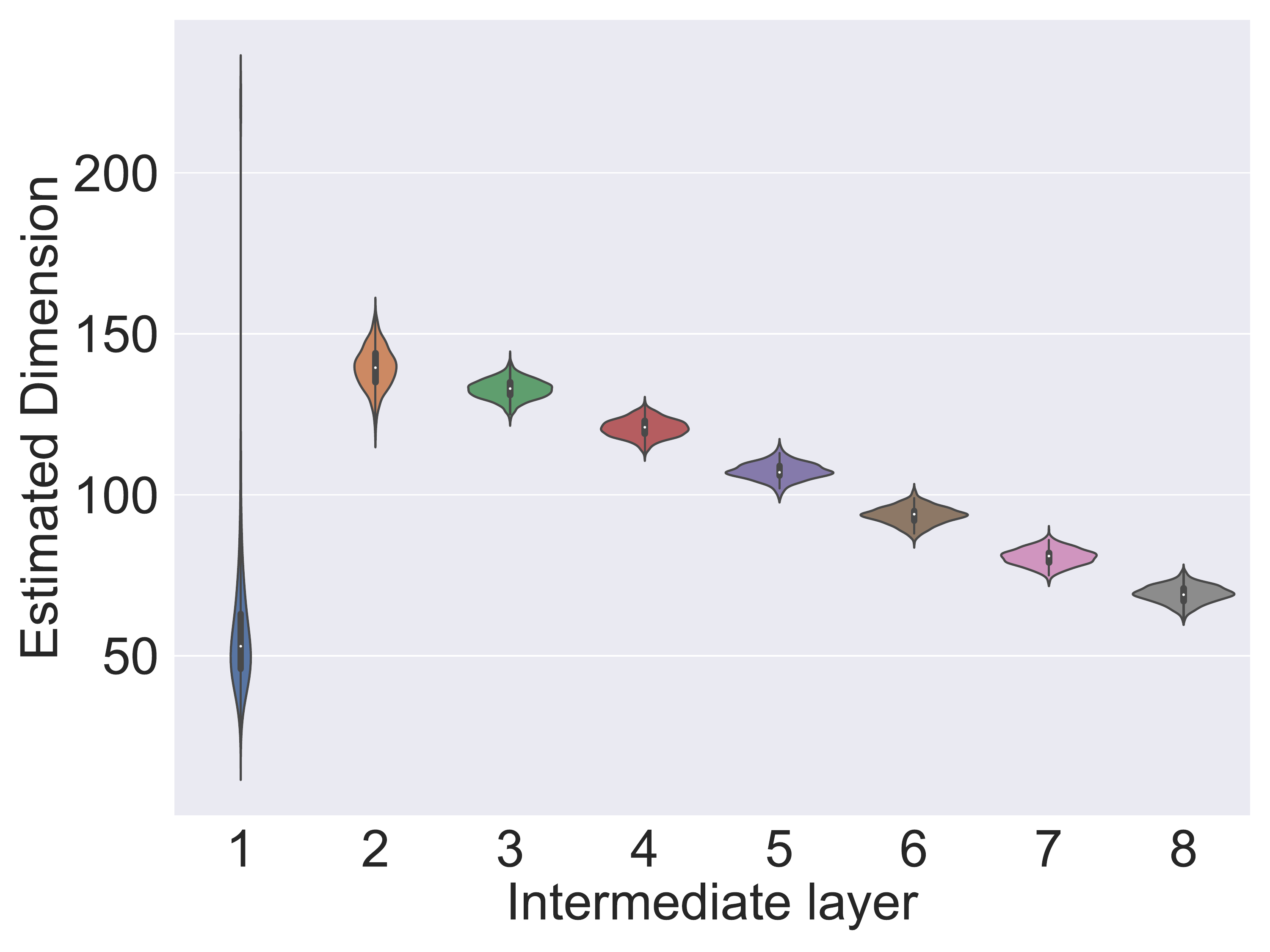}
    \caption{$\theta_{pre}=0.001$}
    \end{subfigure}
    \quad
    \begin{subfigure}[b]{0.3\textwidth}
    \centering
    \includegraphics[width=\columnwidth]{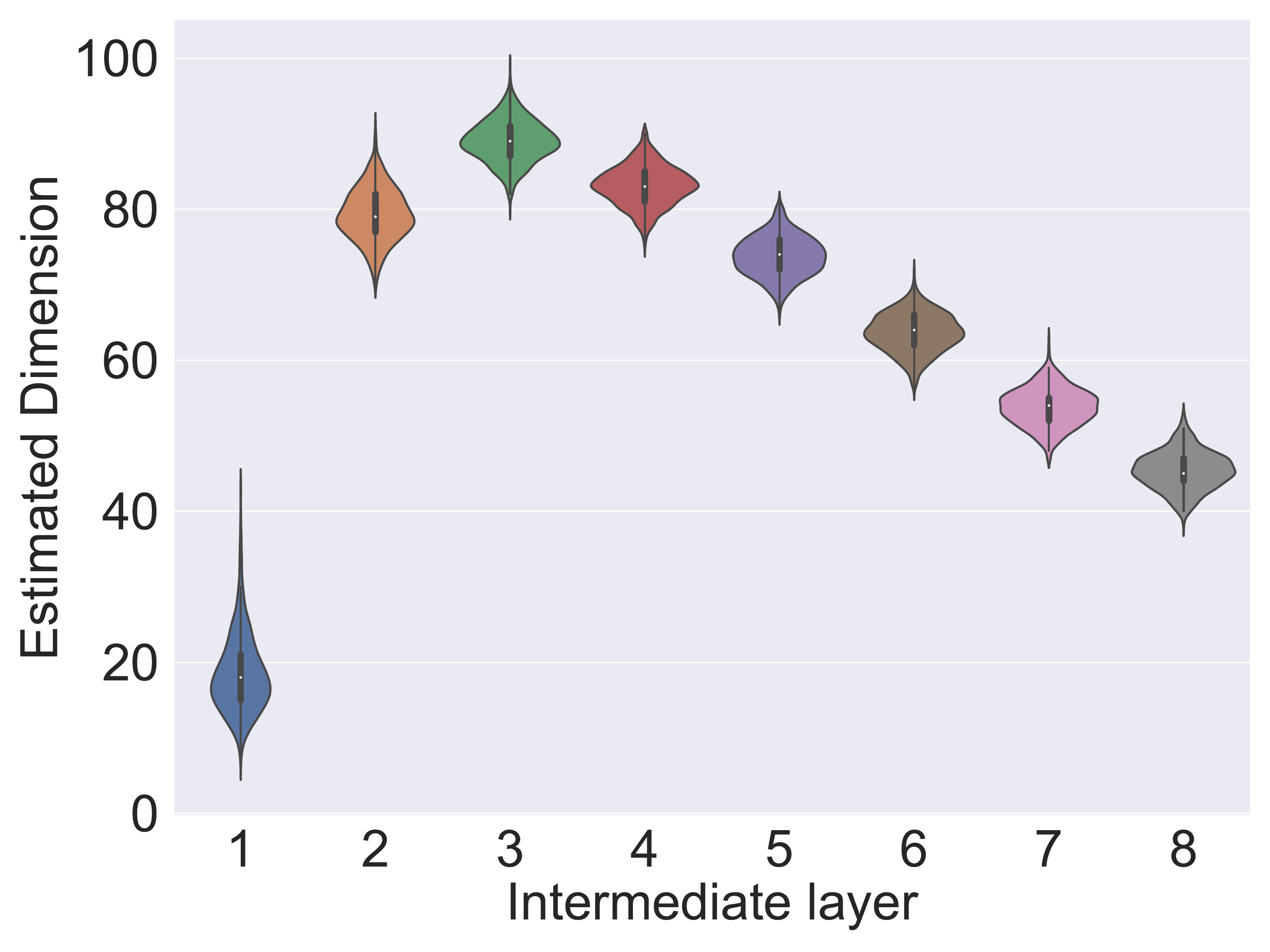}
    \caption{$\theta_{pre}=0.005$}
    \end{subfigure}
    \quad
    \begin{subfigure}[b]{0.3\textwidth}
    \centering
    \includegraphics[width=\columnwidth]{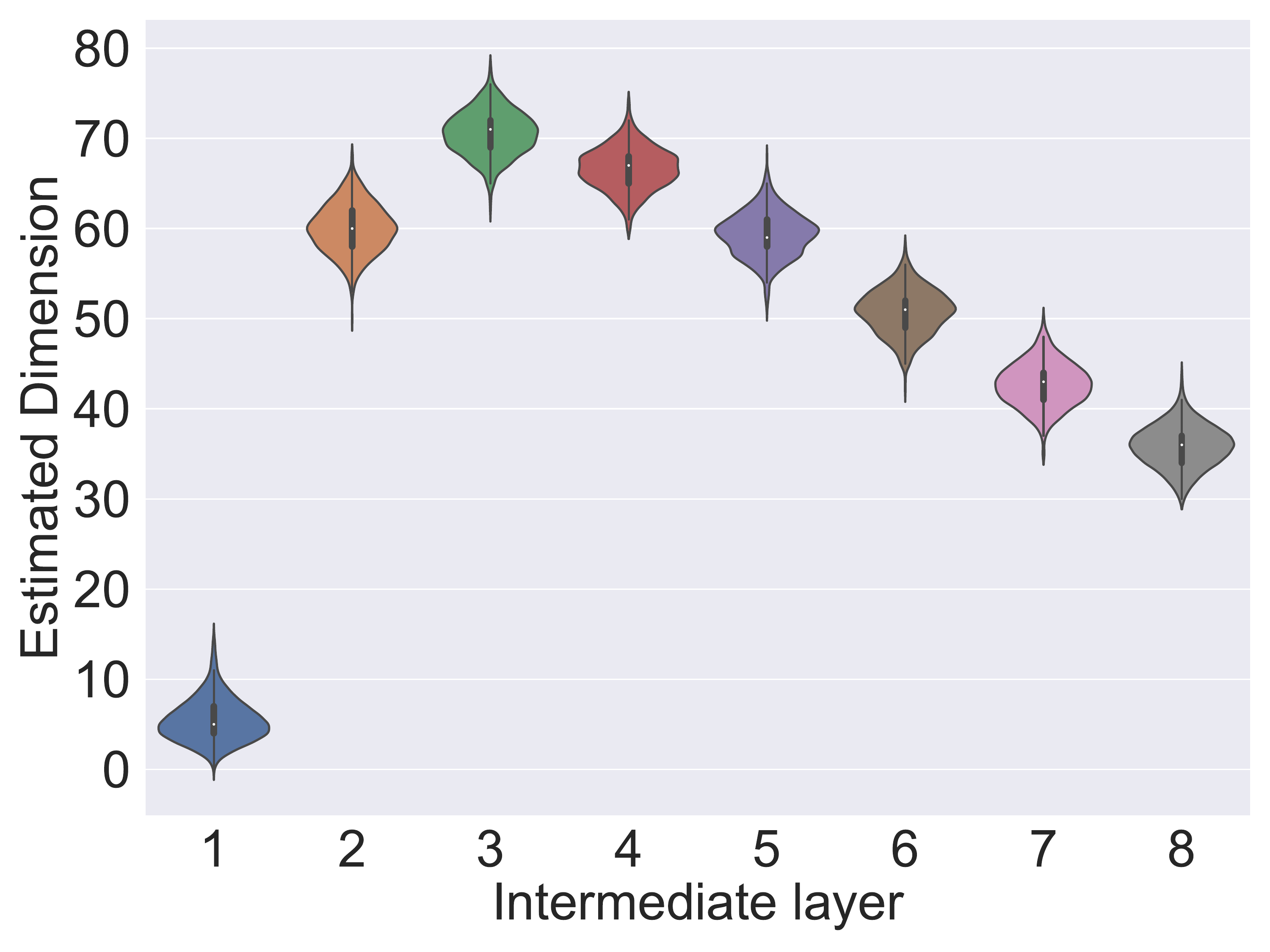}
    \caption{$\theta_{pre}=0.01$}
    \end{subfigure} 
    \caption{
    \textbf{Local Dimension Distribution} of the intermediate layers in the mapping network.
    }
    \label{fig:rank_stats}
    \vspace{-10pt}
\end{figure*}

\paragraph{Preprocessing}
This algorithm supposes the isotropic Gaussian noise on the Jacobian matrix. However, even considering the randomness of empirical covariance, the observed singular values of Jacobian matrix are \textit{too sparse} ($\sigma_{min} / \sigma_{max} \approx 10^{-9}$). Hence, the isotropic Gaussian assumption leads to the underestimation of the noise level, which causes the overestimation of intrinsic rank (In our experiments, estimated rank $> 200$ and $\sigma_{rank} \approx 0.003$). To address this problem, we introduce a simple preprocessing on the singular values of the Jacobian. Before applying the rank estimation algorithm, we filter out the singular values $\{\sigma_{i}^{\zvar}\}_{i}$ with $\{(\sigma_{i}^{\zvar})^{2} \leq \theta_{pre} \cdot \max_{i} (\sigma_{i}^{\zvar})^{2} \}$. We set $\theta_{pre} \in \{0.0005, 0.001, 0.005, 0.01\}$.

\paragraph{Validity of the estimated local dimension}
Our local dimension estimate is based on the Jacobian $\nabla_{\zvar}f$, where $f$ defines the latent manifold $\mset=f(\zset)$. Because $\nabla_{\zvar}f$ is the first order approximation of $f$, the estimated dimension might be improper for $\mset$.
Therefore, we designed and conducted the \textit{Off-manifold} experiment to assess the validity of the estimated local dimension. Intuitively, the intrinsic local dimension at $\wvar \in \mset$ is discovered by refining minor variations of the noisy latent manifold. \textit{The goal of Off-manifold experiment is to test whether the latent perturbation along the $k$-th coordinate axis stays inside $\mset$.} If the margin of $\mset$ at $\wvar$ in the $k$-th axis is large enough, then the $k$-th axis is needed to locally approximate $\mset$.
To be more specific, we solve the following optimization problem by Adam optimizer \citep{Adam} on MSE loss with a learning rate 0.005 for 1000 iterations for each $k$ with $\zvar_{0} = \zvar_{init}$:
\vspace{-3pt}
\begin{gather}
    \wvar_{init} = f(\zvar_{init}), \quad \wvar_{ptb} = \wvar_{init} + c \cdot \vvar^{\wvar_{init}}_{k}, \\
    \zvar_{opt} = \arg \min_{\zvar} \| \wvar_{ptb} - f(\zvar) \|^2.
\end{gather}
Here, $\vvar^{\wvar_{init}}_{k}$ denotes the $k$-th LB at $\wvar_{init}$. $\wvar_{init}$ is perturbed along $\vvar^{\wvar_{init}}_{k}$ because it is the tangent vector at $\wvar_{init}$ along $k$-th axis (Eq \ref{eq:jacobian_svd}).

We ran the Off-manifold experiments on $\wset$-space of StyleGAN2. Figure \ref{fig:off_mfd} shows the final objective $\| \wvar_{ptb} - f(\zvar_{opt}) \|^2$ after the optimization for each LB $\vvar^{\wvar_{init}}_{k}$ with $c=2$. The red vertical lines denote the estimated local dimension for each $\theta_{pre}$. (See the appendix for the Off-manifold results with various $c = \|\wvar_{ptb}-\wvar_{init}\|$.) 
The monotonous increase in the final loss shows that $f(\zvar_{opt})$ cannot approach close to $\wvar_{ptb}$. 
In other words, the margin in the $k$-th LB direction decreases as the index $k$ increases.
Although there is a dependency on the preprocessing threshold, the rank estimation algorithm chooses the principal part of local manifold around $\wvar_{init}$ without overestimates as desired. Particularly, the estimated rank with $\theta_{pre}=0.005$ appears to find a transition point of the final loss.

\begin{figure*}[t]
    \centering
    \begin{tabular}{cc}
        \begin{subfigure}{0.38\textwidth}
            \begin{center}
                \includegraphics[width=\columnwidth]{figure/traversal_2strip/366745668_Local_Basis_2dim_x0_y1_ptb5.0_sublayer8.pdf}
                \caption{Image Traversal along Axis 0 and 1 }
                \label{fig:rank_traveral_1}
            \end{center}
        \end{subfigure} & \\
            \begin{subfigure}{0.38\textwidth}
                \begin{center}
                    \includegraphics[width=\columnwidth]{figure/traversal_2strip/366745668_Local_Basis_svth0.0005_2dim_x78_y79_ptb5.0_sublayer8.pdf}
                    \caption{Image Traversal along Axis $(d-1)$ and $d$}
                    \label{fig:rank_traveral_2}
                \end{center}
            \end{subfigure}
            &  \quad
                \multirow{2}[2]{.35\textwidth}{ 
                \begin{subfigure}{.35\textwidth}
                    \vspace{-165pt}
                    \begin{center}
                      \includegraphics[width=\columnwidth]{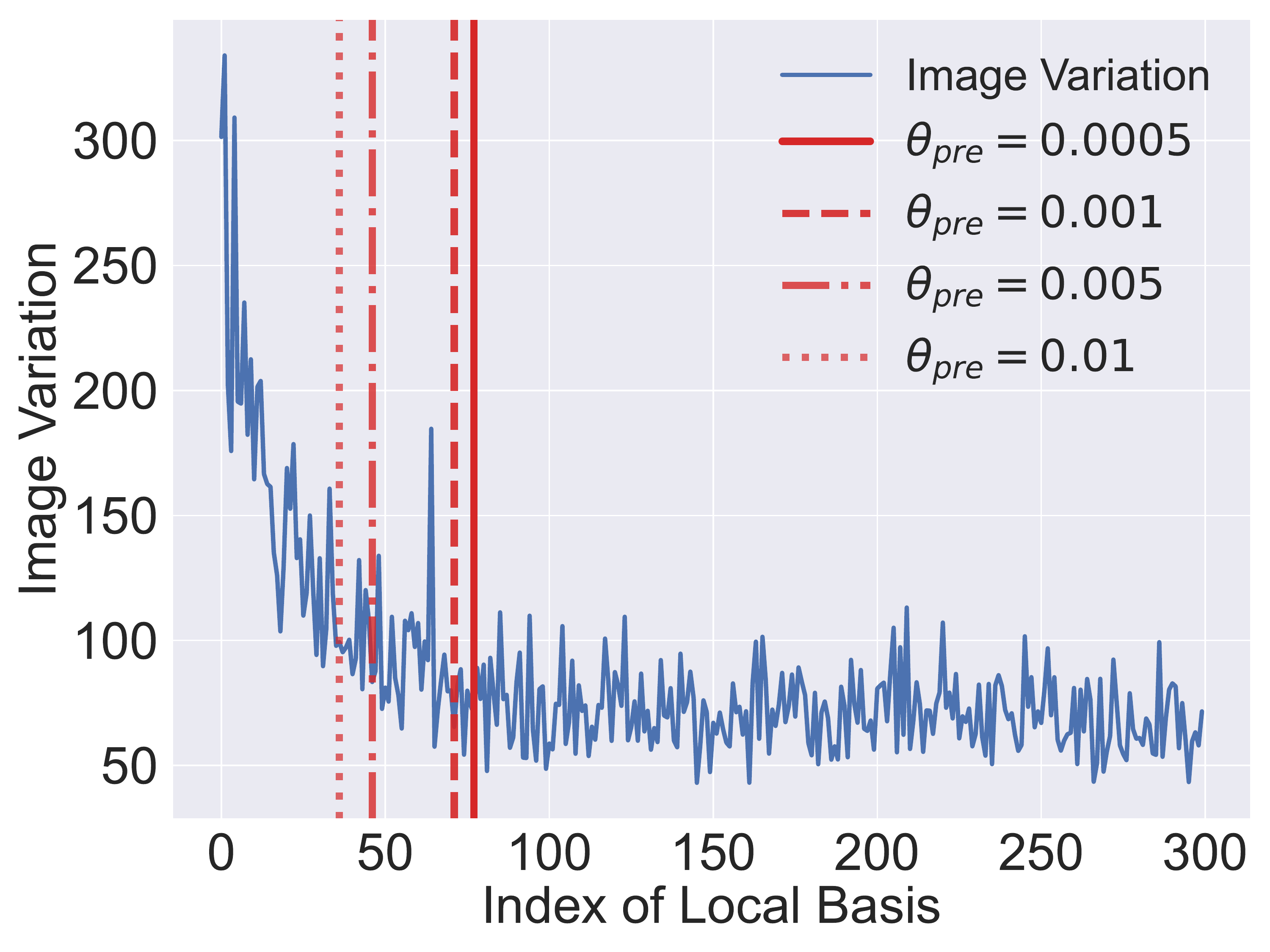}
                        \vspace{-12pt}
                        \caption{Image Variation along Singular Vectors}
                        \vspace{16pt}
                        \label{fig:rank_image_var}
                    \end{center}
                \end{subfigure}} \\
        \end{tabular}
    \vspace{-6pt}
    \caption{
    \textbf{Local Dimension Evaluation in Image Space} where $d$ denotes the estimated local dimension with $\theta_{pre}=0.01$. Fig \ref{fig:rank_image_var} shows the image variation intensity $\| \nabla_{\vvar_{i}^{\wvar}}g(\wvar) \|_{F}$ along each LB $\vvar_{i}$.
    }
    \label{fig:rank_image}
    \vspace{-10pt}
\end{figure*}

\subsection{Comparison to Previous Rank Estimation}
\paragraph{Sparsity Constraint}
LowRankGAN \citep{LowRankSubspace} introduced a convex optimization problem called Principal Component Pursuit (PCP) \citep{PCP} to find a low-rank factorization of Jacobian $(\nabla_{\zvar}f)(\zvar)$ (Eq \ref{eq:pcp}):
\begin{multline} \label{eq:pcp}
    \textrm{min}_{L, S} \quad \| L \|_{*} + \gamma \cdot \|S\|_{1} \\
    \textrm{s.t.} \,\, L+S = (\nabla_{\zvar}f)^{\intercal}(\zvar) \cdot (\nabla_{\zvar}f)(\zvar). 
\end{multline} 
where $\|L\|_{*} = \sum_{i} \sigma_{i}(L)$ is the nuclear norm, i.e. the sum of all singular values, $\| S \|_{1} = \sum_{i,j} |S_{i,j}|$, and $\gamma >0$ is a positive regularization parameter. 
PCP encourages the sparsity on corruption $S$ through $\ell_{1}$ regularizer.

However, we believe that this sparsity assumption is not adequate for finding the intrinsic rank of Jacobian. 
To test the validity of the sparsity assumption, we monitored how the low-rank representation $L$ changes as we vary the regularization parameter $n=1/\gamma$ as in \citep{LowRankSubspace} (Fig \ref{fig:sparse_obj}). 
The estimated rank decreases unceasingly without saturation as we increase $n$, i.e., refine the Jacobian stronger. 
We consider that the rank saturation should occur if this assumption is adequate for finding an \textit{intrinsic} rank because it implies regularization robustness.
But the low-rank factorization through PCP does not show any saturation until the Frobenius norm of corruption $\|S\|_{F}$ reaches over 50$\%$ of the initial matrix $ \| (\nabla_{\zvar}f)^{\intercal}(\zvar) \cdot (\nabla_{\zvar}f)(\zvar)\|_{F}$.

\vspace{-5pt}
\paragraph{Interpretation as Frobenious Norm}
Our algorithm can be interpreted as a Nuclear-Norm Penalization (NNP) problem (Eq \ref{eq:nnp}) for matrix denoising \citep{DenoisingFrob}.
This NNP framework is similar to PCP in LowRankGAN except for the regularization $\| E\|_{F}$. While PCP requires an iterative optimization of Alternating Directions Method of Multipliers (ADMM) \citep{ADMM_1, ADMM_2}, NNP provides an explicit closed-form solution $L^{*}$ through SVD, i.e., $(\nabla_{\zvar}f)(\zvar) = U \Sigma V^{\intercal}$:
\begin{align} \label{eq:nnp}
    &\textrm{min}_{L, S} \,\, \| L \|_{*} + \gamma \|E\|_{F} \quad \textrm{s.t.} \,\, L+E = (\nabla_{\zvar}f)(\zvar). \\
    &\Rightarrow L^{*} = U \left(\Sigma - \frac{1}{2\gamma}\cdot I \right)_{+} V^{\intercal} 
\end{align}
for $(M_{+})_{i,j} = \max\left(M_{i,j}\,, 0\right)$.
Therefore, the intrinsic rank by NNP is determined by a threshold $1/(2\gamma)$ for the singular values $\{\sigma_{i}^{\zvar}\}_{i}$ of Jacobian. Our algorithm can be interpreted as selecting this threshold via a series of hypothesis tests.

\subsection{Latent Space Analysis of StyleGAN}
We analyzed the intermediate layers of the mapping network in StyleGAN2 trained on FFHQ using our local dimension estimation (See Fig \ref{fig:architecture_stylegan} for StyleGAN architectures). First, Figure \ref{fig:rank_stats} shows the distribution of estimated local dimensions for 1k samples of each intermediate layer for each $\theta_{pre}$ (See the appendix for the rank statistics under all $\theta_{pre}$).
Note that the algorithm provides an unstable rank estimate on the most unsparse $1$st layer (Fig \ref{fig:sv_dist}) under the small $\theta_{pre} \in \{0.0005, 0.001 \}$. However, this phenomenon was not observed in the other layers. Hence, we focused our analysis on the layers with reasonable depth, i.e., from $3$ to $8$. Even though changing $\theta_{pre}$ results in an overall shift of the estimation, the trend and relative ordering between layers are the same. In accordance with Fig \ref{fig:sv_dist}, the intrinsic dimension monotonically decrease as the layer goes deeper. 

Second, we evaluated the estimated rank on image space (Fig \ref{fig:rank_image}). Figure \ref{fig:rank_traveral_1} and \ref{fig:rank_traveral_2} show the image traversal along the first two axes and the two axes $(d-1, d)$ around the estimated rank $d$ with $\theta_{pre}=0.01$ 
(See Fig \ref{fig:1dim_comparison_full} for comparison between axis $i < d$ and $i \gg d$.)
Fig \ref{fig:rank_image_var} presents the size of the directional derivative $\| \nabla_{\vvar_{i}^{\wvar}}g(\wvar) \|_{F}$ along the $i$-th LB $\vvar_{i}^{\wvar}$ at $\wvar$, estimated by the finite difference scheme.
The result shows that the estimated rank covers the major variations in the image space.
\citet{LocalBasis} proposed LB as the unsupervised local basis for the latent space in GANs, but did not provide an estimate on the number of meaningful perturbations.
One advantage of the unsupervised method over the supervised one for finding disentangled perturbation is that the discovered semantic is not restricted to pre-defined attributes. However, we cannot know the number of meaningful perturbations without additional inspections. Figure \ref{fig:rank_image} shows that the estimated dimension provides an upper bound on the number of these perturbations.


\begin{figure*}[t]
    \centering
    \begin{subfigure}[b]{0.3\textwidth}
    \centering
    \includegraphics[width=\columnwidth]{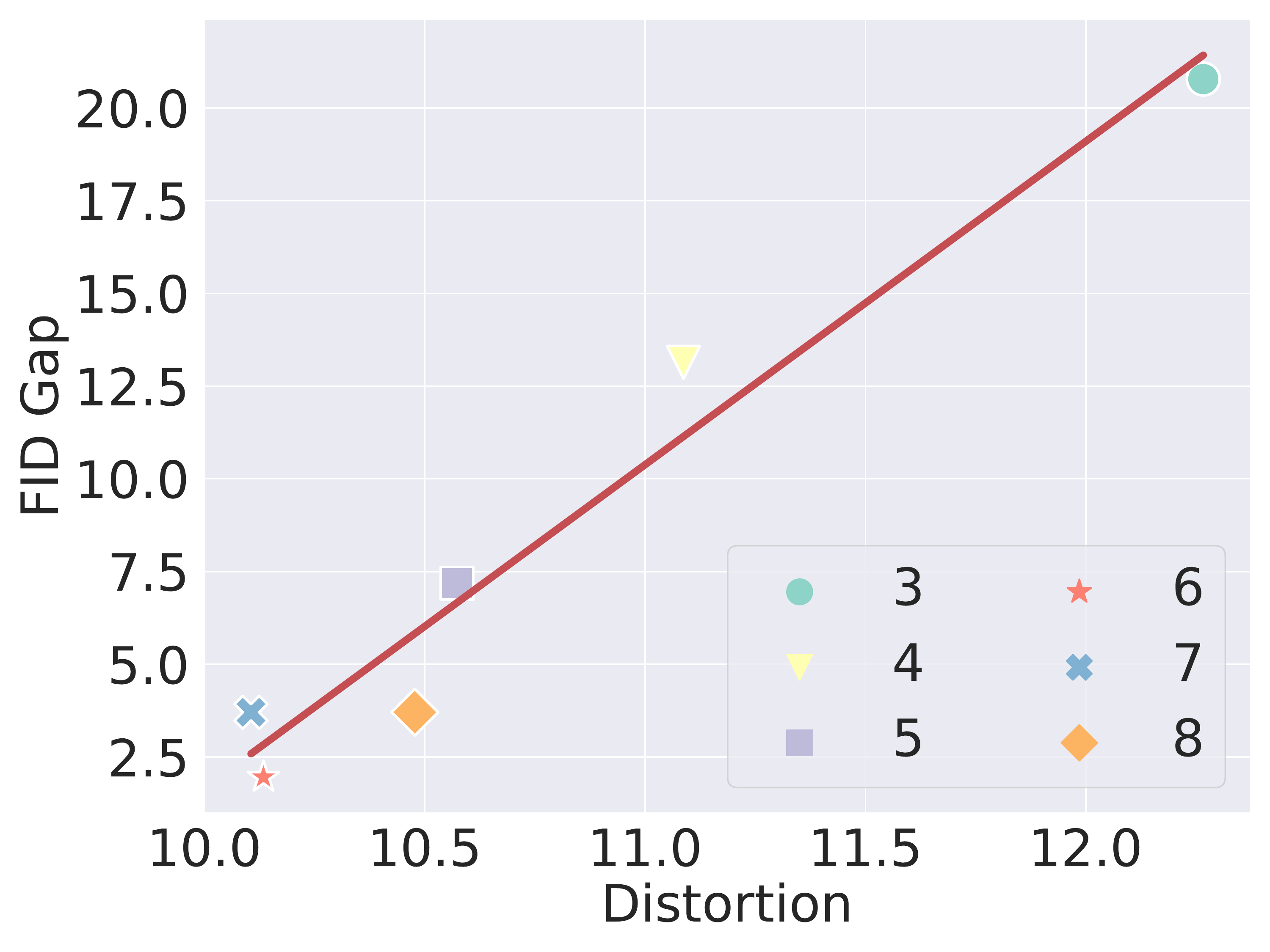}
    \caption{StyleGAN2-Cat - $\rho=0.98$}
    \end{subfigure}
    \quad
    \begin{subfigure}[b]{0.3\textwidth}
    \centering
    \includegraphics[width=\columnwidth]{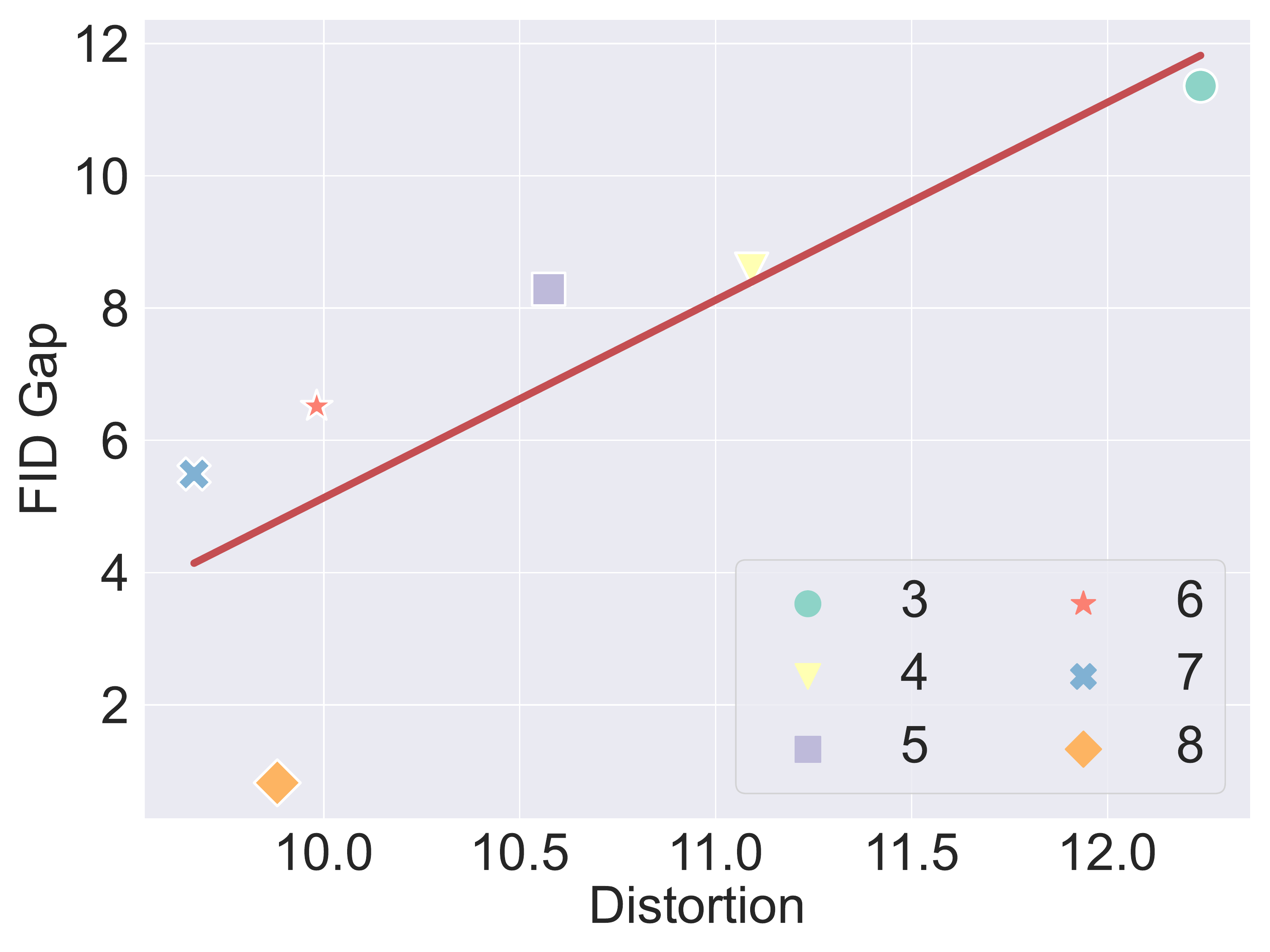}
    \caption{StyleGAN2-e - $\rho=0.81$}
    \end{subfigure}
    \quad
    \begin{subfigure}[b]{0.3\textwidth}
    \centering
    \includegraphics[width=\columnwidth]{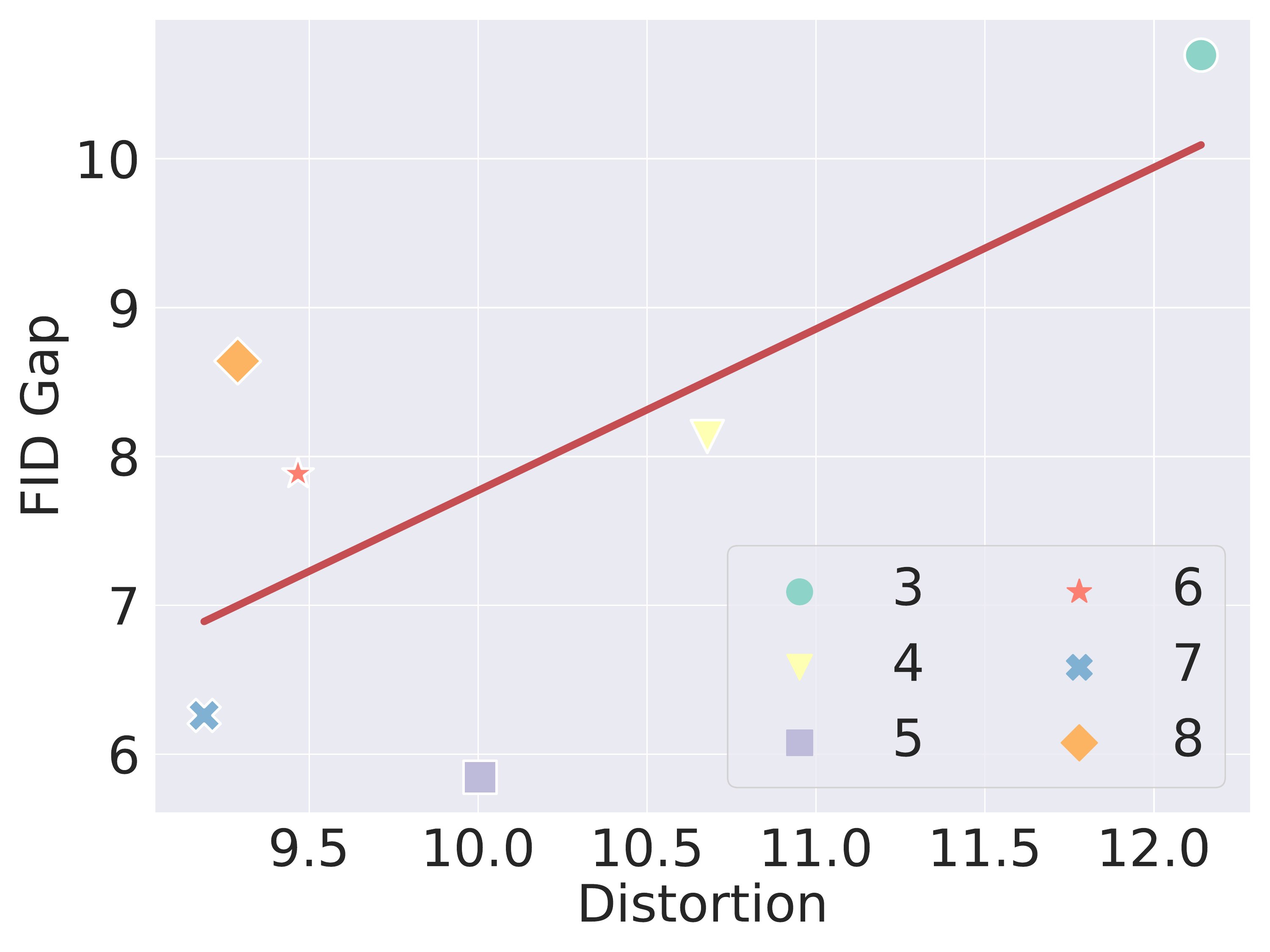}
    \caption{StyleGAN2 - $\rho=0.70$}
    \end{subfigure}
    \caption{
    \textbf{Correlation between Distortion metric ($\downarrow$) and FID gap ($\downarrow$)} when $\theta_{pre}=0.005$. 
    FID gap represents the difference between FID score of LB and GANSpace.
    Each point represents a $i$-th intermediate layer in the mapping network.
    }
    \label{fig:Dist_fid}
\end{figure*}

\begin{figure*}[t]
    \centering
    \begin{subfigure}[b]{0.3\textwidth}
    \centering
    \includegraphics[width=\columnwidth]{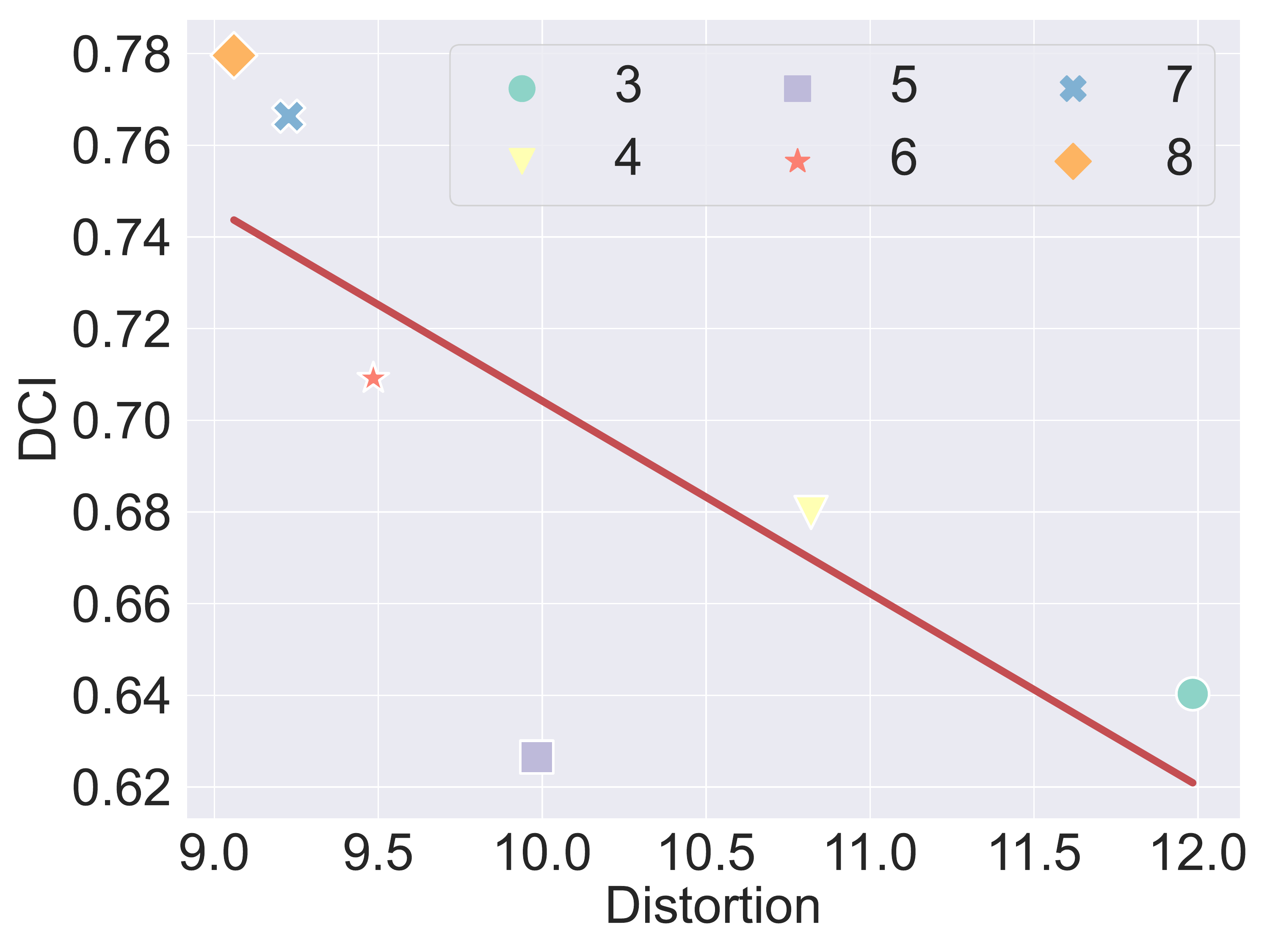}
    \caption{StyleGAN1 - $\rho=-0.74$}
    \end{subfigure}
    \quad
    \begin{subfigure}[b]{0.3\textwidth}
    \centering
    \includegraphics[width=\columnwidth]{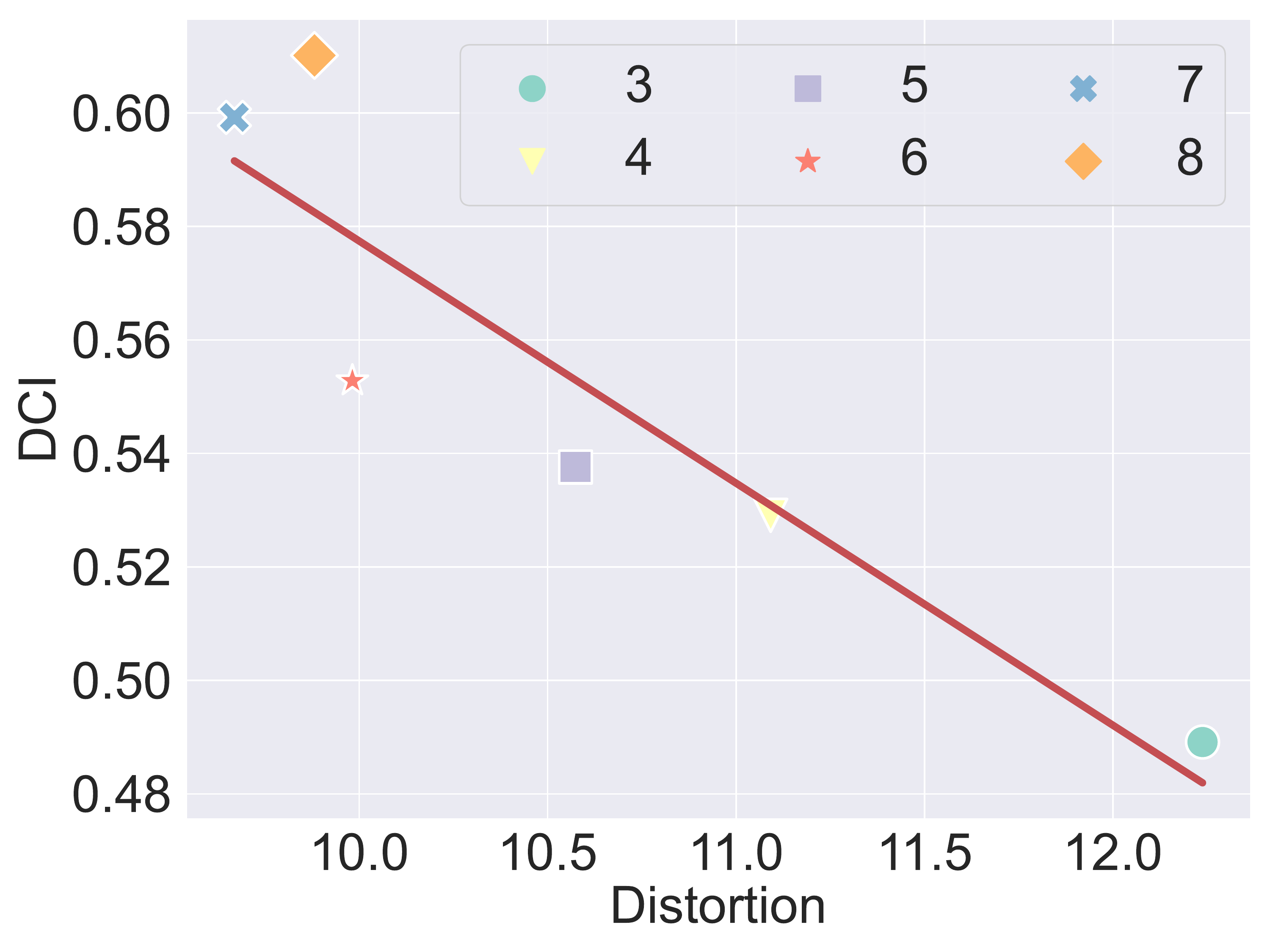}
    \caption{StyleGAN2-e - $\rho=-0.91$}
    \end{subfigure}
    \quad
    \begin{subfigure}[b]{0.3\textwidth}
    \centering
    \includegraphics[width=\columnwidth]{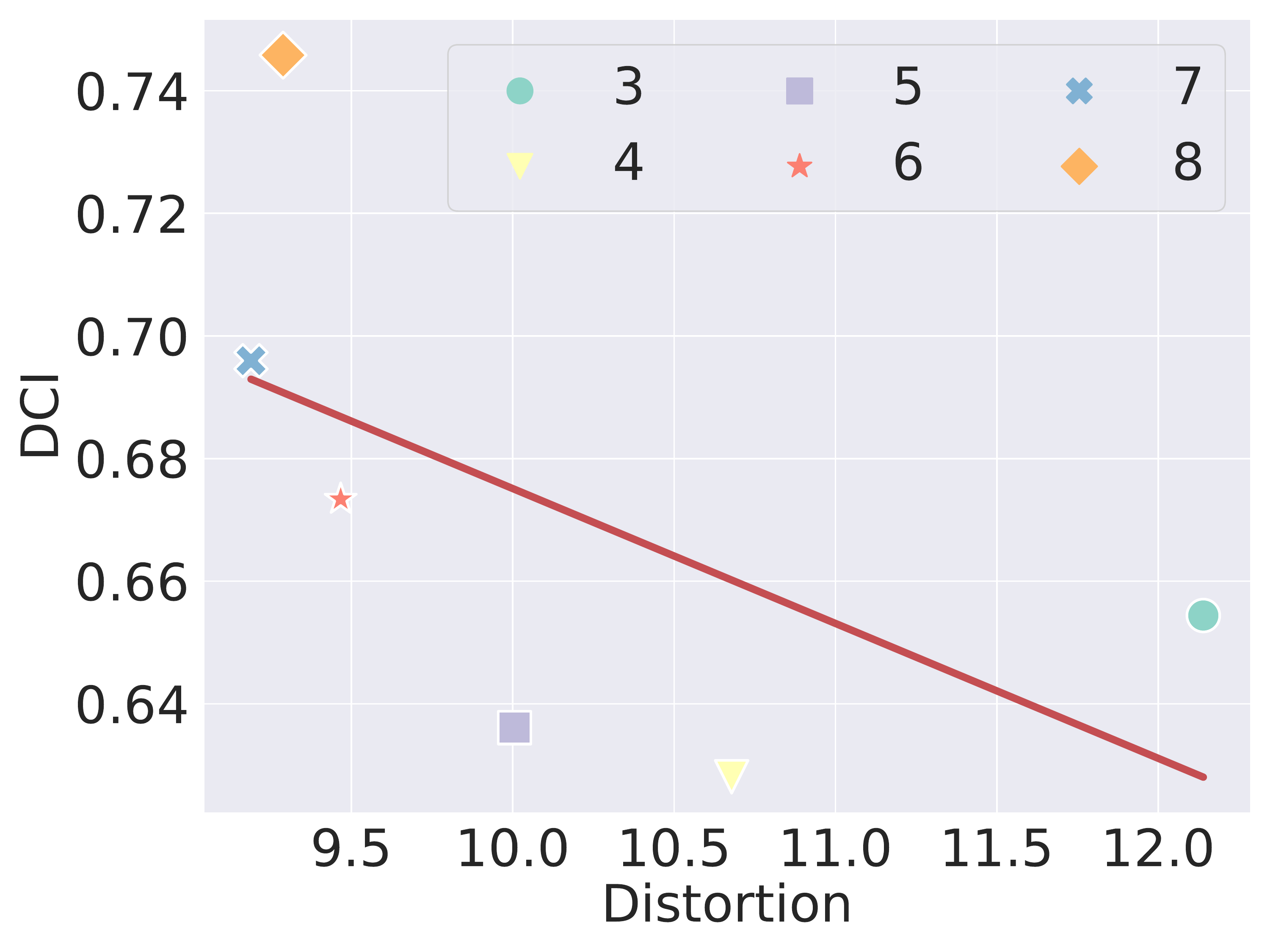}
    \caption{StyleGAN2 - $\rho=-0.57$}
    \end{subfigure} 
    \caption{
    \textbf{Correlation between Distortion metric ($\downarrow$) and DCI ($\uparrow$)} when $\theta_{pre}=0.005$. DCI \citep{DCIMetric} is a supervised disentanglement metric that requires attribute annotations. 
    }
    \label{fig:Dist_dci}
    \vspace{-10pt}
\end{figure*}

\section{Unsupervised Global Disentanglement Evaluation} \label{sec:distortion}
In this section, we investigate two closely related important questions on the disentanglement property of a GAN. 
\begin{enumerate}[topsep=-1pt, itemsep=-1pt]
    \item[Q1.] Can we evaluate the global-basis-compatibility of the latent space without posterior assessment? \citep{LocalBasis} 
    \item[Q2.] Can we evaluate the disentanglement without attribute annotations? \citep{ChallengingDisent}
\end{enumerate}
Here, the global-basis-compatibility represents the representability upper-bound of latent space that the optimal global basis can achieve. If a latent space has a low global-basis-compatibility, all global basis can attain constrained success on it.

These two questions are closely related because the ideal disentanglement includes a global basis representation where each element corresponds to the attribute-coordinate.
\textbf{In this paper, the global disentanglement property of a latent space is defined as this global representability along the attribute-coordinate.} To answer these questions, we propose an unsupervised global disentanglement metric, called \textit{\textbf{Distortion}}. 
We evaluate the global-basis-compatibility by the image fidelity (Q1) and the disentanglement by semantic factorization (Q2). 
Our experimental results show that our proposed metric achieves a high correlation with the global-basis-compatibility (Q1) and the supervised disentanglement score (Q2) on various StyleGANs. (See Fig \ref{fig:Dist_robust} in appendix for robustness of Distortion to $\theta_{pre}$.)

\paragraph{Distortion Score}

Intuitively, our global disentanglement score assesses the inconsistency of \textit{intrinsic} tangent space for each latent manifold. The framework of analyzing the semantic property of a latent space via its tangent space was first introduced in \citet{LocalBasis}. This framework was inspired by the observation that each basis vector (LB) of a tangent space corresponds to a local semantic perturbation. In this work, we develop this idea and propose a \textit{layer-wise score} for global disentanglement property.
As in \citet{LocalBasis}, we employ the Grassmannian \citep{boothby} metric to measure a distance between two tangent spaces. 
We chose the Geodesic Metric \citep{grassgeo} instead of the Projection Metric \citep{grassproj} because of its better discriminability (See Sec \ref{sec:why_geodesic_metric} for detail). Also, we revised the Geodesic Metric to be dimension-normalized because the local dimension changes according to its estimated region.

For two $k$-dimensional subspaces $W, W^{\prime}$ of $\sR^{n}$, let $M_W, M_{W^{\prime}}\in \sR^{n\times k}$ be the column-wise concatenation of orthonormal basis for $W, W^{\prime}$, respectively. Then, the dimension-normalized Geodesic Metric is defined as $d^{k}_{\mathrm{geo}}(W,W^{\prime})=\left(\frac{1}{k}\sum^{k}_{i=1}\theta^2_i\right)^{1/2}$ 
where $\theta_i =\cos^{-1}(\sigma_{i}(M_{W}^{\top} \, M_{W^{\prime}} ))$ denotes the $i$-th principal angle between $W$ and $W^{\prime}$ for $i$-th singular value $\sigma_{i}$. Then, Distortion score $\dset_{\mset}$ for the latent manifold $\mset$ is evaluated as follows:
\begin{enumerate}[topsep=-1pt, itemsep=-4pt]
    \item To assess the overall inconsistency of $\mset$, measure the Grassmannian distance between two intrinsic tangent spaces $T_{\wvar_{i}} \mset^{k_{i}}_{\wvar_{i}}$ (Eq \ref{eq:approx_mfd}) at two \textbf{random} $\wvar \in \mset$. For $k = \min(k_{1}, k_{2})$,
    \vspace{-4pt}
    \begin{equation*}
        I_{rand} = \mathbb{E}_{\substack{\zvar_{i} \sim p(\zvar),\\ \wvar_{i} = f(\zvar_{i})}}
        \left[d^{k}_{\mathrm{geo}}\left( \,T_{\wvar_{1}} \mset^{k}_{\wvar_{1}},T_{\wvar_{2}} \mset^{k}_{\wvar_{2}} \right)\right].
    \end{equation*}
    \item To normalize the overall inconsistency, measure the same Grassmannian distance between two \textbf{close} $\wvar \in \mset$ for $\epsilon = 0.1$
    \vspace{-4pt}
    \begin{equation*}
    I_{local} = \mathbb{E}_{\substack{\zvar_{1} \sim p(\zvar),\\ |\zvar_{2} - \zvar_{1}|=\epsilon}}
    \left[ d^{k}_{\mathrm{geo}}\left( \,T_{\wvar_{1}} \mset^{k}_{\wvar_{1}},T_{\wvar_{2}} \mset^{k}_{\wvar_{2}} \right) \right].
    \end{equation*}
    \item \textit{\textbf{Distortion}} of $\mset$ is defined as the relative inconsistency $\mathcal{D}_{\mset}=I_{rand} / I_{local}$.
\end{enumerate}

\paragraph{Distortion and Global Disentanglement}
In this paragraph, we clarify \textit{why the globally disentangled latent space shows a low Distortion score.}
Assume a latent space $\mset$ is globally disentangled. 
Then, there exists an optimal global basis of $\mset$, where each basis vector corresponds to an image attribute on the entire $\mset$. By definition, this optimal global basis is the local basis at all latent variables. Assuming that LB finds the local basis \cite{LocalBasis}, each global basis vector would correspond to one LB vector at each latent variable.
In this regard, our local dimension estimation finds a principal subset of LB, which includes these corresponding basis vectors. 
In conclusion, if the latent space is globally disentangled, this principal set of LB at each latent variable would contain the common global basis vectors. Hence, the intrinsic tangent spaces would contain the common subspace generated by this common basis, which leads to a small Grassmannian metric between them. Therefore, the global disentanglement of the latent space leads to a low Distortion score. 


\begin{figure*}[t]
    \centering
    \begin{subfigure}[b]{0.27\textwidth}
    \centering
    \includegraphics[width=\columnwidth]{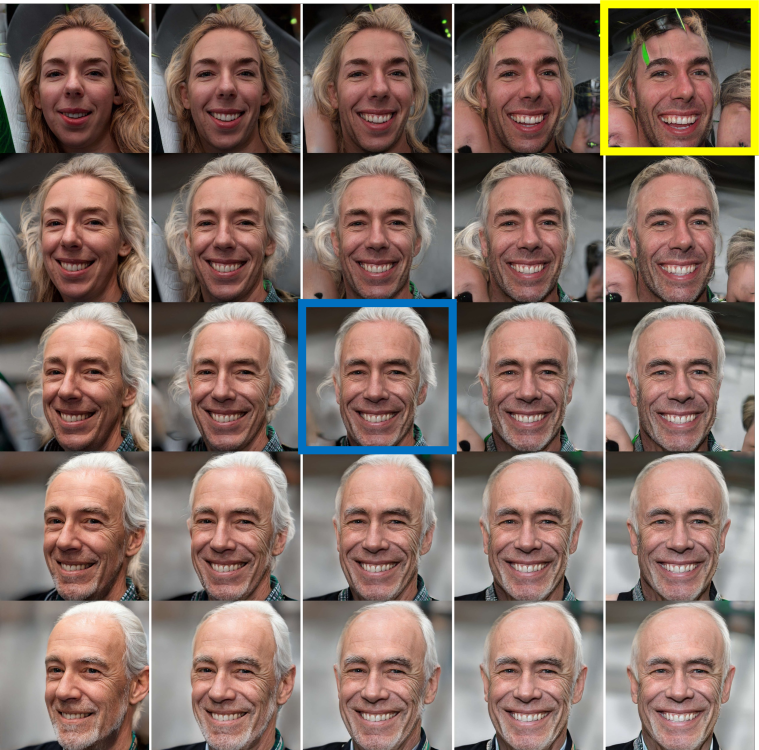}
    \caption{Max-distorted layer 3}
    \end{subfigure}
    \quad
    \begin{subfigure}[b]{0.27\textwidth}
    \centering
    \includegraphics[width=\columnwidth]{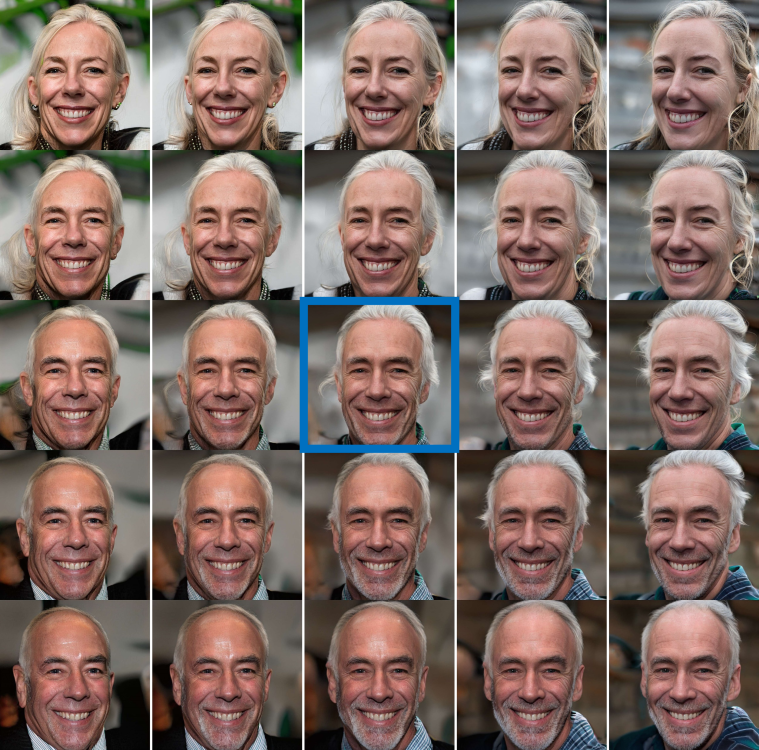}
    \caption{Min-distorted layer 7}
    \end{subfigure}
    \quad
    \begin{subfigure}[b]{0.27\textwidth}
    \centering
    \includegraphics[width=\columnwidth]{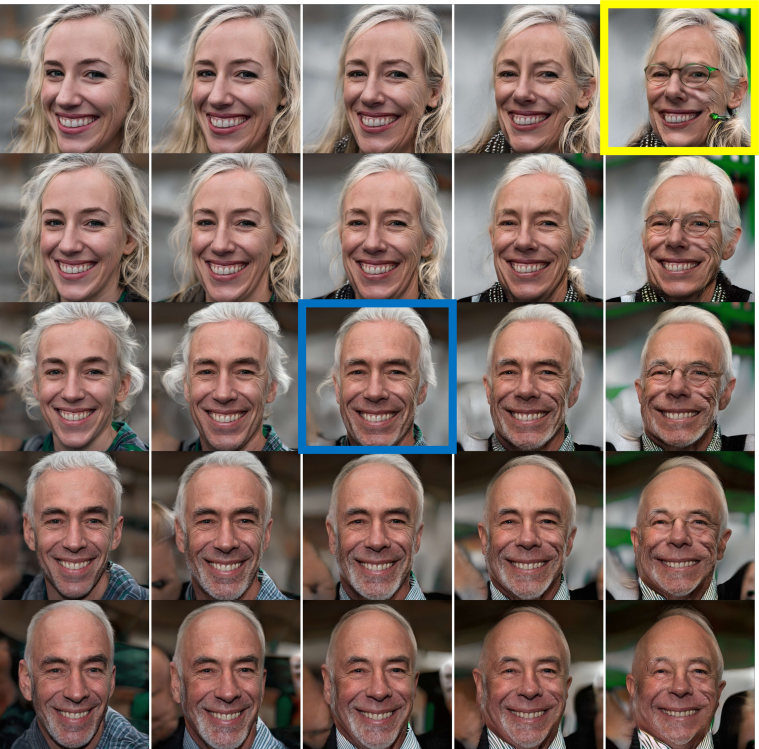}
    \caption{Layer 8 ($\wset$-space)}
    \end{subfigure}
    \caption{
    \textbf{Subspace Traversal on the intermediate layers} along the global basis. The upper-right corner of max-distorted layer 3 and layer 8 show visual artifacts. However, the min-distorted layer 7 does not show such a failure. The initial image (center) is traversed along the $1$st (horizontal) and $2$nd (vertical) components of GANSpace with a perturbation intensity 4.
    }
    \label{fig:subspace_traversal}
    \vspace{-10pt}
\end{figure*}

\vspace{-5pt}
\paragraph{Global-Basis-Compatibility}
We tested whether Distortion $\mathcal{D}_{\mset}$ is meaningful in estimating the global-basis-compatibility. We chose GANSpace \citep{harkonen2020ganspace} as a reference global basis because of its broad applicability. 
\textit{As a measure of global-basis-compatibility, we adopted FID \citep{heusel2017gans} gap between LB and GANSpace.} Here, we interpret FID score under LB traversal as the optimal image fidelity that the latent space can achieve because LB is the basis of tangent space.
Thus, FID gap between LB and GANSpace represents the difference between the optimal image fidelity and the fidelity achieved by a global basis. Therefore, we selected this FID gap for assessing the global-basis-compatibility.
FID is measured for 50k samples of perturbed images along the 1st component of LB and GANSpace, respectively. 
Distortion metric is tested on StyleGAN2 on LSUN Cat, StyleGAN2 with configs E and F \citep{karras2020analyzing} on FFHQ to test the generalizability of correlation to the global-basis-compatibility. StyleGAN2 in Fig \ref{fig:Dist_fid} denotes StyleGAN2 with config F because config F is the usual StyleGAN2 model. The perturbation intensity is set to 5 in LSUN Cat and 3 in FFHQ. Distortion metric shows a strong positive correlation of 0.98, 0.81 and 0.70 to FID gap in Fig \ref{fig:Dist_fid}.
(See Sec \ref{sec:corr_robust} for correlations on other $\theta_{pre}$.)
This result demonstrates that Distortion metric can be an unsupervised criterion for selecting a latent space with high global-basis-compatibility.
This suggests that, before finding a global basis, we can use Distortion metric as a prior investigation for selecting a proper target latent space.

\vspace{-5pt}
\paragraph{Disentanglement Score}
We assessed a correlation between the unsupervised Distortion metric and a supervised disentanglement score. Following \citet{wu2020stylespace}, we adopted DCI score \citep{DCIMetric} as the supervised disentanglement score for evaluation, and employed 40 binary attribute classifiers pre-trained on CelebA \citep{CelebA} to label generated images.
Each DCI score is assessed on 10k samples of latent variables with the corresponding attribute labels. 
In Fig \ref{fig:Dist_dci}, StyleGAN1, StyleGAN2-e, and StyleGAN2 refer to StyleGAN1 and StyleGAN2s with config E and F trained on FFHQ. Note that DCI experiments are all performed on FFHQ because the DCI score requires attribute annotations. DCI and Distortion metrics show a strong negative correlation on StyleGAN1 and StyleGAN2-e. The correlation is relatively moderate on StyleGAN2. 
This moderate correlation is because Distortion metric is based on the Grassmannian metric. The Grassmannian metric measures the distance between tangent spaces, while DCI is based on their specific basis.
Even if the tangent space is identical so that Distortion becomes zero, DCI can have a relatively low value depending on the choice of basis. Hence, in StyleGAN2, the high-distorted layers showed low DCI scores, but the low-distorted layers showed relatively high variance in DCI score. 
Nevertheless, the strong correlation observed in the other two experiments suggests that, in practice, the basis vector corresponding to a specific attribute has a limited variance in a given latent space.
Therefore, Distortion metric can be an unsupervised indicator for the supervised disentanglement score. 

\vspace{-5pt}
\paragraph{Traversal Comparison}
In all StyleGAN models, layer 7 and layer 8 ($\wset$-space) achieved the smallest Distortion metric (Fig \ref{fig:Dist_fid} and \ref{fig:Dist_dci}). These results explain the superior disentanglement of renowned $\wset$-space. Moreover, our Distortion suggests that layer 7 can be an alternative to $\wset$-space.
Our global-basis-compatibility results imply that the min-distorted layer can present better \textbf{image fidelity} under the global basis traversal. For a visual comparison, we observed the subspace traversal \citep{LocalBasis} along the global basis on the max-distorted layer 3, min-distorted layer 7, and $\wset$-space of StyleGAN2 on FFHQ. The subspace traversal visualizes the two-dimensional latent perturbation. 
Hence, the subspace traversal can reveal the deviation from the tangent space more clearly than the linear traversal, because the deviation is assessed in two dimensions. Note that the tangent space is the optimal perturbation in image fidelity.
In Fig \ref{fig:subspace_traversal}, the global basis shows visual artifacts at the corners in the subspace traversal on the max-distorted layer 3 and $\wset$-space. Nevertheless, the min-distorted layer 7 shows the stable traversal without any failure. 
This result proves that comparing Distortion scores can be a criterion for selecting a better latent space with higher global-basis-compatibility. (See Sec \ref{sec:additional_traversal} for additional image fidelity comparisons and Sec \ref{sec:additional_semantic} for \textbf{semantic factorization} comparisons.)


\section{Conclusion}
In this paper, we proposed a local intrinsic dimension estimation algorithm for the intermediate latent space in a pre-trained GAN. Using this algorithm, we analyzed the intermediate layers in the mapping network of StyleGANs on various datasets. Moreover, we suggested an unsupervised global disentanglement metric called Distortion. The analysis of the mapping network demonstrates that Distortion metric shows a high correlation between the global-basis-compatibility and disentanglement score. 
Although finding an optimal preprocessing hyperparameter $\theta_{pre}$ was beyond the scope of this work, the proposed metric showed robustness to the hyperparameter. Moreover, our local dimension estimation has the potential to be applied to feature spaces of various models. This kind of research would be an interesting future work.


\bibliography{references}
\bibliographystyle{icml2023}

\newpage
\appendix
\onecolumn
\section{Definition of disentangled latent space}
\paragraph{Disentangled perturbation} 
In the GAN disentanglement literature, several studies investigated the disentanglement property of the latent space by finding disentangled perturbations that make a disentangled transformation of an image in one generative factor, such as GANSpace \citep{harkonen2020ganspace},  SeFa \citep{shen2020closed}, and Local Basis \citep{LocalBasis}. To be more specific,
for a latent variable $z \in \mathcal{Z} \subset \mathbb{R}^{d}$, let $f = (f_{1}, f_{2}, \cdots, f_{d})$ be a generative factor of $G(z)$ where $G$ denotes the generator.
$T_{j}(x)$ denotes a transformation of an image $x$ in the $j$-th generative factor. The disentangled perturbation $v_{j}(z)$ for the base latent variable $z$ on the $j$-th generative factor is defined as follows (The perturbation intensity $\| v_{j}(z) \|$ and the corresponding change in $j$-th generative factor $\bigtriangleup f_{j}$ is omitted for brevity.):
\begin{equation}
    G(z + v_{j}(z)) = T_{j}(G(z)).
\end{equation}
In this paper, the global basis refers to the sample-independent disentangled perturbations on a latent space:
\begin{equation}
    v_{j}(z) = v_{j} \quad \text{ for all } z \in Z.
\end{equation}
For example, consider a pre-trained GAN model that generates face images. Then, the disentangled perturbation in this model is the latent perturbation direction that make the generated face change only in the wrinkles or hair color as presented in \citet{harkonen2020ganspace}. This disentangled perturbation is the global basis if all generated images show the same semantic variation when latent perturbed along it. 

\paragraph{Disentangled space}
The (globally) disentangled latent space is defined in terms of disentangled perturbations. The latent space is globally disentangled if there exists the global basis for the generative factors of data. In other words, for each generative factor $f_j$ for $1 \leq j \leq d$, there exists a corresponding latent perturbation direction $v_{j}$ such that all latent variables show the semantic variation in $f_j$ when perturbed along $v_{j}$. Then, we can interpret the vector component of this global basis $v_{j}$ as having a correspondence with the $j$-th generative factor $f_j$.
\begin{equation}
    f_{j} \longleftrightarrow c_{j}
    \quad\text{ when } z = \sum_{1\leq j \leq d} c_{j} \cdot v_{j} \text{ and } f_{j} \text{ denotes the $j$-th generative factor of $G(z)$.}
\end{equation}
In this paper, we described the above correspondence as the representation of globally disentangled latent space in the attribute-coordinate in Sec \ref{sec:distortion}.
This is consistent with the definition of disentanglement, introduced in \citet{bengio2013representation}.
For example, consider the dSprites dataset. The dSprites is a synthetic dataset consisting of two-dimensional shape images, which is widely used for disentanglement evaluation. The generative factors of dSprites are shape, scale, orientation, position on the x-axis, and position on the y-axis. Then, the (globally) disentangled latent space for the dSprites dataset is a five-dimensional vector space $Z=\mathbb{R}^{5}$ where $c_{1}$ represents the shape, $c_{2}$ represents the scale, and so on.

\section{Noise Estimation of Pseudorank Algorithm}
For completeness, we include the convergence theorem for the largest eigenvalue of the empirical covariance matrix for Gaussian noise in \citet{TracyWidom} and the noise estimation algorithm provided in \citet{RankEstimate}. 

\begin{theorem}[\citep{TracyWidom}]
    The distribution of the largest eigenvalue $\lambda_{1}$ of the empirical covariance matrix for $n$-samples of $\mathcal{N}(0, I_{p})$ converges to a Tracy-Widom distribution:
    \begin{align}
        &P\left(\lambda_{1} < \sigma^{2} \left( \mu_{n, p} + s\cdot \sigma_{n, p} \right) \right) \rightarrow F_{\beta}(s) 
        \quad \textrm{as $n, p \rightarrow \infty$ with $c=p/n$ fixed.}
        \\
        &\textrm{where} \quad\mu_{n, p}=\frac{1}{n}\left(\sqrt{n- \frac{1}{2}}+\sqrt{p-\frac{1}{2}}\right)^{2},  \\
        &\qquad \quad \,\,\,\sigma_{n, p}=\frac{1}{n}\left(\sqrt{n-\frac{1}{2}}+\sqrt{p-\frac{1}{2}}\right)
        \left(\frac{1}{\sqrt{n-1/2}}+\frac{1}{\sqrt{p-1/2})}\right)^{1/3},
    \end{align}
    where $F_{\beta}$ denotes the Tracy-Widom distribution of order $\beta = 1$ for real-valued observations. 
\end{theorem}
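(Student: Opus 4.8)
The plan is to recognize the statement as the Tracy--Widom edge limit for the largest eigenvalue of a real Wishart (Laguerre $\beta=1$) ensemble and to carry it out by the orthogonal-polynomial / Fredholm-determinant route. First I would reduce to a clean matrix model: if $X \in \sR^{n\times p}$ has i.i.d.\ $\mathcal{N}(0,1)$ entries, then $nS = X^{\top}X =: W$, so $\lambda_1(S) = \lambda_1(W)/n$; the overall scalar $\sigma^2$ appearing in the claim is merely the variance scale and is handled by homogeneity (replacing the samples by $\sigma$ times themselves multiplies every eigenvalue of $S$ by $\sigma^2$), so it suffices to locate the soft edge of $\mathrm{spec}(W)$ and its fluctuations. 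The ordered eigenvalues $0 \le \ell_1 \le \cdots \le \ell_p$ of $W$ have the explicit Laguerre orthogonal ensemble density
\begin{equation*}
 c_{n,p} \prod_{i<j} |\ell_i - \ell_j| \prod_{i=1}^{p} \ell_i^{(n-p-1)/2} e^{-\ell_i/2},
\end{equation*}
a $\beta = 1$ Coulomb gas with a hard wall at $0$ and a Laguerre confining potential.

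Next I would exploit the integrable structure of this ensemble. For $\beta = 1$ the correlation functions are Pfaffians of a $2\times 2$ matrix kernel built from skew-orthogonal polynomials for the Laguerre weight, and the quantity of interest, $P(\lambda_1 \le t) = P(\ell_p \le nt)$, is a Fredholm Pfaffian of that kernel on $(nt,\infty)$; the unitary analogue $\beta = 2$, a genuine determinantal process with the Christoffel--Darboux Laguerre kernel, is a cleaner warm-up for the same asymptotics. The analytic core is then the edge-scaling limit: substituting $\ell = \mu_{n,p} + s\,\sigma_{n,p}$ with the centering $\mu_{n,p} = \tfrac1n(\sqrt{n-1/2}+\sqrt{p-1/2})^2$ and scale $\sigma_{n,p}$ as stated --- the $-1/2$ shifts being a careful refinement chosen so that the subleading term in the Plancherel--Rotach asymptotics of the Laguerre polynomials cancels, which sharpens the rate of convergence --- one shows via those uniform asymptotics near the turning point that the rescaled kernel, after conjugation by a smooth positive factor, converges in trace norm to the Airy kernel (to the $2\times 2$ Airy matrix kernel when $\beta = 1$). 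Continuity of the Fredholm (Pfaffian) determinant under trace-norm convergence of kernels then yields $P\!\left(\lambda_1 < \sigma^2(\mu_{n,p} + s\,\sigma_{n,p})\right) \to F_1(s)$, invoking the Tracy--Widom identification of the Airy-kernel Fredholm Pfaffian with $F_1$.

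The main obstacle is this last asymptotic analysis: one must establish the Plancherel--Rotach asymptotics of the Laguerre polynomials with enough uniformity across the soft edge, together with integrable exponential tail bounds beyond the edge, to upgrade pointwise kernel convergence to trace-norm convergence --- and in the real case there is the extra burden of the $2\times 2$ matrix kernel and its skew-orthogonal polynomial machinery, considerably more delicate than the determinantal $\beta = 2$ case. Pinning down the exact constants $\mu_{n,p}$ and $\sigma_{n,p}$, including the $-1/2$ corrections, likewise requires careful bookkeeping of the next-order terms. Since this is a classical result used here only as input to the Pseudorank hypothesis test, in the paper itself we simply cite it; the sketch above indicates the route by which it is proved.
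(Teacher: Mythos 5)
The paper offers no proof of this statement: it is imported verbatim, with the citation \citep{TracyWidom}, as background for the Pseudorank hypothesis test (and the $-1/2$--shifted centering and scaling constants are the refined versions used by \citet{RankEstimate} for better finite-$n$ accuracy, not the original constants in Johnstone's 2001 theorem). Your sketch correctly identifies this and gives a faithful high-level roadmap of the standard proof --- reduction to the Laguerre $\beta=1$ ensemble, Pfaffian correlation kernel from skew-orthogonal polynomials, uniform Plancherel--Rotach edge asymptotics yielding trace-norm convergence to the Airy matrix kernel, then continuity of the Fredholm Pfaffian --- which is exactly the route in the cited literature; since the paper only cites, there is nothing further to compare.
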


\paragraph{Algorithm}
Solve the following non-linear system of $K+1$ equations involving the $K+1$ unknowns $\hat{\rho}^{1}, \cdots, \hat{\rho}^{K}$ and $\sigma^{2}_{est}$:
\begin{align}
    &\sigma_{\mathrm{KN}}^{2}-\frac{1}{p-K}\left[\sum_{j=K+1}^{p} \lambda_{j}+\sum_{j=1}^{K}\left(\lambda_{j}-\hat{\rho}_{j}\right)\right]=0, \\
    &\hat{\rho}_{j}^{2}-\hat{\rho}_{j}\left(\lambda_{j}+\sigma_{est}^{2}-\sigma_{est}^{2} \frac{p-K}{n}\right)+\lambda_{j} \sigma_{est}^{2}=0.
\end{align}
This system of equations can be solved iteratively. Check \citet{RankEstimate} for detail.

\section{Relation Between Rank Estimation Algorithm and Optimization}
\begin{theorem}
    The following optimization problem, called Nuclear-Norm Penalization (NNP), 
\begin{align}
    &\textrm{minimize}_{L, S} \quad \| L \|_{*} + \gamma \cdot \|E\|_{F} \qquad \textrm{s.t.} \,\, L+E = (\nabla_{\zvar}f)(\zvar),
\end{align}
has a solution
\begin{equation}
    L^{*} = U \left(\Sigma - \frac{1}{2\gamma}\cdot I \right)_{+} V^{\intercal}, 
\end{equation}
where $(\nabla_{\zvar}f)(\zvar) = U \Sigma V^{\intercal}\, (\textrm{SVD})$ and  $(M_{+})_{i,j} = \max\left(M_{i,j}\,, 0\right)$.
\end{theorem}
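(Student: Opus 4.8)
Write $M := (\nabla_{\zvar} f)(\zvar)$ and use the constraint to eliminate $E = M - L$, turning the problem into the unconstrained program of minimizing $\psi(L) := \|L\|_{*} + \gamma\,\|M - L\|_{F}^{2}$ over all $L$. The Frobenius term makes $\psi$ strictly convex and coercive, so it has a unique minimizer $L^{*}$, and since $L \mapsto \|M-L\|_{F}^{2}$ is smooth with gradient $-2(M-L)$, the optimality condition $0 \in \partial\psi(L^{*})$ reads $2\gamma\,(M - L^{*}) \in \partial \|L^{*}\|_{*}$, i.e. $M - L^{*} \in \tfrac{1}{2\gamma}\,\partial\|L^{*}\|_{*}$. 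The plan is to take the candidate $L^{*} = U\big(\Sigma - \tfrac{1}{2\gamma} I\big)_{+} V^{\intercal}$ and verify this inclusion by hand; strict convexity then shows it is the minimizer.

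To do the verification, set $\tau := 1/(2\gamma)$, let $r$ be the number of singular values of $M$ exceeding $\tau$, and partition the SVD accordingly as $U = [\,U_{1}\ U_{0}\,]$, $V = [\,V_{1}\ V_{0}\,]$, $\Sigma = \operatorname{diag}(\Sigma_{1},\Sigma_{0})$, with $\Sigma_{1}$ holding the singular values above $\tau$. Then $L^{*} = U_{1}(\Sigma_{1} - \tau I)V_{1}^{\intercal}$, which is a compact SVD of $L^{*}$ (its diagonal is strictly positive), so $L^{*}$ has left and right singular subspaces $\operatorname{range}(U_{1})$, $\operatorname{range}(V_{1})$; a one-line computation gives $M - L^{*} = \tau\,U_{1}V_{1}^{\intercal} + U_{0}\Sigma_{0}V_{0}^{\intercal}$. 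Now invoke the standard description of the nuclear-norm subdifferential at such a matrix: $\partial\|\cdot\|_{*} = \{\,U_{1}V_{1}^{\intercal} + W : U_{1}^{\intercal}W = 0,\ W V_{1} = 0,\ \|W\|_{2}\le 1\,\}$. Writing $\tfrac{1}{\tau}(M - L^{*}) = U_{1}V_{1}^{\intercal} + W$ with $W := \tfrac{1}{\tau}\,U_{0}\Sigma_{0}V_{0}^{\intercal}$, the orthogonality $U_{1}^{\intercal}W = 0 = WV_{1}$ holds because $U_{0}, V_{0}$ are columns of the same orthogonal factors as $U_{1}, V_{1}$, and $\|W\|_{2} = \tfrac{1}{\tau}\max_{i:\,\sigma_{i}\le\tau}\sigma_{i} \le 1$ exactly because $\tau$ is where we cut. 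Hence $2\gamma(M-L^{*}) = \tfrac{1}{\tau}(M-L^{*}) \in \partial\|L^{*}\|_{*}$, which is the optimality condition, so $L^{*}$ is the unique minimizer and $E^{*} = M - L^{*}$ the optimal residual.

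The one step that is not pure bookkeeping is matching the subdifferential: one must know its exact form and recognize that the block of $M$ below the threshold contributes a perturbation of operator norm at most $1$ — and this is precisely what pins the threshold to $1/(2\gamma)$ rather than some other value; everything else (strict convexity, coercivity, the block arithmetic of the SVD split) is routine. A second route avoids matrix subgradients entirely: both $\|\cdot\|_{*}$ and $\|\cdot\|_{F}$ are unitarily invariant, so von Neumann's trace inequality forces any minimizer of $\psi$ to share its singular vectors with $M$; the objective then separates over the singular values into $\sum_{i}\big(s_{i} + \gamma(\sigma_{i}-s_{i})^{2}\big)$ with $s_{i}\ge 0$, each term a scalar convex problem whose minimizer is $s_{i}^{*} = (\sigma_{i}-\tfrac{1}{2\gamma})_{+}$, and reassembling $U\operatorname{diag}(s^{*})V^{\intercal}$ recovers $L^{*}$. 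In that version the von Neumann reduction is the delicate step.
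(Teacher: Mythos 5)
Your proof is correct and follows essentially the same route as the paper's: eliminate $E$, write the optimality condition $2\gamma(M-L^{*})\in\partial\|L^{*}\|_{*}$, split the SVD at the threshold $1/(2\gamma)$, and verify that the residual lies in the subdifferential --- you invoke the explicit form $\{U_{1}V_{1}^{\intercal}+W : U_{1}^{\intercal}W=0,\ WV_{1}=0,\ \|W\|_{2}\le 1\}$ where the paper uses the equivalent dual-norm characterization $\{Z : \|Z\|_{2}\le 1,\ \langle Z,X\rangle=\|X\|_{*}\}$, which it establishes as a separate lemma. Note that you quietly replaced the stated penalty $\gamma\|E\|_{F}$ by $\gamma\|E\|_{F}^{2}$; that is the correct reading (the claimed formula is the proximal operator of the nuclear norm under the \emph{squared} Frobenius penalty, and would be false for the unsquared one), and the paper's own proof makes the same implicit substitution when it takes the gradient of the data term to be $2\gamma(L-Y)$.
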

\begin{proof}
Denote $Y:=\nabla_{\zvar}f$ and $h(L):= \|L\|_{*}+\gamma \|Y-L\|_F$. We want to show that $L_{*}$ minimizes $h(L)$. Then, the necessary and sufficient condition for this is:
\begin{align}
   0\in  \partial h (\widehat{L_{*}}) = \{ 2\gamma (L_{*}-Y)+ z: z \in\partial\|\widehat{L_{*}}\|_{*} \} \quad
   \Longleftrightarrow \quad 2\gamma (Y - L_{*})\in \partial \|\widehat{L_{*}}\|_{*}.
\end{align}
Note that $Y = U\Sigma V^{\intercal}$ and $\hat{L} = U(\Sigma -\frac{1}{2\gamma} I)_{+}V^{\intercal}$. We can write 
\begin{equation}
    Y = U_1 \Sigma_1 V_{1}^{\intercal} + U_2 \Sigma_2 V_{2}^{\intercal},
\end{equation}
with $\text{diag}(\Sigma_1)>\frac{1}{2\gamma}$ and $\text{diag}(\Sigma_2)\leq \frac{1}{2\gamma}$. Then, 
\begin{align}
    \widehat{L_{*}} &=U_1\left(\Sigma - \frac{1}{2\gamma}I\right)V_{1}^{\intercal},  \\
    Y - \widehat{L_{*}} &= U_2 \Sigma_2 V_2^{\intercal} + \frac{1}{2\gamma}U_1 V_1^{\intercal} 
    = \frac{1}{2\gamma} \left( U_1 V_1^{\intercal} + 2\gamma U_2 \Sigma_2 V_2^{\intercal}\right).
\end{align}
By doing tedious calculation, we can verify that $U_1 V_1^{\intercal} + 2\gamma U_2 \Sigma_2 V_2^{\intercal}$ meet the condition of Lemma \ref{nuclear_grad}, so that $U_1 V_1^{\intercal} + 2\gamma U_2 \Sigma_2 V_2^{\intercal}\in \partial \|\widehat{L_{*}}\|$. Therefore, $0\in \partial h(\widehat{L_{*}})$ and it completes the proof.
\end{proof}

\begin{lemma}\label{nuclear_grad}
Let $X\in \sR^{m\times n}$ and $f(x)=\|X\|_{*}$. Then, 
\begin{equation}
    \partial f(X)= \partial \|X\|_{*}=\{ Z\in \mathbb{R}^{m\times n}: \|Z\|_2\leq 1 \,\,\textrm{ and }\,\, \langle Z,X \rangle=\|X\|_{*}\}.
\end{equation}
\end{lemma}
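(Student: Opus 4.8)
The plan is to identify $\partial\|X\|_*$ by convex duality, exploiting that on $\R^{m\times n}$ with the Frobenius inner product $\langle A,B\rangle=\Tr(A^\intercal B)$ the nuclear norm $\|\cdot\|_*$ and the spectral norm $\|\cdot\|_2$ (largest singular value) form a dual pair. First I would isolate two elementary facts: (i) the von Neumann trace / H\"older inequality $\langle A,B\rangle\le\|A\|_*\,\|B\|_2$ for all $A,B$; and (ii) that the relevant suprema are attained, i.e.\ $\|X\|_*=\max\{\langle Z,X\rangle:\|Z\|_2\le1\}$ with maximizer $Z=UV^\intercal$ when $X=U\Sigma V^\intercal$ is an SVD, and symmetrically $\|Z\|_2=\max\{\langle Z,W\rangle:\|W\|_*\le1\}$. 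Both follow from the SVD: expanding $A=\sum_i\sigma_i(A)u_iv_i^\intercal$ gives $\langle A,B\rangle=\sum_i\sigma_i(A)\,(u_i^\intercal B v_i)$ with $|u_i^\intercal Bv_i|\le\|B\|_2$, which is (i); and for (ii) the choice $Z=UV^\intercal$ has $\|Z\|_2=1$ and $\langle Z,X\rangle=\sum_i\sigma_i(X)=\|X\|_*$, while $W=u_1v_1^\intercal$ built from the leading singular vectors of $Z$ gives the reverse identity.

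Granting (i) and (ii), I would establish the two inclusions directly from the definition of the subdifferential. For the direction $\supseteq$: if $\|Z\|_2\le1$ and $\langle Z,X\rangle=\|X\|_*$, then for every $Y$ fact (i) yields $\langle Z,Y\rangle\le\|Y\|_*$, hence $\|Y\|_*\ge\langle Z,Y\rangle=\|X\|_*+\langle Z,Y-X\rangle$, which is precisely the subgradient inequality, so $Z\in\partial\|X\|_*$. For the direction $\subseteq$: if $Z\in\partial\|X\|_*$ then $\|Y\|_*\ge\|X\|_*+\langle Z,Y-X\rangle$ for all $Y$; setting $Y=0$ gives $\langle Z,X\rangle\ge\|X\|_*$ and setting $Y=2X$ gives $\langle Z,X\rangle\le\|X\|_*$, so $\langle Z,X\rangle=\|X\|_*$; and setting $Y=X+W$ and using $\|X+W\|_*\le\|X\|_*+\|W\|_*$ gives $\langle Z,W\rangle\le\|W\|_*$ for all $W$, whence $\|Z\|_2=\max_{\|W\|_*\le1}\langle Z,W\rangle\le1$ by (ii). Thus $Z$ lies in the set on the right-hand side, which finishes the proof.

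An equivalent, slicker packaging --- which I might prefer if the paper is willing to cite more convex analysis --- is to observe that (ii) exhibits $\|\cdot\|_*$ as the support function of the compact convex set $B_2=\{Z:\|Z\|_2\le1\}$, and then invoke the standard fact that the subdifferential of a support function $\sigma_C$ at $X$ equals the exposed face $\{Z\in C:\langle Z,X\rangle=\sigma_C(X)\}$; equivalently, compute the convex conjugate $(\|\cdot\|_*)^\ast=\iota_{B_2}$ and apply Fenchel--Young, $Z\in\partial\|X\|_*\iff\|X\|_*+\iota_{B_2}(Z)=\langle Z,X\rangle$. In any case the only genuine content, and the step most prone to a careless slip, is the spectral/nuclear duality in (i)--(ii), which rests on the SVD and on the bound $|u^\intercal Bv|\le\|B\|_2$ for unit vectors $u,v$; everything after that is routine manipulation of the defining inequality. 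As a sanity check, $\partial\|X\|_*$ is automatically nonempty because $\|\cdot\|_*$ is finite and convex on all of $\R^{m\times n}$, and $Z=UV^\intercal$ is a concrete member of the claimed set.
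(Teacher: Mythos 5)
Your proposal is correct, and your ``slicker packaging'' is essentially the paper's proof: the paper writes the subgradient inequality, rearranges to $\langle Z,X\rangle-\|X\|_*\geq\langle Z,Y\rangle-\|Y\|_*$ for all $Y$, takes the supremum to recognize the convex conjugate $\sup_Y(\langle Z,Y\rangle-\|Y\|_*)=\iota_{\{\|\cdot\|_2\le 1\}}(Z)$, and reads off $\|Z\|_2\le 1$ together with $\langle Z,X\rangle=\|X\|_*$ from the finiteness/two-sided bound --- i.e.\ exactly the Fenchel--Young characterization you describe.

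Your primary argument (substituting $Y=0$, $Y=2X$, $Y=X+W$ into the subgradient inequality rather than taking a global supremum and identifying the conjugate) is a genuinely more elementary unpacking of the same idea: it avoids ever writing down the conjugate function and instead extracts $\langle Z,X\rangle=\|X\|_*$ and $\langle Z,W\rangle\le\|W\|_*$ directly, then passes to $\|Z\|_2\le 1$ via the dual-norm formula. Both proofs hinge on exactly the same nontrivial fact, the spectral/nuclear duality $\|X\|_*=\sup_{\|Z\|_2\le 1}\langle Z,X\rangle$ (equivalently $(\|\cdot\|_*)^\ast=\iota_{B_2}$), which the paper asserts in the ``cases'' line without proof and which you correctly identify as the only step requiring real content and sketch via the SVD and the bound $|u^\intercal Bv|\le\|B\|_2$. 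What your direct version buys is transparency about where each condition comes from and the avoidance of convex-conjugate machinery; what the conjugate version buys is compactness and a one-line citation to standard convex analysis. Your proof also makes the $\supseteq$ inclusion explicit, whereas the paper's proof leaves it implicit in the $\Leftrightarrow$ chain.
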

\begin{proof}
If $Z\in \partial f(X)$, then 
\begin{align}
    &f(Y) \geq f(X)+\langle Z, Y-X \rangle, \qquad \forall Y\in \mathbb{R}^{m\times n},
    \\ &\Leftrightarrow \,\,  \langle Z, X \rangle - \|X\|_{*} \geq
     \langle Z, Y \rangle - \|Y\|_{*} , \quad \forall Y\in \mathbb{R}^{m\times n},
    \\ &\Leftrightarrow \,\,   \langle Z, X \rangle - \|X\|_{*} \geq \sup_{Y\in \mathbb{R}^{m\times n}}\left(    \langle Z, Y \rangle - \|Y\|_{*}\right)
    = \begin{cases} 0, \quad \text{ if } \|Z\|_2\leq 1, \\ \infty, \,\, \text{ otherwise.}\end{cases}
\end{align}
And $0 \leq  \langle Z, X, \rangle - \|X\|_{*} = \langle Z, X, \rangle - \sup_{\|M\|_{2}\leq 1}\langle M, X, \rangle \leq 0 $, thus $\langle Z, X, \rangle = \|X\|_{*}$. 
\end{proof} 

\newpage
\section{Grassmannian Metric for Distortion - Geodesic vs. Projection} \label{sec:why_geodesic_metric}
Our proposed Distortion metric $\dset = I_{rand}/I_{local}$ is defined as the relative inconsistency of intrinsic tangent spaces on a latent manifold (Sec \ref{sec:distortion}). The inconsistency ($I_{rand}$, $I_{local}$) is measured by the Grassmannain \citep{boothby} distance between tangent spaces, particularly by Geodesic Metric \citep{grassgeo}. In this section, we present why we choose the Geodesic Metric instead of the Projection Metric \citep{grassproj} among Grassmannian distances. Informally, the Geodesic Metric provides a better discriminability compared to the Projection Metric. For completeness, we begin with the definitions of the Grassmannian manifold and two distances defined on it.

\paragraph{Definitions}
Let $V$ be the $n$-dimensional vector space. The \textit{Grassmannian manifold} $\textrm{Gr}(k, V)$ \citep{boothby} is defined as the set of all $k$-dimensional linear subspaces of $V$. Then, for two $k$-dimensional subspaces $W, W^{\prime} \in \textrm{Gr}(k, V)$, two Grassmannian metrics are defined as follows:
\begin{equation}
    d_{\mathrm{proj}}\left(W, W^{\prime}\right)=\left\|P_{W}-P_{W^{\prime}}\right\|, \qquad d_{\mathrm{geo}}(W,W^{\prime})=\left(\sum^{k}_{i=1}\theta^2_i\right)^{1/2}.
\end{equation}
For the \textit{Projection Metric} $d_{\mathrm{proj}}\left(W, W^{\prime}\right)$, $P_{W}$ and $P_{W^{\prime}}$ denote the projection into each subspaces and $\| \cdot \|$ represents the operator norm. For the \textit{Geodesic Metric} $d_{\mathrm{geo}}(W,W^{\prime})$, $\theta_{i}$ denotes the $i$-th principal angle between $W$ and $W^{\prime}$. To be more specific, $\theta_i =\cos^{-1}(\sigma_{i}(M_{W}^{\top} \, M_{W^{\prime}} ))$ where $M_W, M_{W^{\prime}}\in \sR^{n\times k}$ are the column-wise concatenation of orthonormal basis for $W, W^{\prime}$ and $\sigma_{i}$ represents the $i$-th singular value.

\paragraph{Experiments}
To test the discriminability of these two metrics, we designed a simple experiment. Let $W, W^{\prime}$ be the two 50-dimensional subspaces of  $\sR^{512}$ because the dimension of intermediate layers in the mapping network is 512. We measure the Grassmannian distance between two subspaces as we vary $\dim \left(W \cap W^{\prime} \right) = k_{0}$,
\begin{equation}
    W = \langle e_{1}, e_{2}, \cdots, e_{k} \rangle, \qquad
    W = \langle \{e_{1}, e_{2}, \cdots, e_{k_{0}}\} \cup \{ e_{k+1}, \cdots,  e_{2k-k_{0}}\} \rangle
\end{equation}
where $\{e_{i}\}_{1\leq i \leq n}$ denotes the standard basis of $\sR^{n}$. Fig \ref{fig:geo_vs_proj} reports the results. The Geodesic Metric reflects the degree of intersection between two subspaces.  As we increase the dimension of intersection, the Geodesic Metric decreases. However, the Projection Metric cannot discriminate the intersected dimension until it reaches the entire space.

\begin{figure}[h]
    \centering
    \begin{subfigure}[b]{0.4\columnwidth}
    \centering
    \includegraphics[width=\columnwidth]{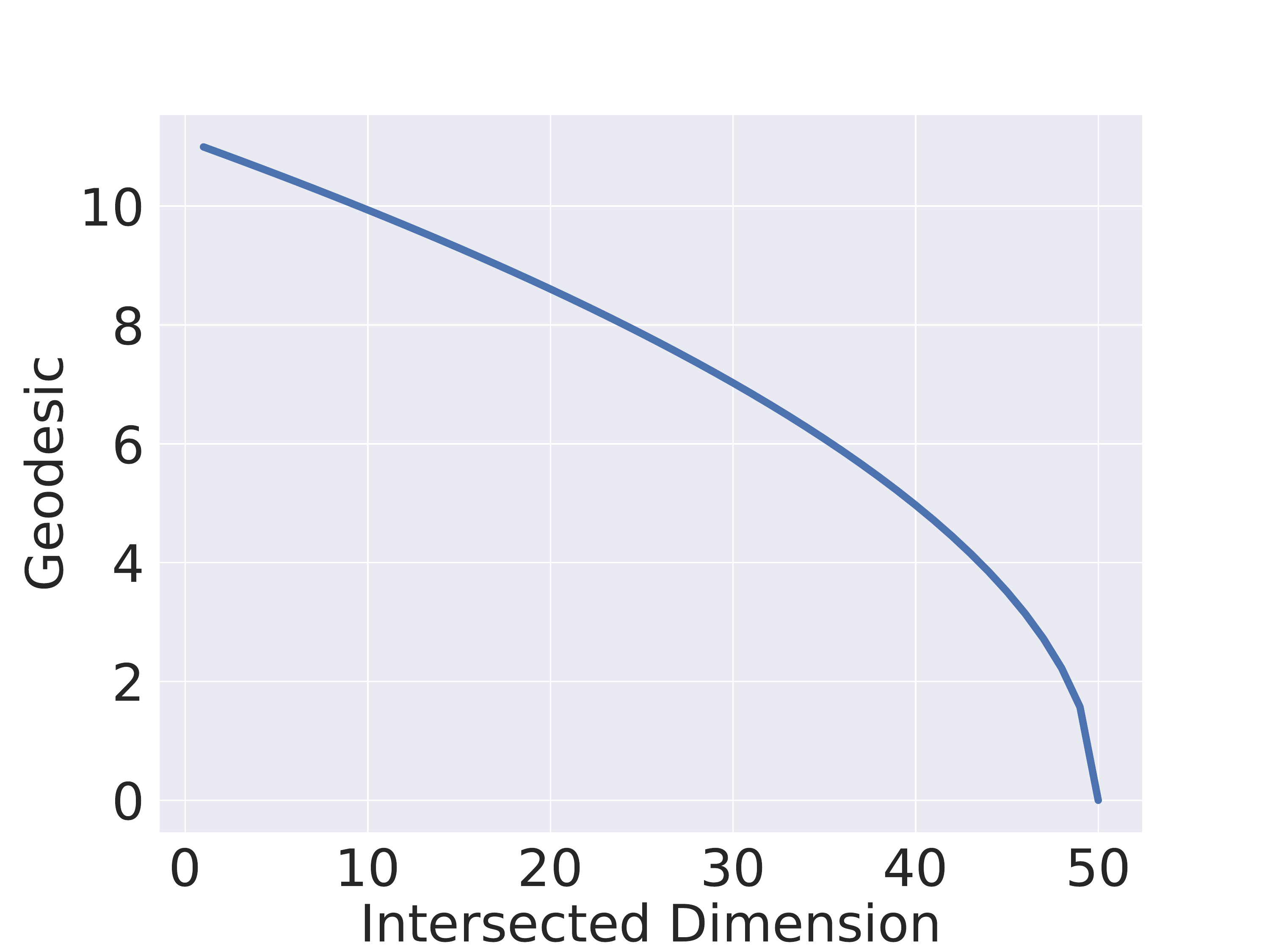}
    \caption{Geodesic Metric}
    \end{subfigure}
    \quad
    \begin{subfigure}[b]{0.4\columnwidth}
    \centering
    \includegraphics[width=\columnwidth]{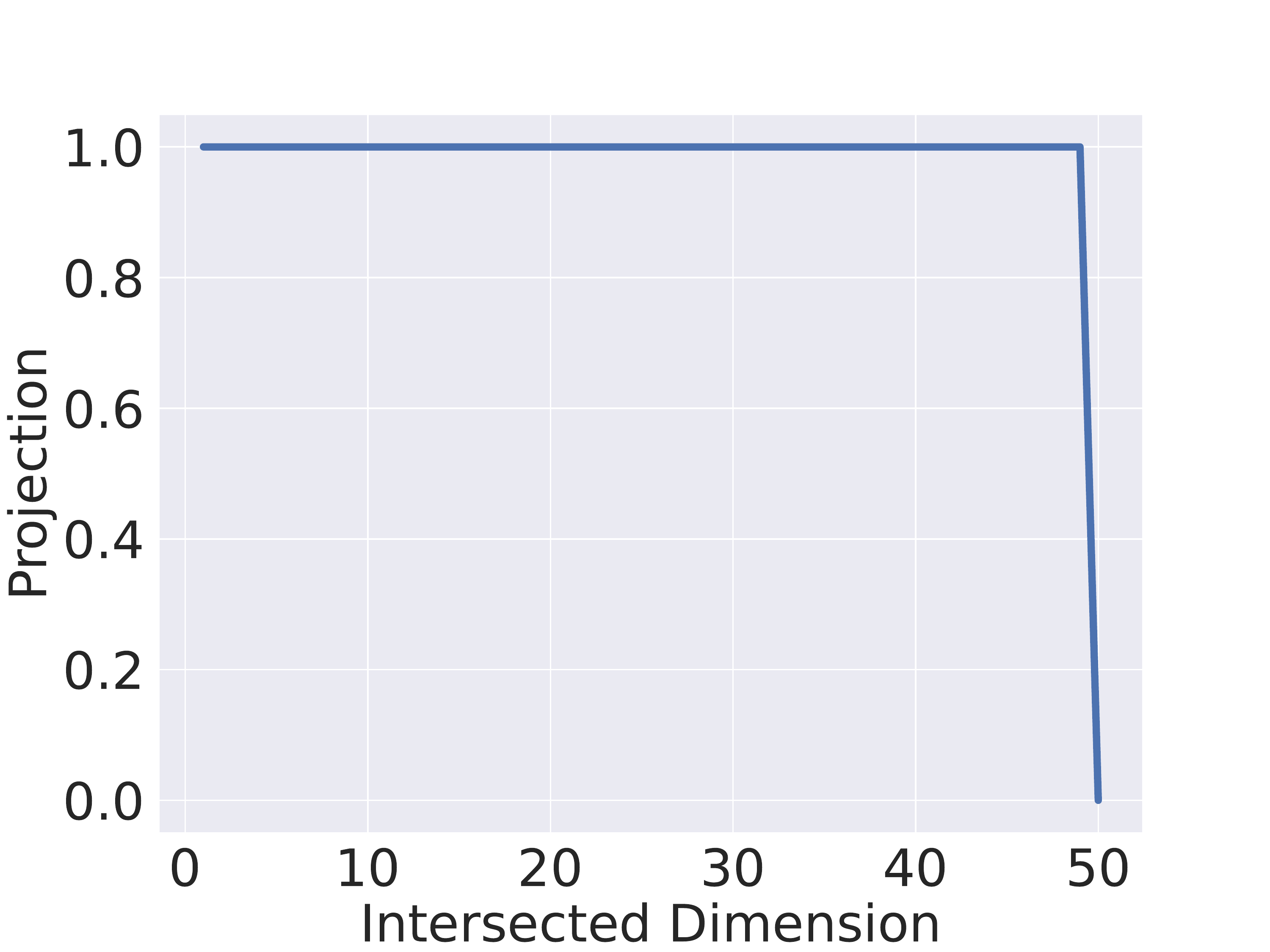}
    \caption{Projection Metric}
    \end{subfigure}
    \caption{
    \textbf{Grassmannian metric} between two 50-dimensional subspaces $W, W^{\prime} \in \textrm{Gr}(50, \sR^{512})$ for each intersected dimension $k_{0} = \dim \left(W \cap W^{\prime} \right)$. While the Geodesic Metric monotonically decreases as more dimensions intersect, the Projection Metric cannot discriminate $0 \leq k_{0}  \leq 49$.
    }
    \label{fig:geo_vs_proj}
\end{figure}

\newpage
\section{Robustness to preprocessing} 
In this section, we assessed the robustness of Distortion $\mathcal{D}_{\mset}$ to preprocessing hyperparameter $\theta_{pre}$. Figure \ref{fig:Dist_robust} presents the distribution of 1k samples of distortion before taking an expectation, i.e., $\left( d^{k}_{\mathrm{geo}}\left( \,T_{\wvar_{1}} \mset^{k}_{\wvar_{1}},T_{\wvar_{2}} \mset^{k}_{\wvar_{2}} \right) / I_{local} \right)$, for each intermediate layer. In Fig \ref{fig:Dist_robust}, increasing $\theta_{pre}$ makes an overall translation of Distortion. However, the relative ordering between the layers remains the same. The low Distortion score of layer 8 provides an explanation for the superior disentanglement of $\wset$-space observed in many literatures \citep{karras2019style, harkonen2020ganspace}. Moreover, the results suggest that the min-distorted layer $7$ can serve as a similar-or-better alternative.


\begin{figure}[h]
    \centering
    \begin{subfigure}[b]{0.35\columnwidth}
    \centering
    \includegraphics[width=\columnwidth]{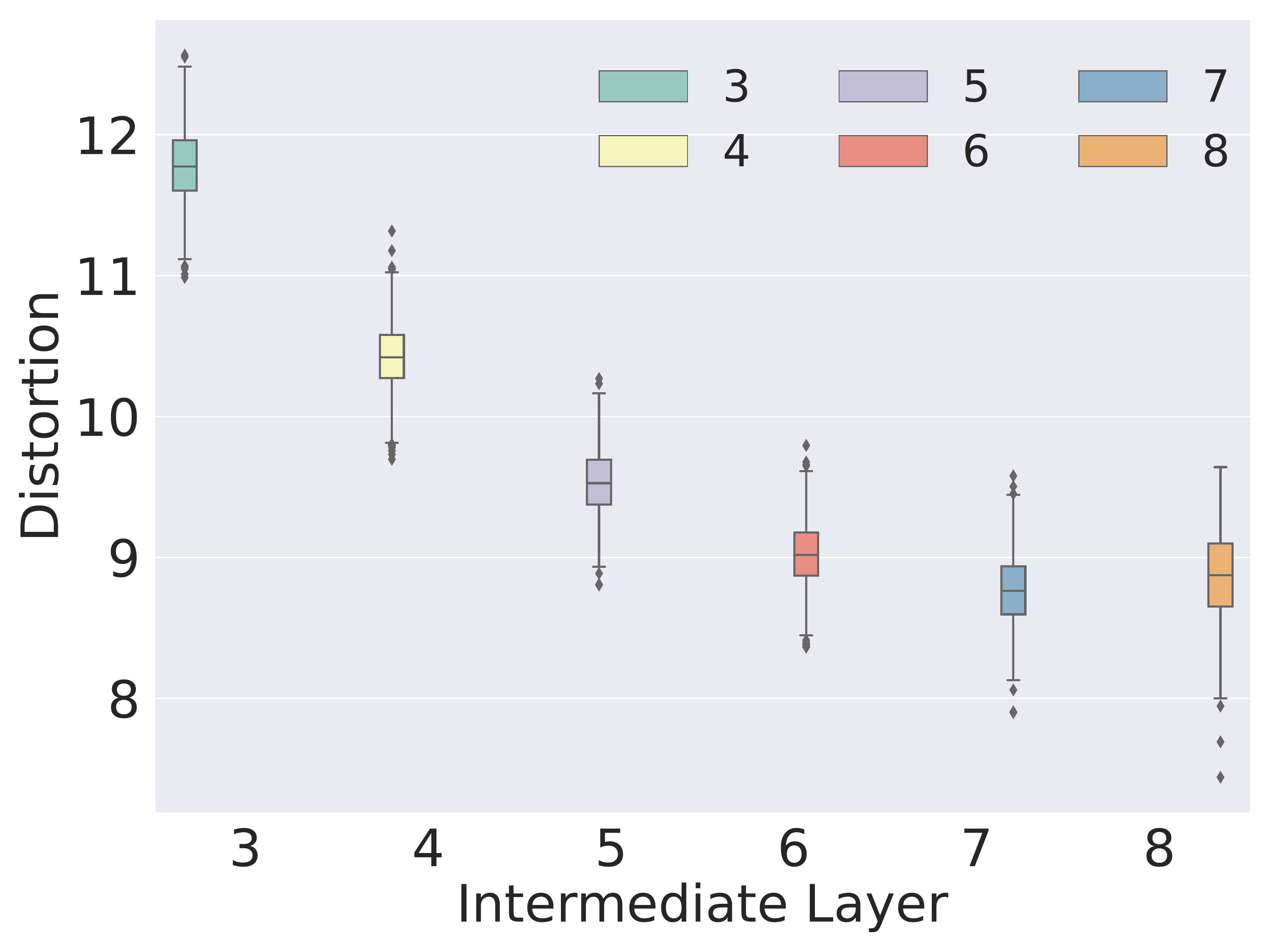}
    \caption{$\theta_{pre}=0.0005$}
    \end{subfigure} 
    \qquad
    \begin{subfigure}[b]{0.35\columnwidth}
    \centering
    \includegraphics[width=\columnwidth]{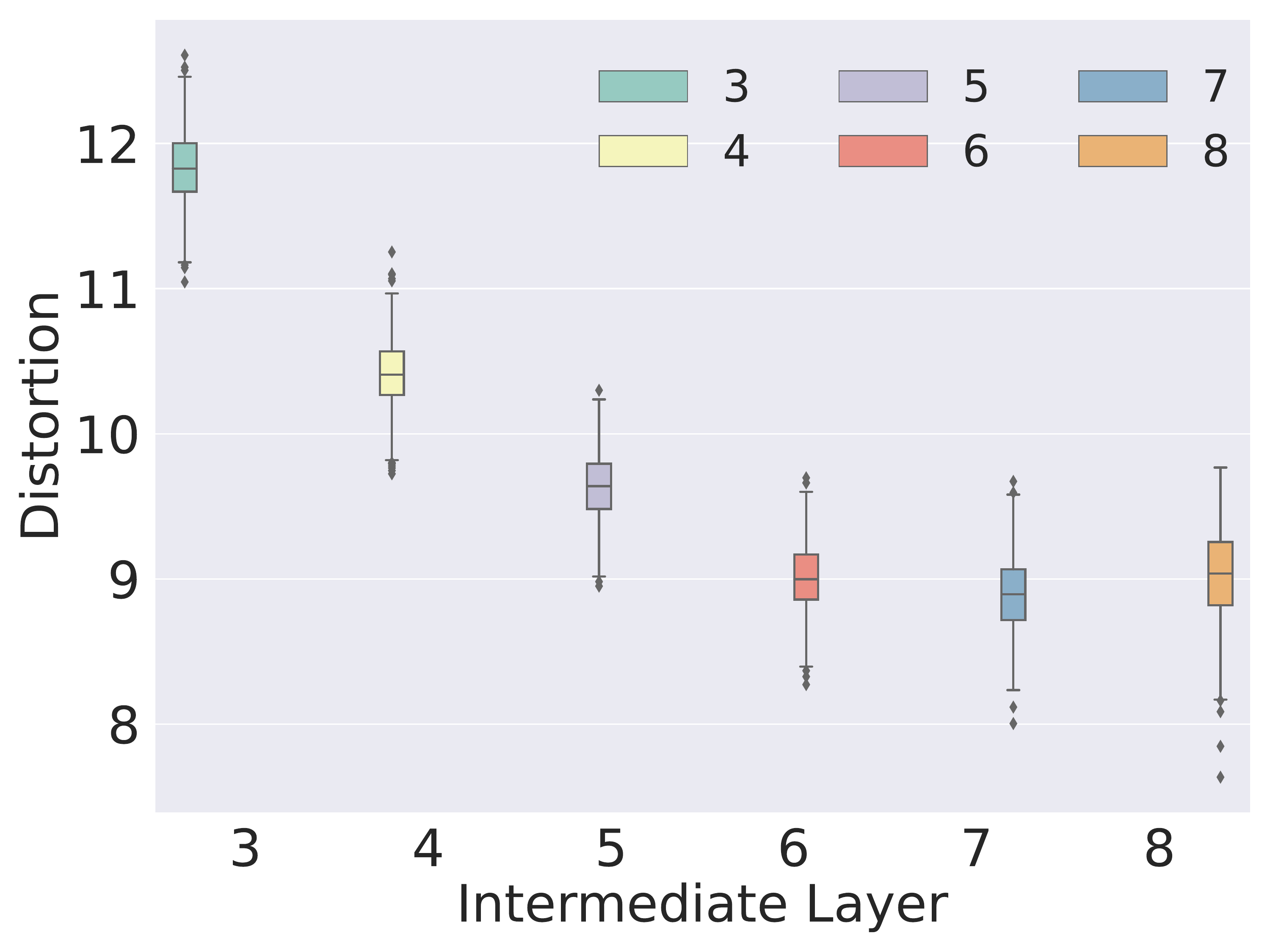}
    \caption{$\theta_{pre}=0.001$}
    \end{subfigure}
    \\
    \begin{subfigure}[b]{0.35\columnwidth}
    \centering
    \includegraphics[width=\columnwidth]{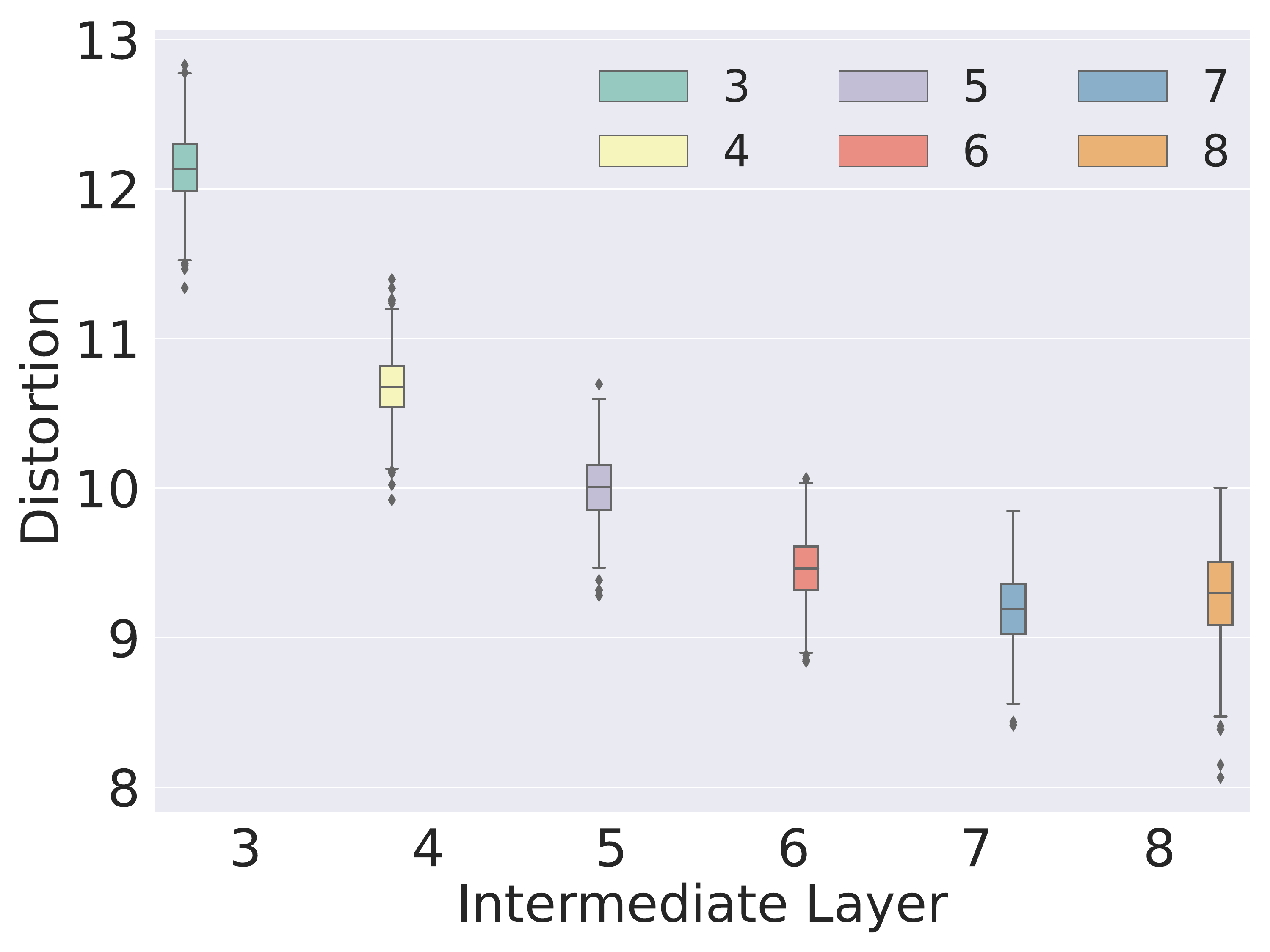}
    \caption{$\theta_{pre}=0.005$}
    \end{subfigure} 
    \qquad
    \begin{subfigure}[b]{0.35\columnwidth}
    \centering
    \includegraphics[width=\columnwidth]{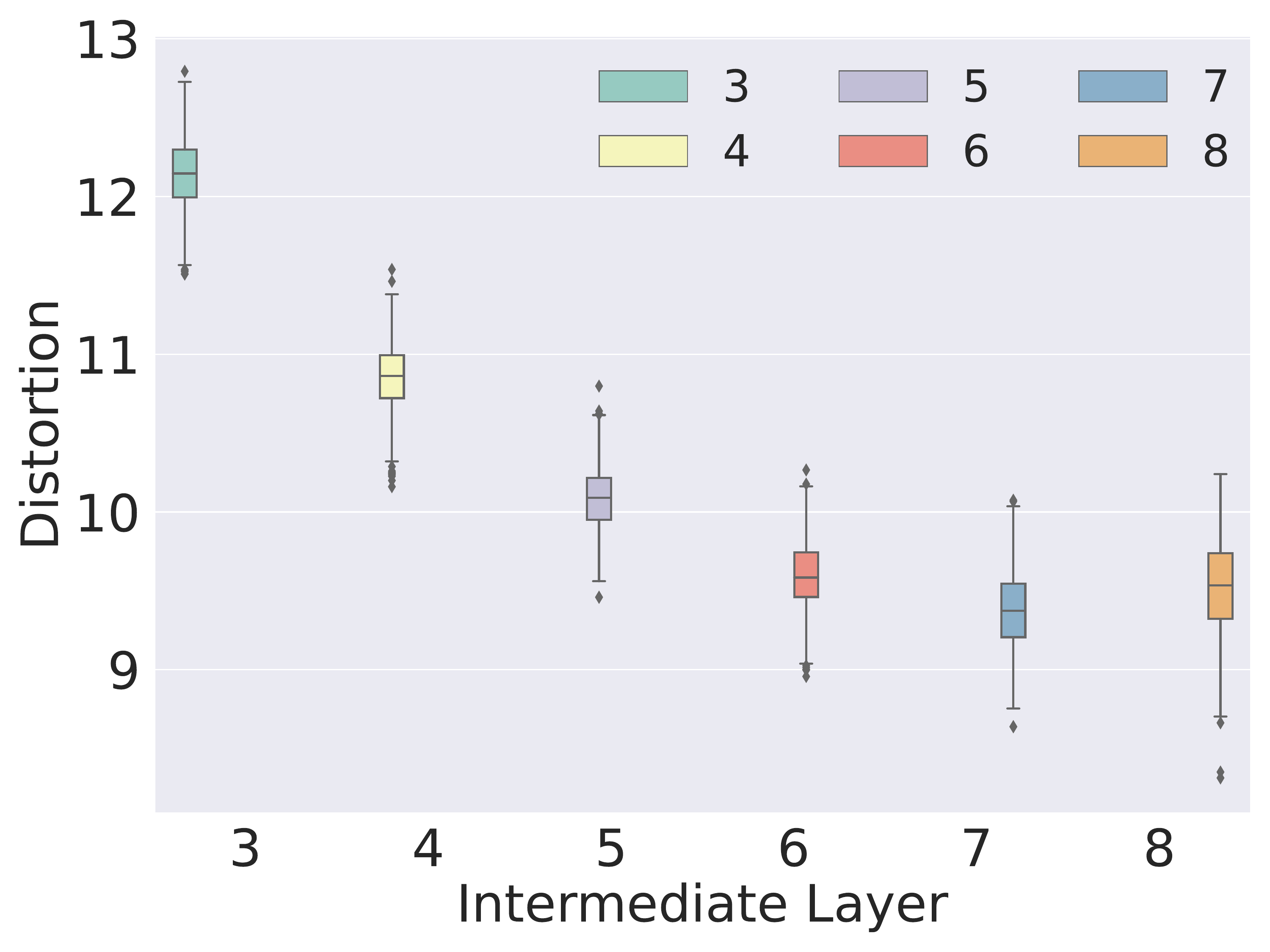}
    \caption{$\theta_{pre}=0.01$}
    \end{subfigure} 
    \caption{
    \textbf{Robustness of Distortion metric $\dset$ to $\theta_{pre}$} of StyleGAN2 on FFHQ. 
    }
    \label{fig:Dist_robust}
\end{figure}

\section{Architecture diagram of StyleGANs}
\begin{figure}[h]
    \centering
    \begin{subfigure}[b]{0.30\columnwidth}
    \centering
    \includegraphics[width=\columnwidth]{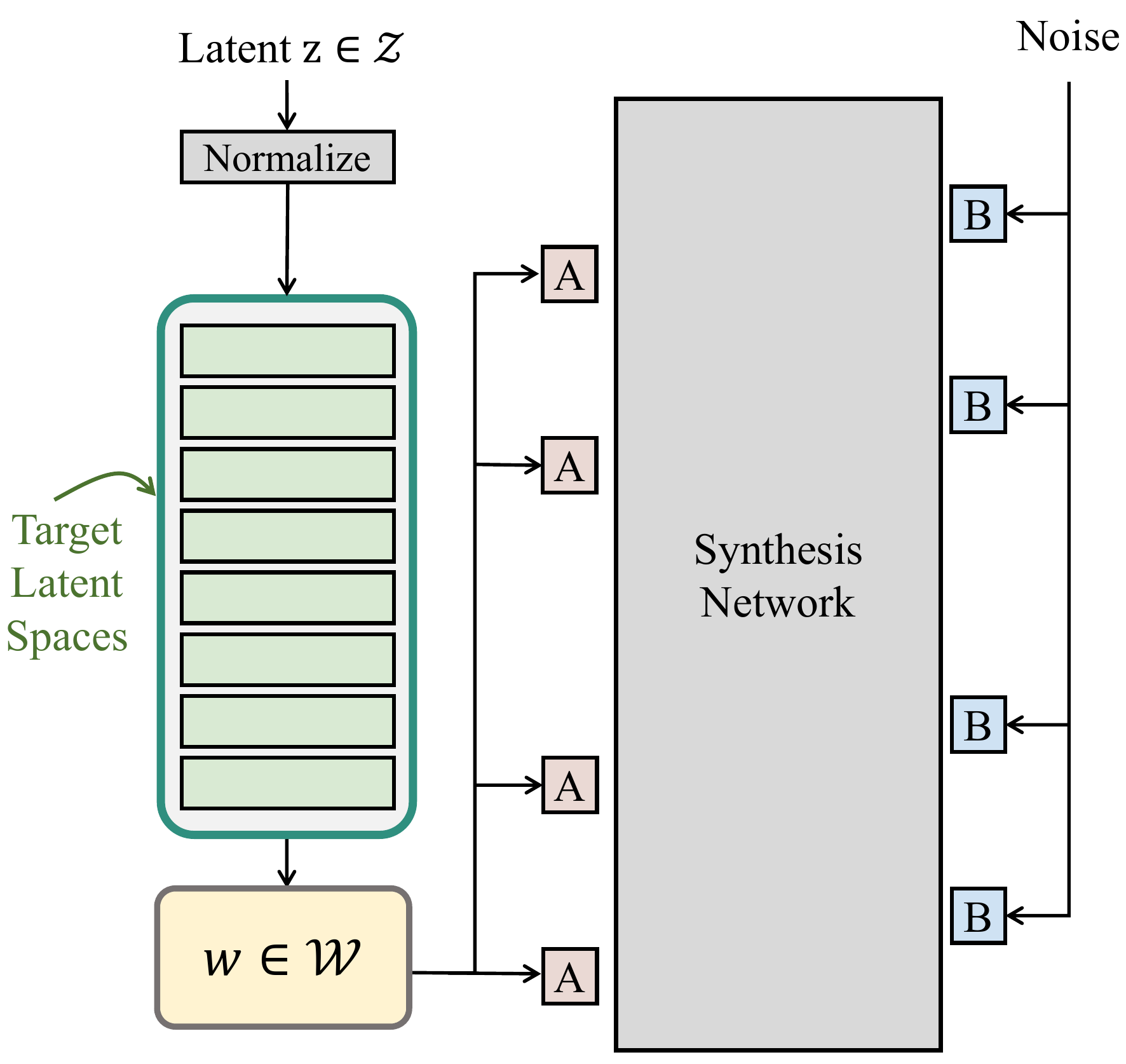}
    \end{subfigure}
    \caption{
    \textbf{Architecture of StyleGANs.} Our analysis in Sec \ref{sec:distortion} is performed in the intermediate layers of the mapping network. 
    }
    \label{fig:architecture_stylegan}
\end{figure}

\newpage
\section{Robustness of Correlations to preprocessing} \label{sec:corr_robust}
\begin{figure}[h]
    \centering
    \begin{subfigure}[b]{0.4\columnwidth}
    \centering
    \includegraphics[width=\columnwidth]{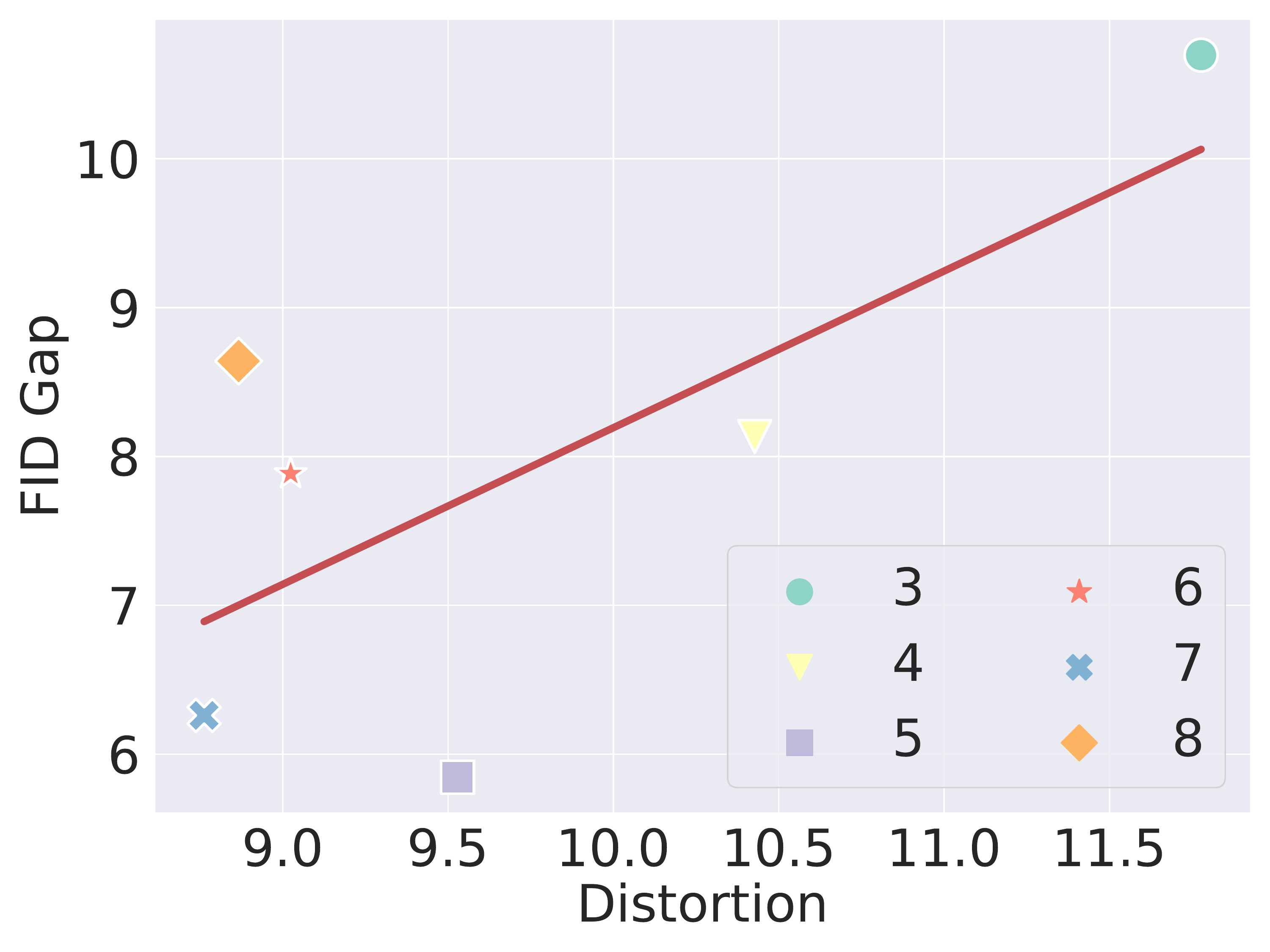}
    \caption{$\theta_{pre}=0.0005$, $\rho=0.706$}
    \end{subfigure}
    \quad
    \begin{subfigure}[b]{0.4\columnwidth}
    \centering
    \includegraphics[width=\columnwidth]{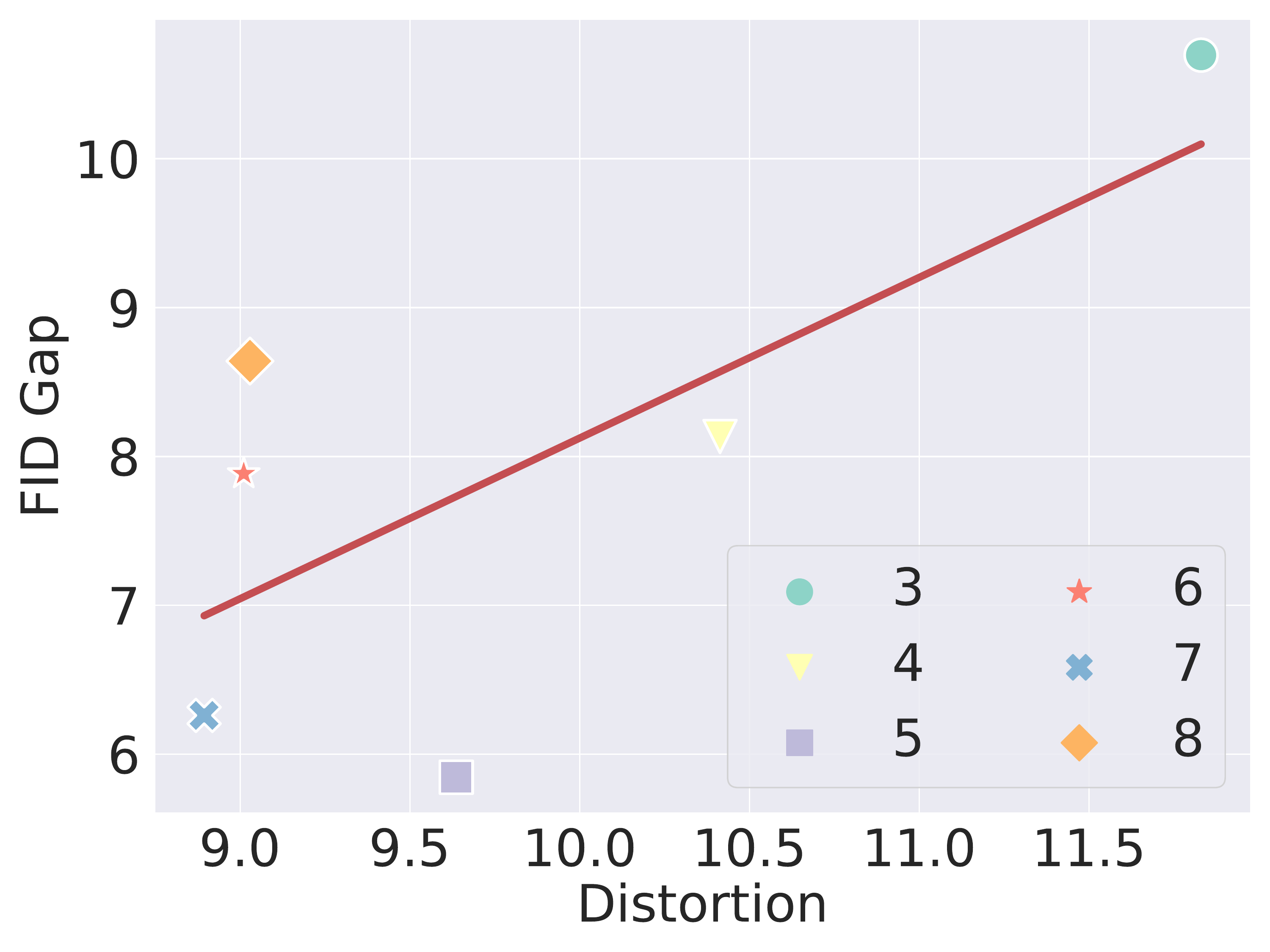}
    \caption{$\theta_{pre}=0.001$, $\rho=0.706$}
    \end{subfigure}
    \\
    \begin{subfigure}[b]{0.4\columnwidth}
    \centering
    \includegraphics[width=\columnwidth]{figure/FID/geodesic_fid_StyleGAN2-f_ptb_3_0.005_corr_0.7005341764257045.pdf}
    \caption{$\theta_{pre}=0.005$, $\rho=0.701$}
    \end{subfigure}
    \quad
    \begin{subfigure}[b]{0.4\columnwidth}
    \centering
    \includegraphics[width=\columnwidth]{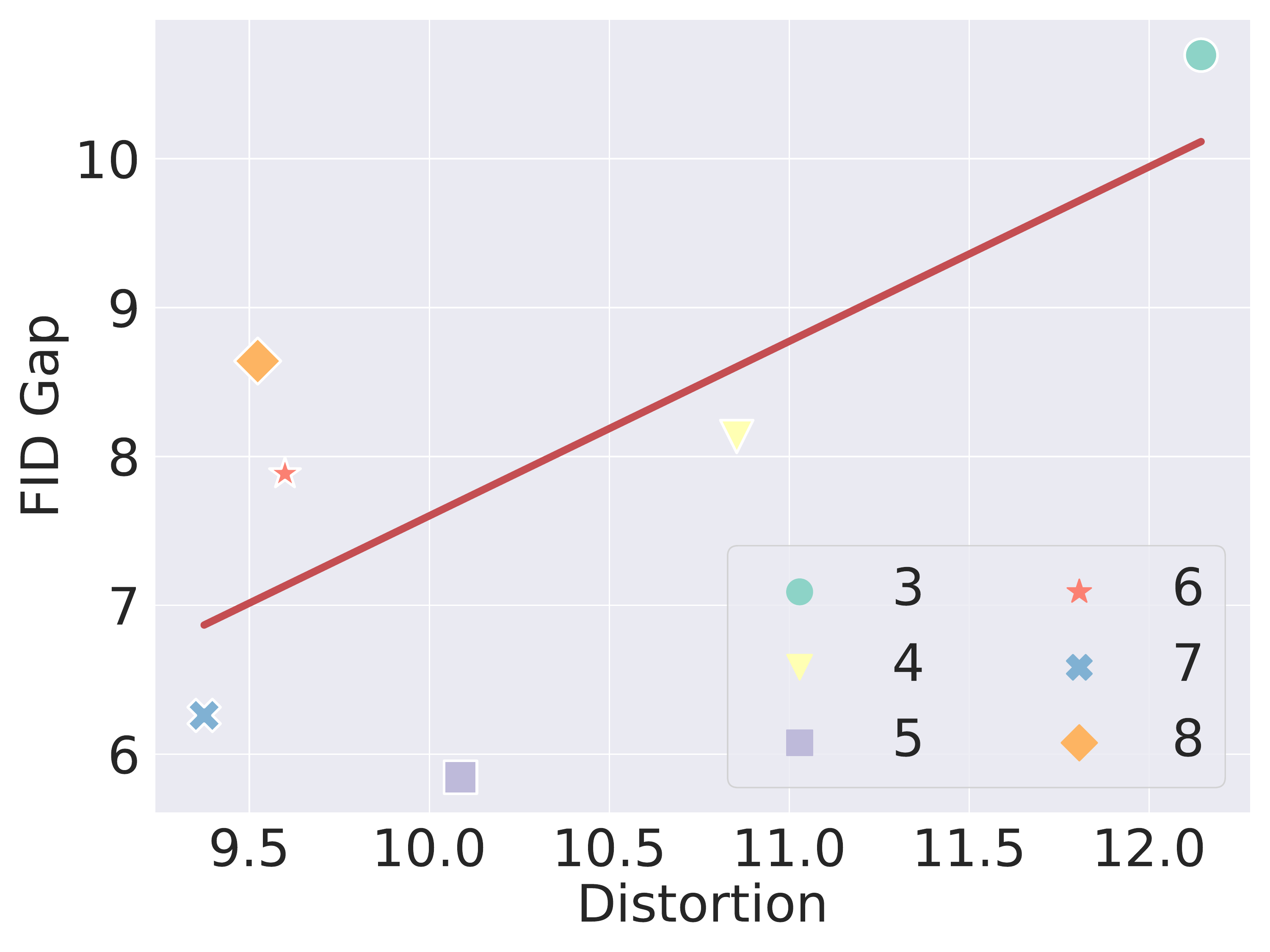}
    \caption{$\theta_{pre}=0.01$, $\rho=0.715$}
    \end{subfigure}
    \caption{ 
    \textbf{Correlation between Distortion metric and FID gap} of StyleGAN2 on FFHQ. FID gap represents the difference between FID \citep{heusel2017gans} score of Local Basis \citep{LocalBasis} and the global basis \citep{harkonen2020ganspace}. Each FID score is measure for 50k samples of latent-perturbed images along the first component of Local Basis and GANSpace. The perturbation intensity is fixed to 3. Each point represents a $i$-th intermediate layer in the mapping network, and the red-line illustrates the linear regression of these points. $\rho$ denotes the Pearson correlation coefficient of Distortion and FID. The positive correlation between Distortion and FID remains robust regardless of $\theta_{pre}$.
    }
    \label{fig:Dist_fid_app}
\end{figure}

\begin{figure}[t]
    \centering
    \begin{subfigure}[b]{0.4\columnwidth}
    \centering
    \includegraphics[width=\columnwidth]{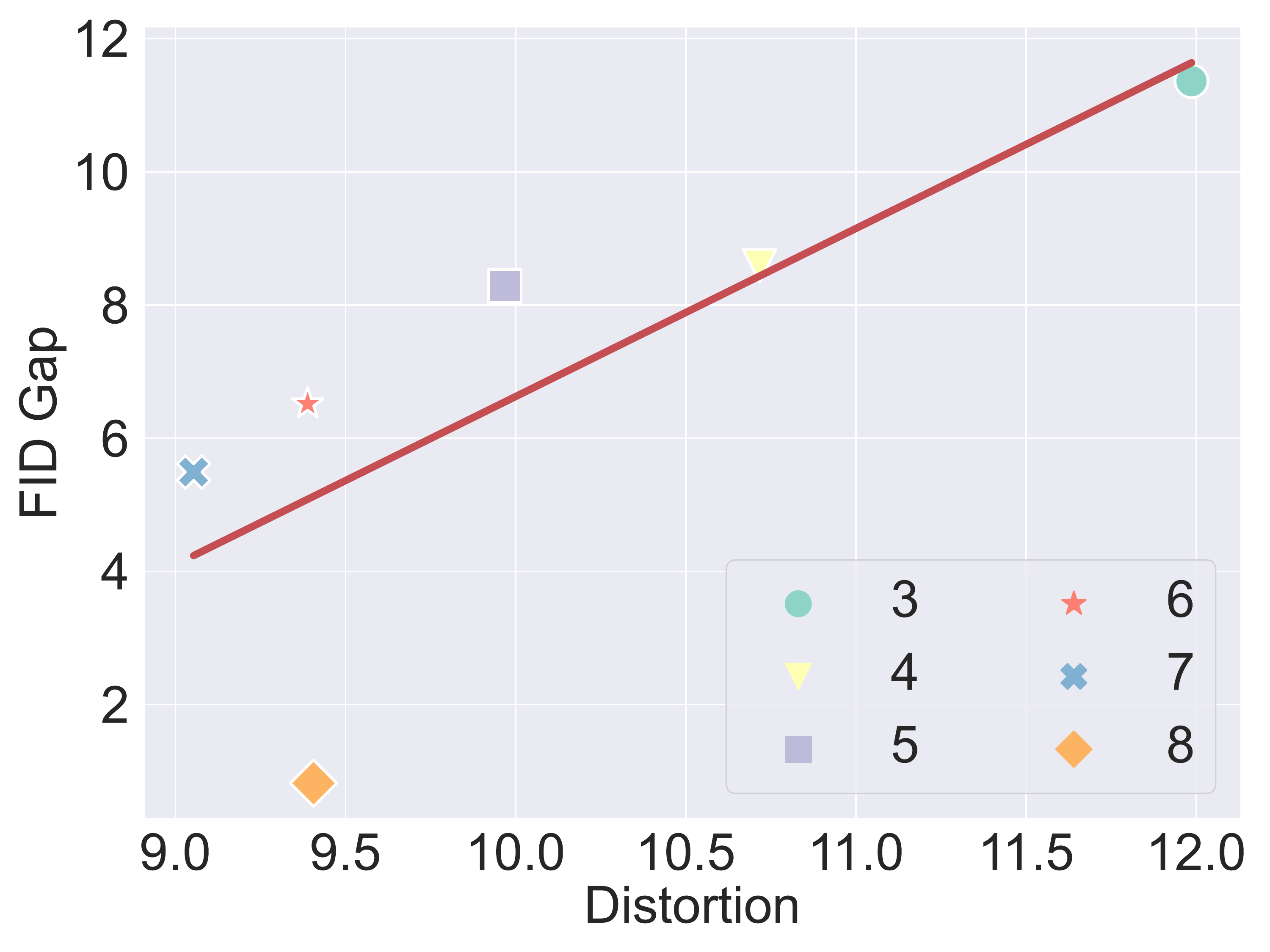}
    \caption{$\theta_{pre}=0.0005$, $\rho=0.777$}
    \end{subfigure}
    \quad
    \begin{subfigure}[b]{0.4\columnwidth}
    \centering
    \includegraphics[width=\columnwidth]{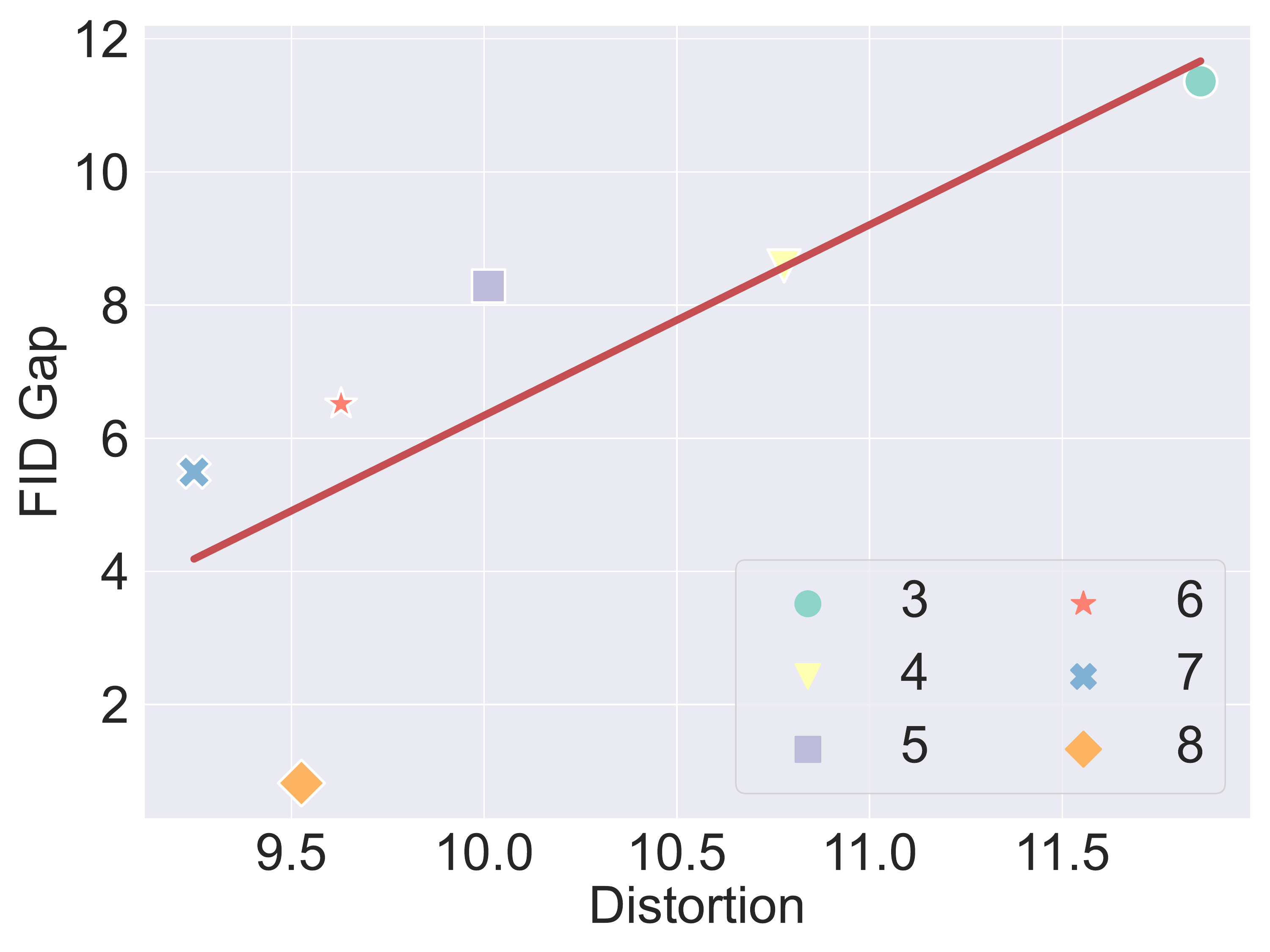}
    \caption{$\theta_{pre}=0.001$, $\rho=0.786$}
    \end{subfigure}
    \\
    \begin{subfigure}[b]{0.4\columnwidth}
    \centering
    \includegraphics[width=\columnwidth]{figure/FID/geodesic_fid_StyleGAN2-e_ptb_3_0.005_corr_0.8105834671788595_fix.pdf}
    \caption{$\theta_{pre}=0.005$, $\rho=0.811$}
    \end{subfigure}
    \quad
    \begin{subfigure}[b]{0.4\columnwidth}
    \centering
    \includegraphics[width=\columnwidth]{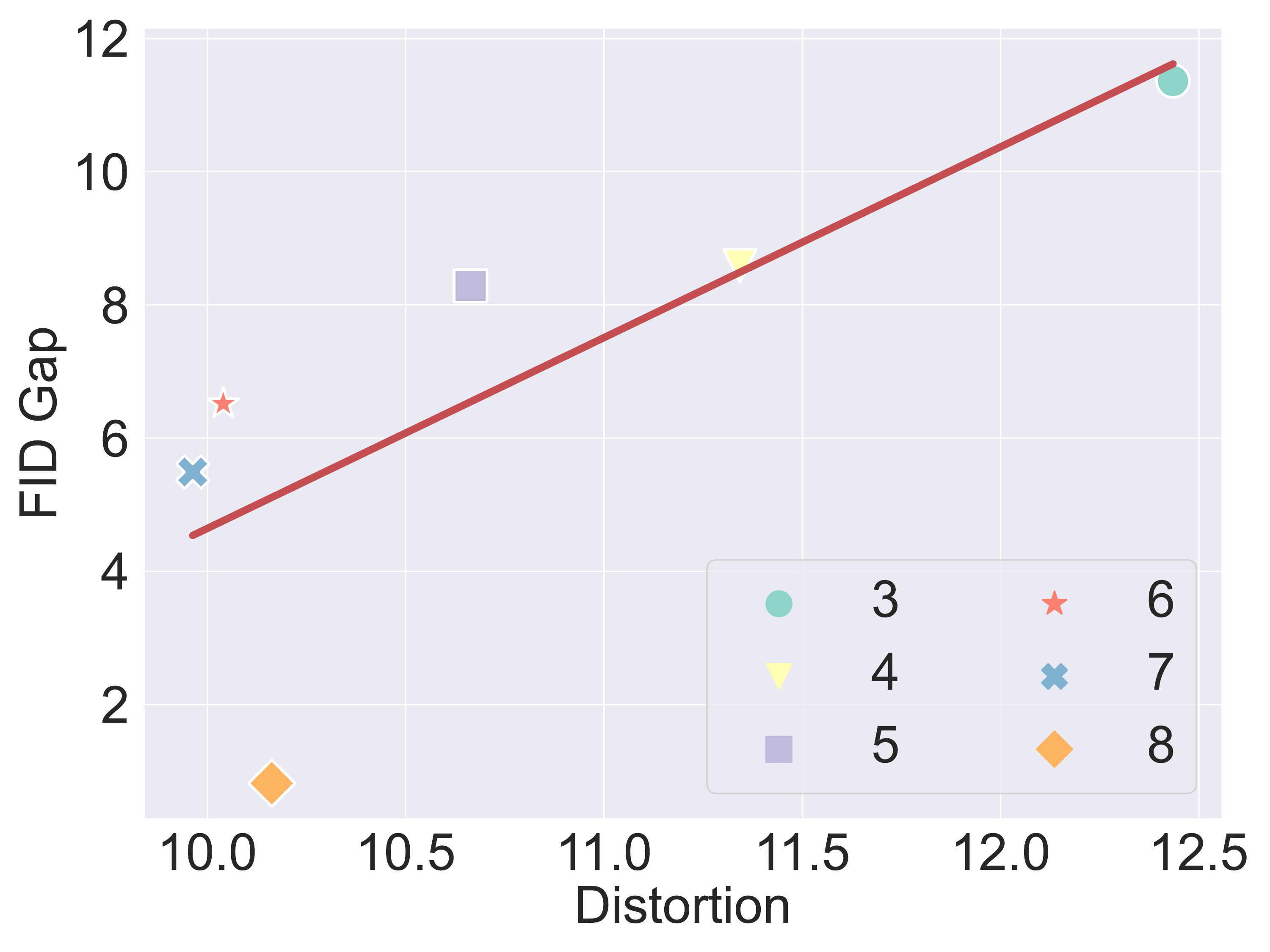}
    \caption{$\theta_{pre}=0.01$, $\rho=0.775$}
    \end{subfigure}
    \caption{ 
    \textbf{Correlation between Distortion metric and FID gap} of StyleGAN2 with config E on FFHQ.}
    \label{fig:Dist_fid_app_style-e}
\end{figure}

\begin{figure}[t]
    \centering
    \begin{subfigure}[b]{0.4\columnwidth}
    \centering
    \includegraphics[width=\columnwidth]{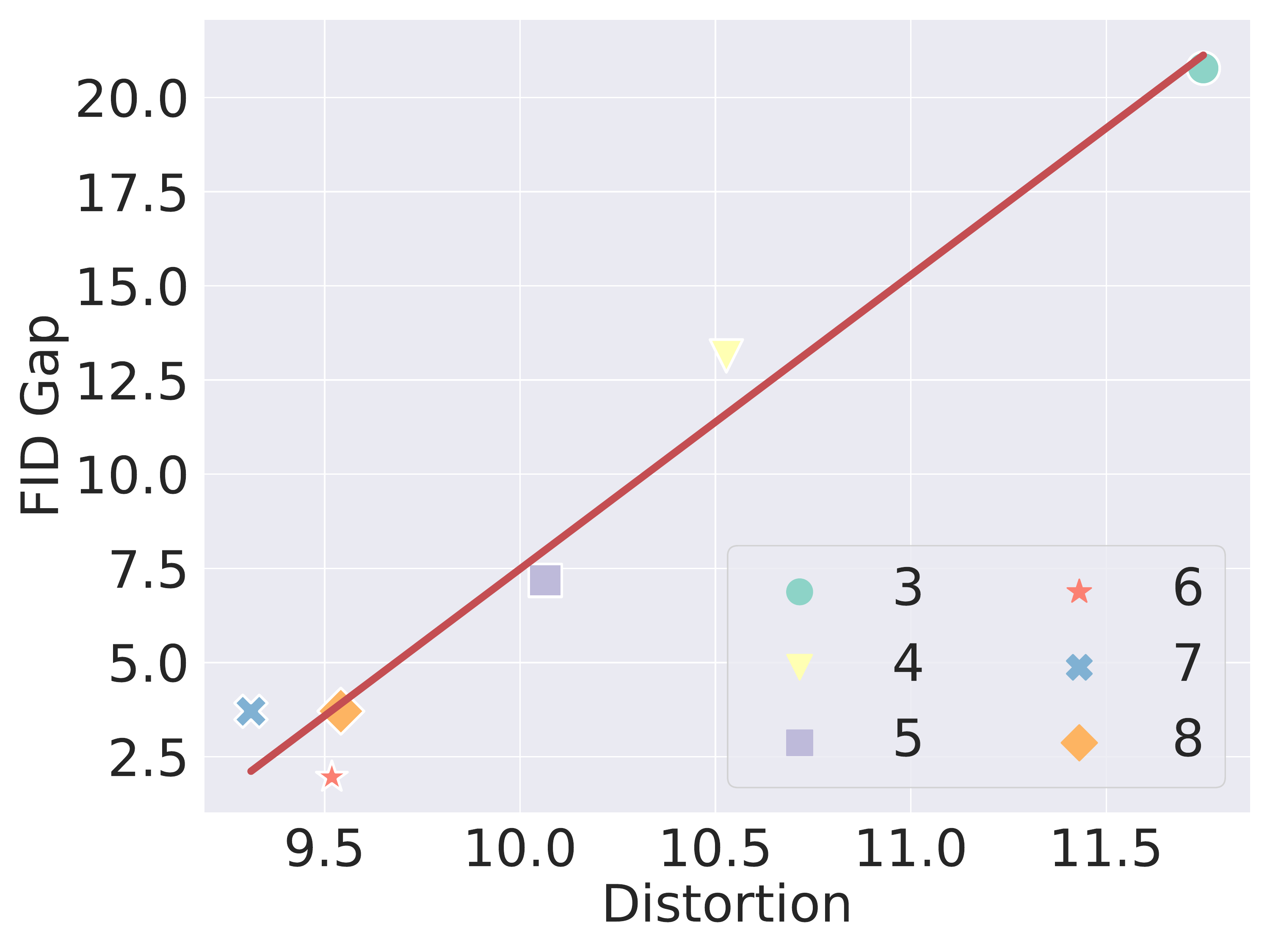}
    \caption{$\theta_{pre}=0.0005$, $\rho=0.98$}
    \end{subfigure}
    \quad
    \begin{subfigure}[b]{0.4\columnwidth}
    \centering
    \includegraphics[width=\columnwidth]{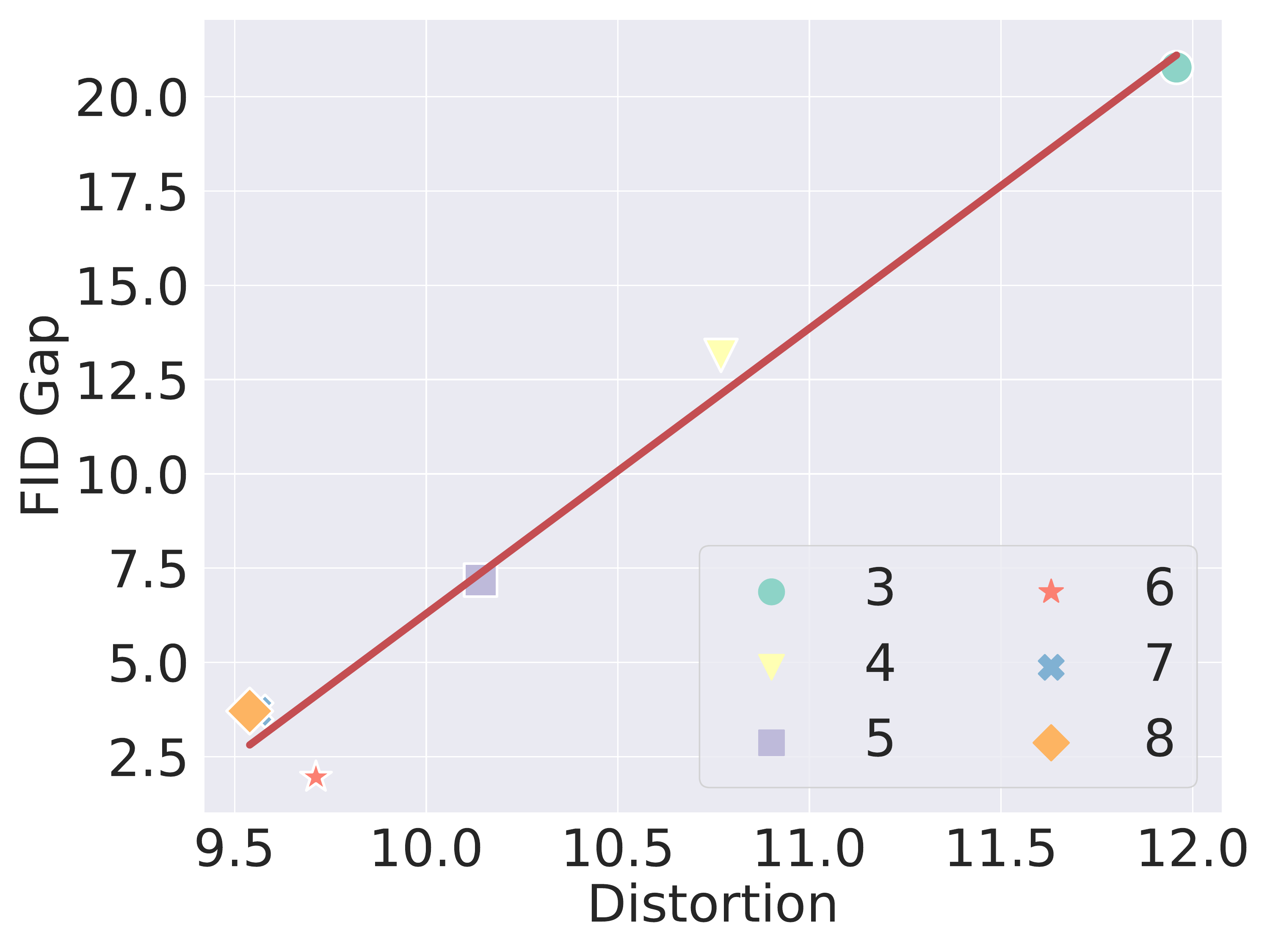}
    \caption{$\theta_{pre}=0.001$, $\rho=0.986$}
    \end{subfigure}
    \\
    \begin{subfigure}[b]{0.4\columnwidth}
    \centering
    \includegraphics[width=\columnwidth]{figure/FID/geodesic_fid_StyleGAN2-cat_ptb_5_0.005_corr_0.9784764908714002.pdf}
    \caption{$\theta_{pre}=0.005$, $\rho=0.978$}
    \end{subfigure}
    \quad
    \begin{subfigure}[b]{0.4\columnwidth}
    \centering
    \includegraphics[width=\columnwidth]{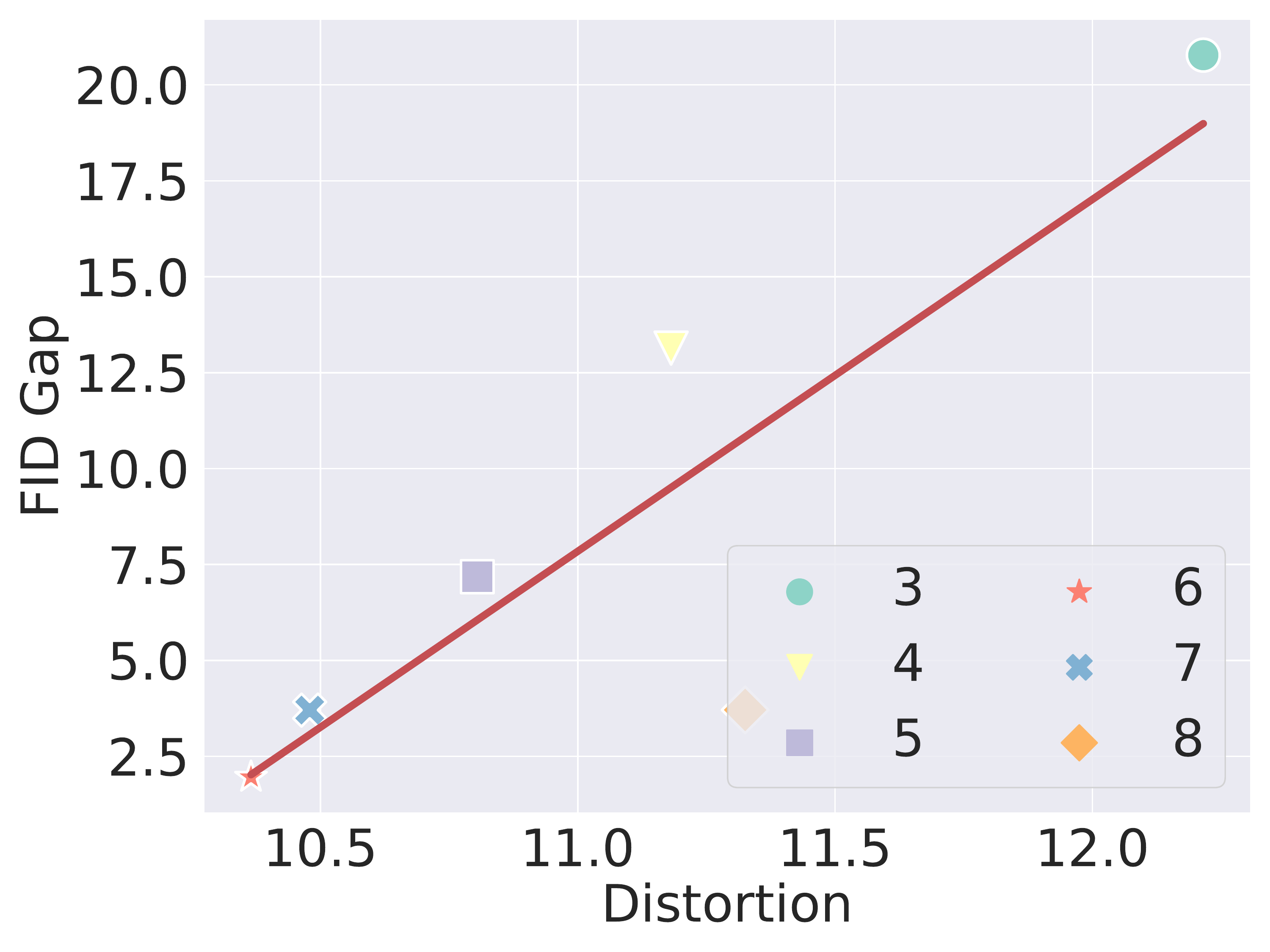}
    \caption{$\theta_{pre}=0.01$, $\rho=0.859$}
    \end{subfigure}
    \caption{ 
    \textbf{Correlation between Distortion metric and FID gap} of StyleGAN2 on LSUN Cat \cite{yu15lsun}. 
    }
    \label{fig:Dist_fid_app_style-cat}
\end{figure}

\begin{figure}[t]
    \centering
    \begin{subfigure}[b]{0.4\columnwidth}
    \centering
    \includegraphics[width=\columnwidth]{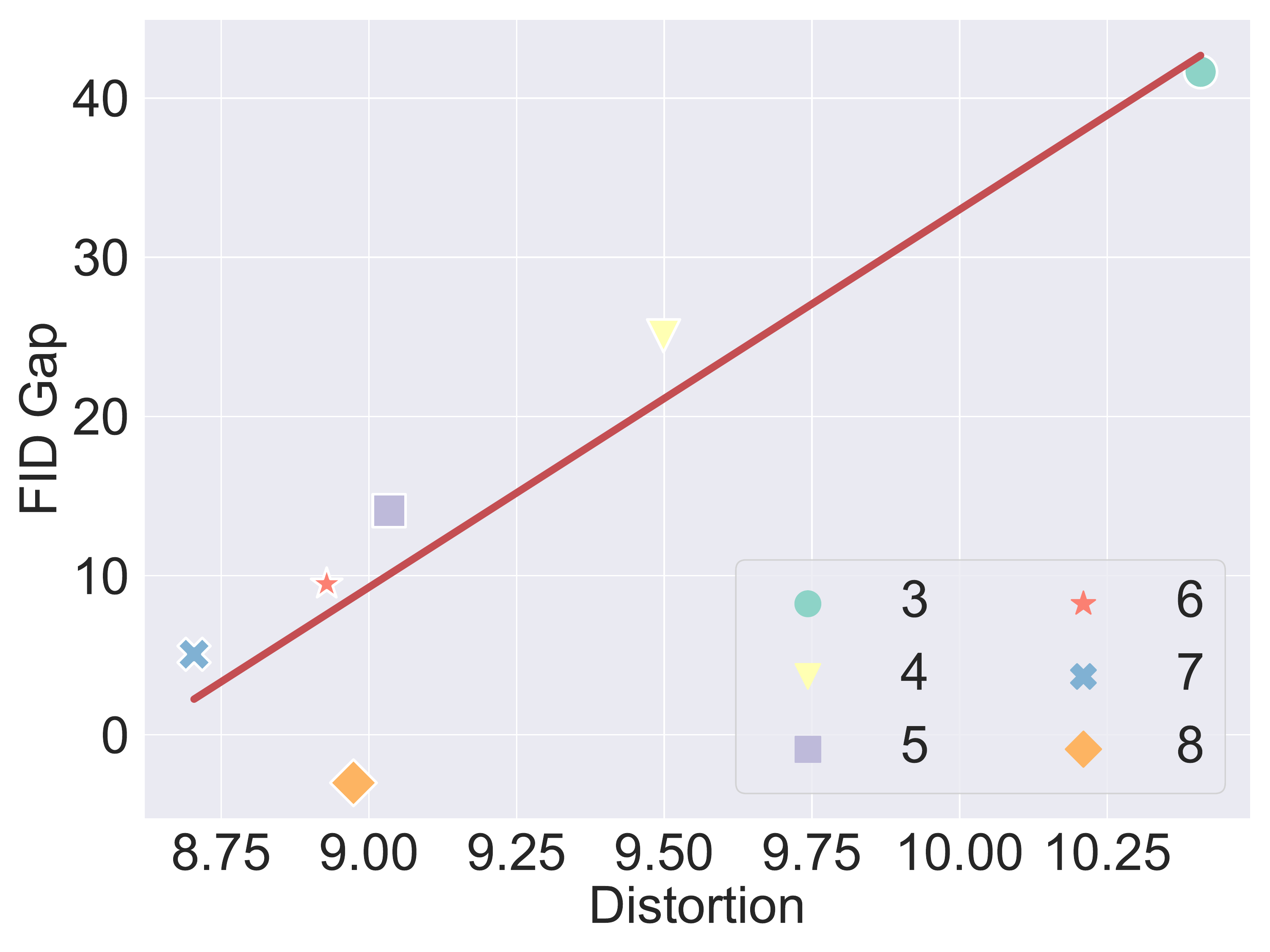}
    \caption{$\theta_{pre}=0.00005$, $\rho=0.926$}
    \end{subfigure}
    \quad
    \begin{subfigure}[b]{0.4\columnwidth}
    \centering
    \includegraphics[width=\columnwidth]{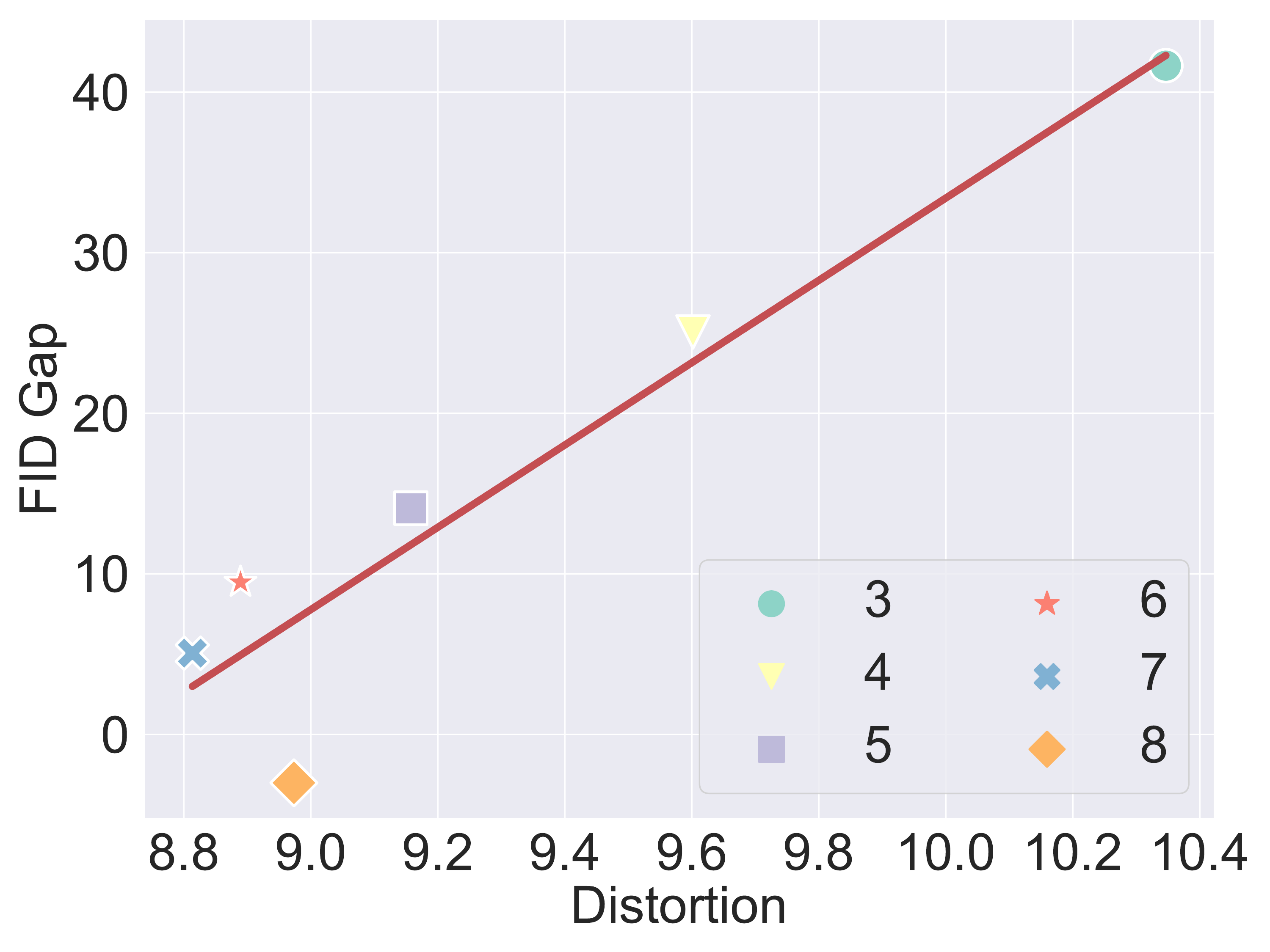}
    \caption{$\theta_{pre}=0.0001$, $\rho=0.945$}
    \end{subfigure}
    \\
    \begin{subfigure}[b]{0.4\columnwidth}
    \centering
    \includegraphics[width=\columnwidth]{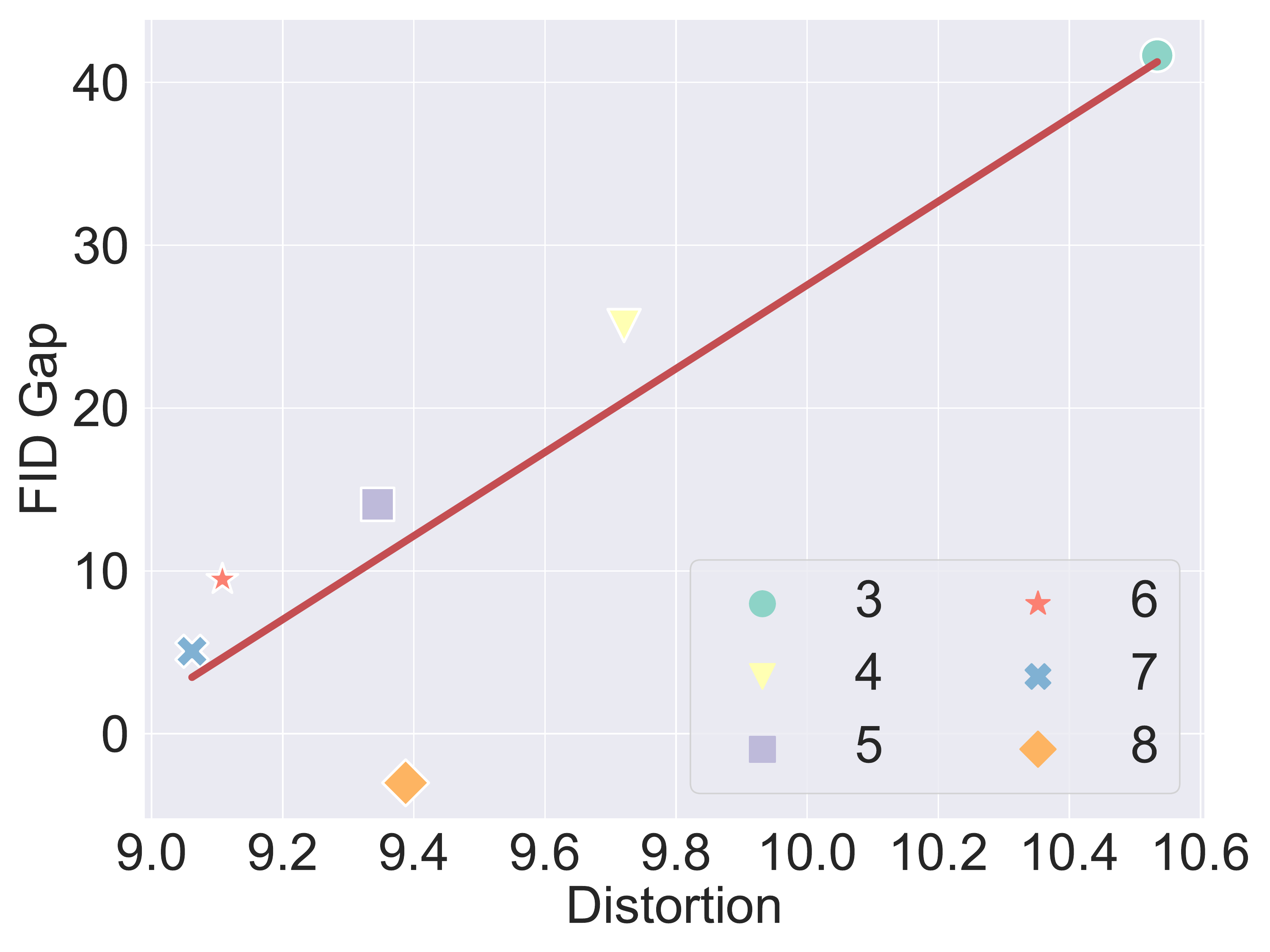}
    \caption{$\theta_{pre}=0.0005$, $\rho=0.883$}
    \end{subfigure}
    \quad
    \begin{subfigure}[b]{0.4\columnwidth}
    \centering
    \includegraphics[width=\columnwidth]{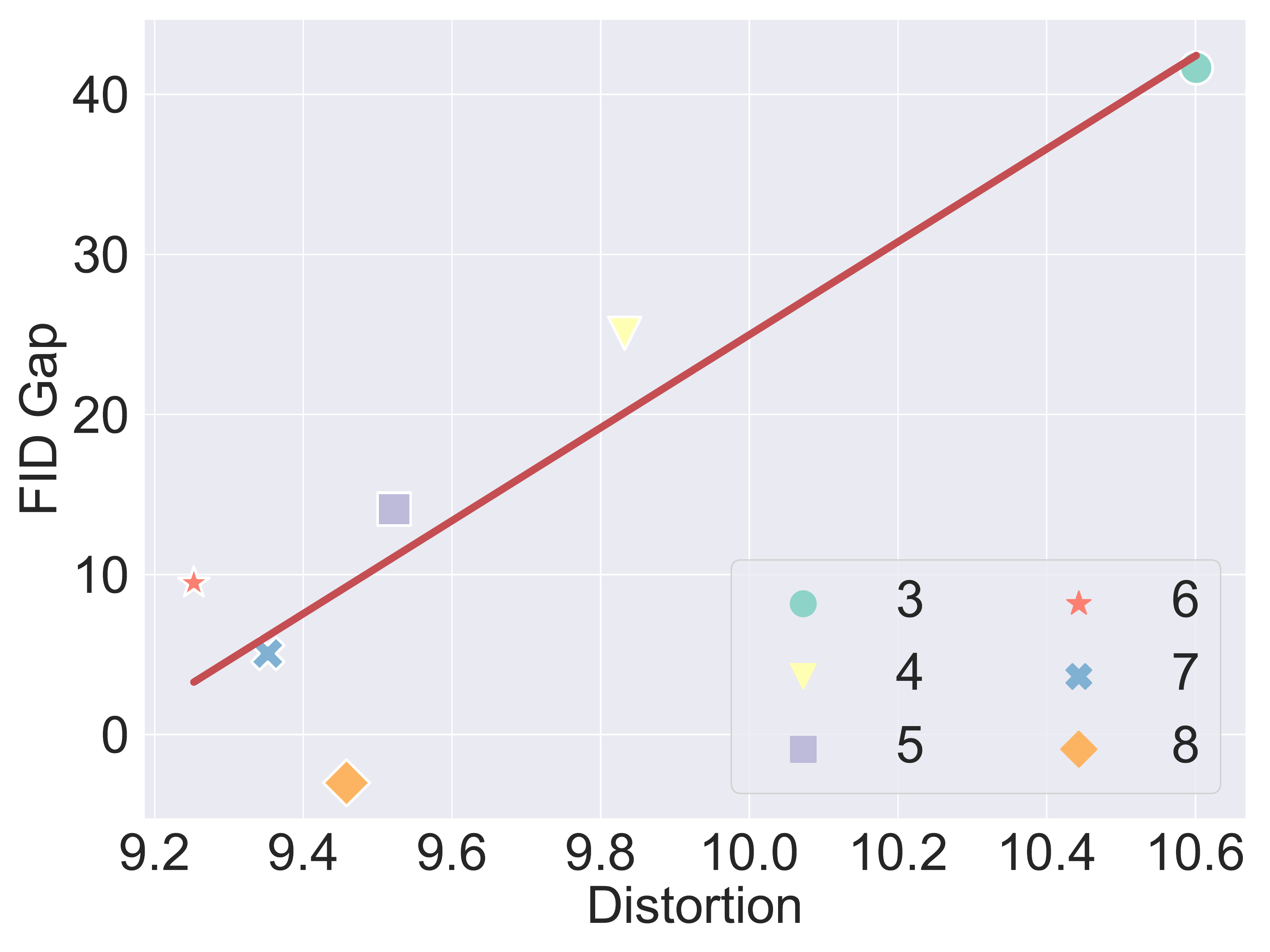}
    \caption{$\theta_{pre}=0.001$, $\rho=0.907$}
    \end{subfigure}
    \caption{ 
    \textbf{Correlation between Distortion metric and FID gap} of StyleGAN2 on LSUN Horse. }
    \label{fig:Dist_fid_app_style-horse}
\end{figure}

\begin{figure}[t]
    \centering
    \begin{subfigure}[b]{0.4\columnwidth}
    \centering
    \includegraphics[width=\columnwidth]{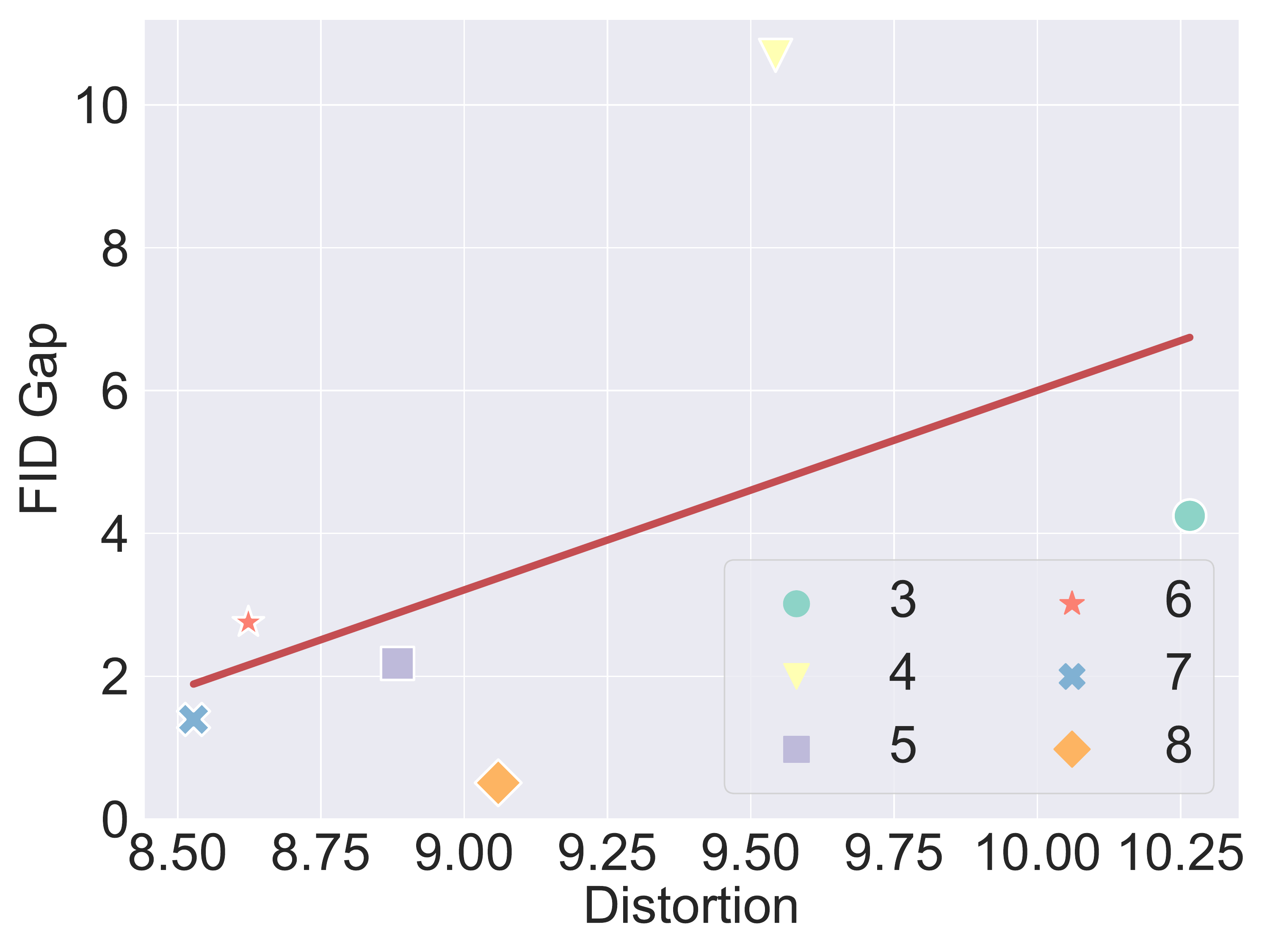}
    \caption{$\theta_{pre}=0.00005$, $\rho=0.496$}
    \end{subfigure}
    \quad
    \begin{subfigure}[b]{0.4\columnwidth}
    \centering
    \includegraphics[width=\columnwidth]{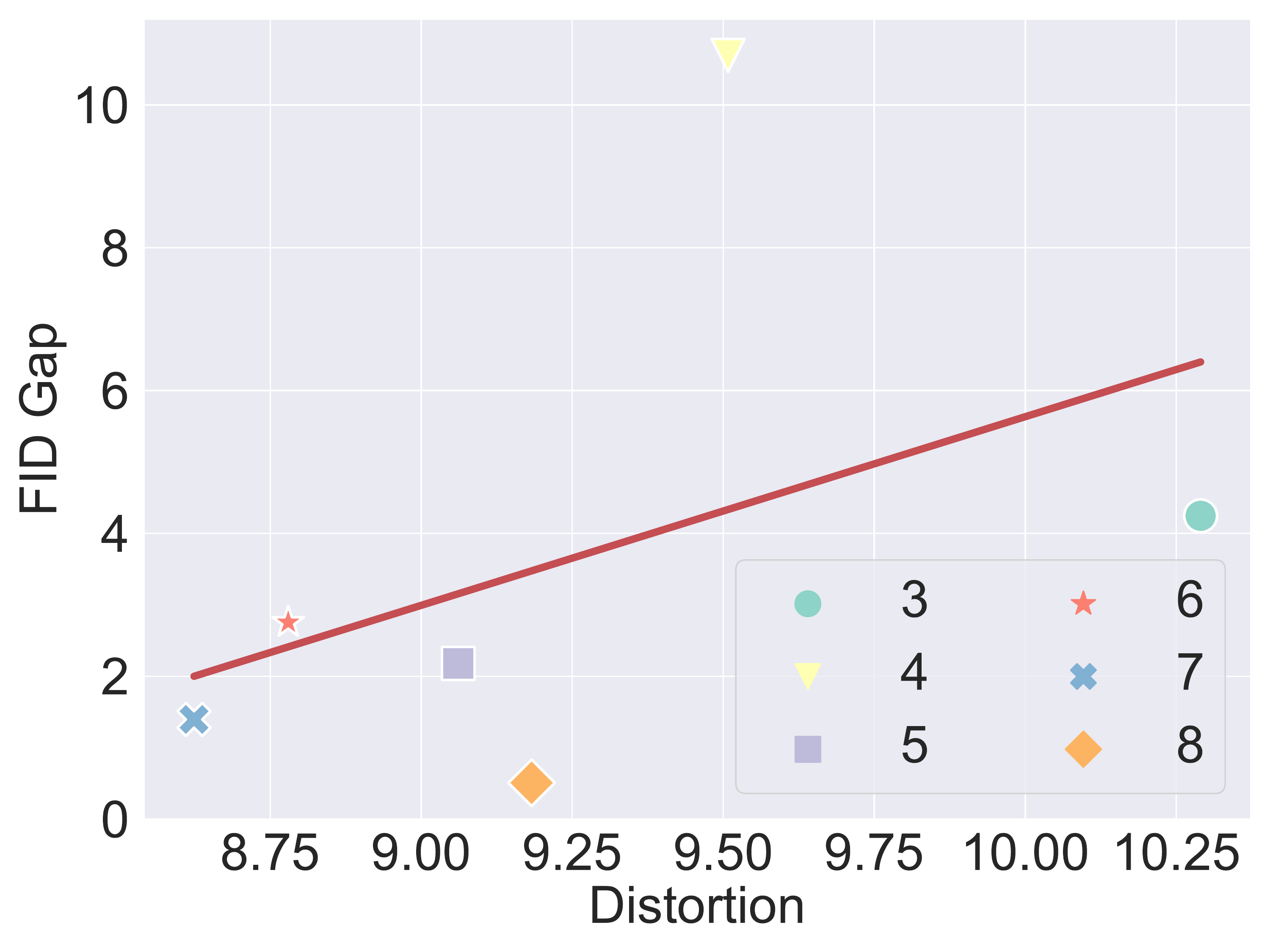}
    \caption{$\theta_{pre}=0.0001$, $\rho=0.430$}
    \end{subfigure}
    \\
    \begin{subfigure}[b]{0.4\columnwidth}
    \centering
    \includegraphics[width=\columnwidth]{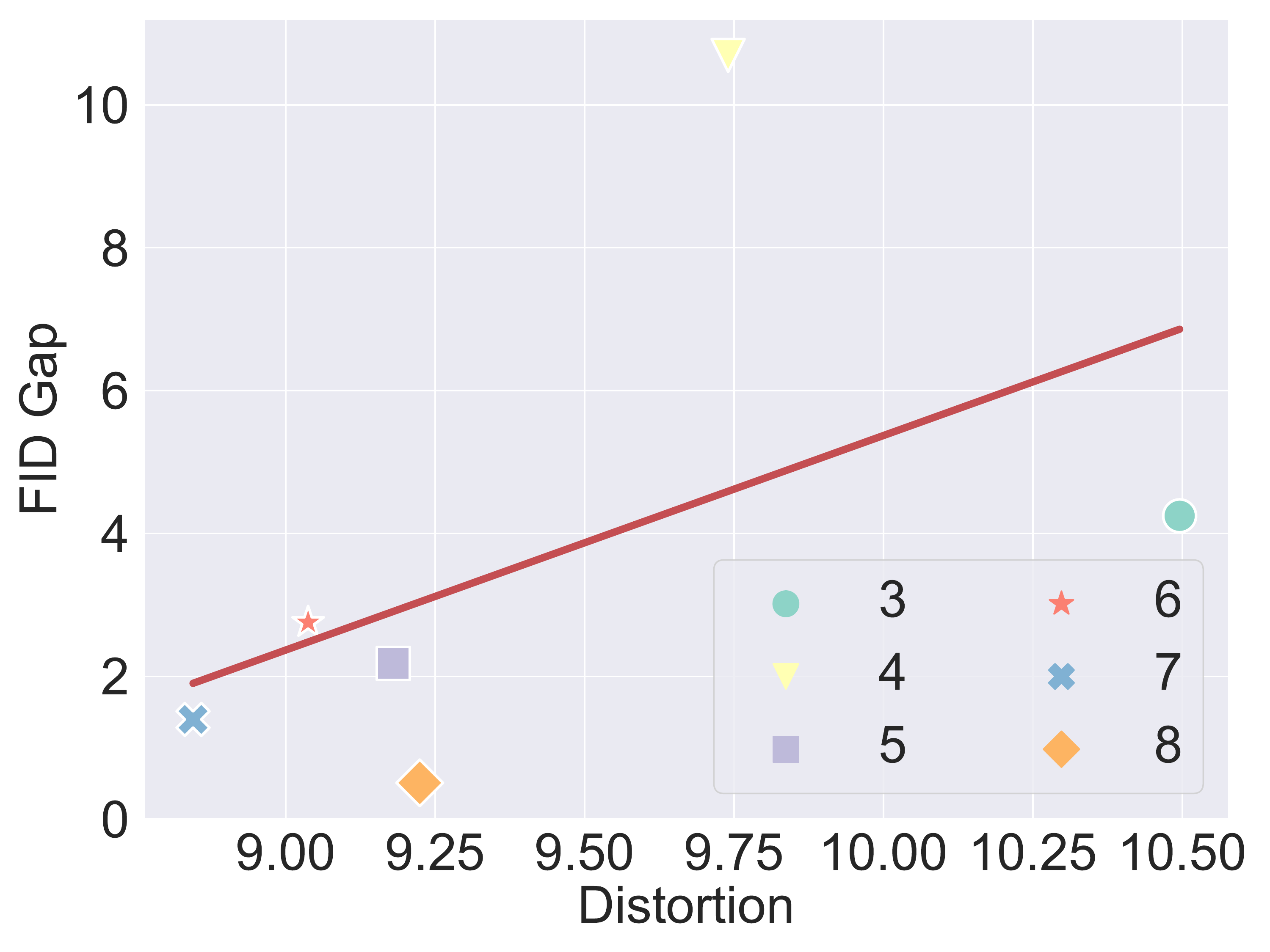}
    \caption{$\theta_{pre}=0.0005$, $\rho=0.494$}
    \end{subfigure}
    \quad
    \begin{subfigure}[b]{0.4\columnwidth}
    \centering
    \includegraphics[width=\columnwidth]{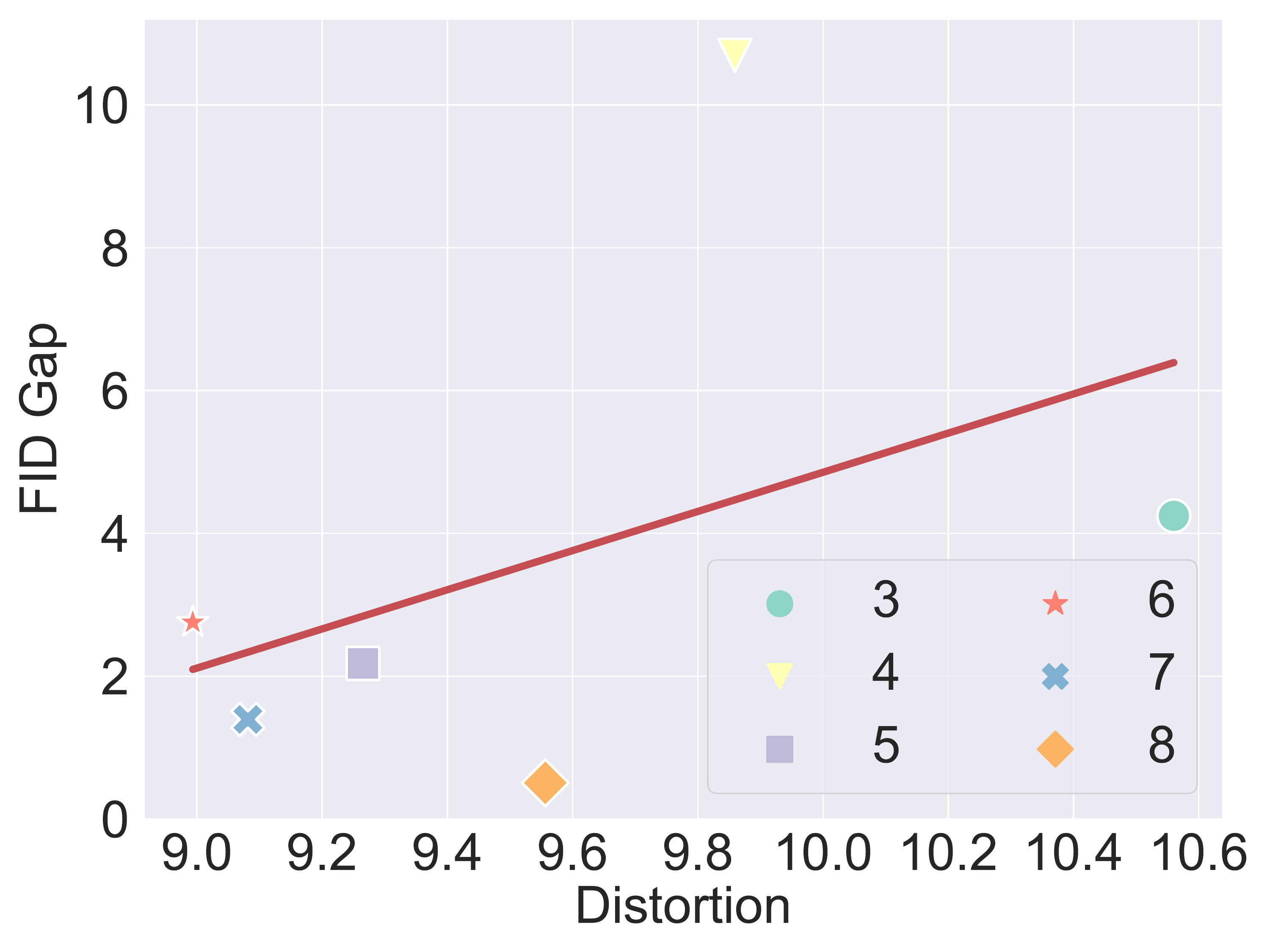}
    \caption{$\theta_{pre}=0.001$, $\rho=0.437$}
    \end{subfigure}
    \caption{ 
    \textbf{Correlation between Distortion metric and FID gap} of StyleGAN2 on LSUN Church.}
\end{figure}

\begin{figure}[t]
    \centering
    \begin{subfigure}[b]{0.4\columnwidth}
    \centering
    \includegraphics[width=\columnwidth]{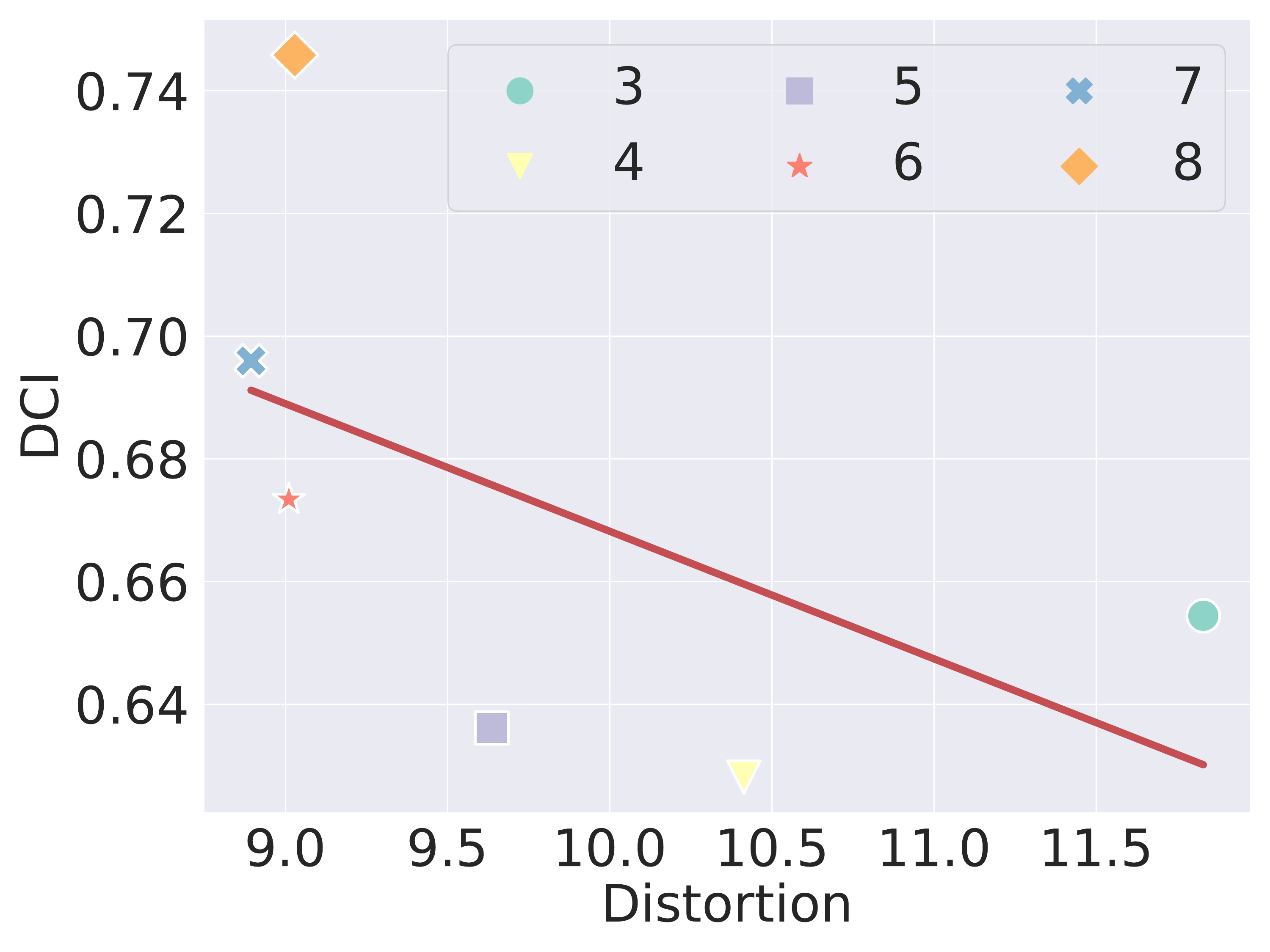}
    \caption{$\theta_{pre}=0.001$, $\rho=-0.545$}
    \end{subfigure}
    \quad
    \begin{subfigure}[b]{0.4\columnwidth}
    \centering
    \includegraphics[width=\columnwidth]{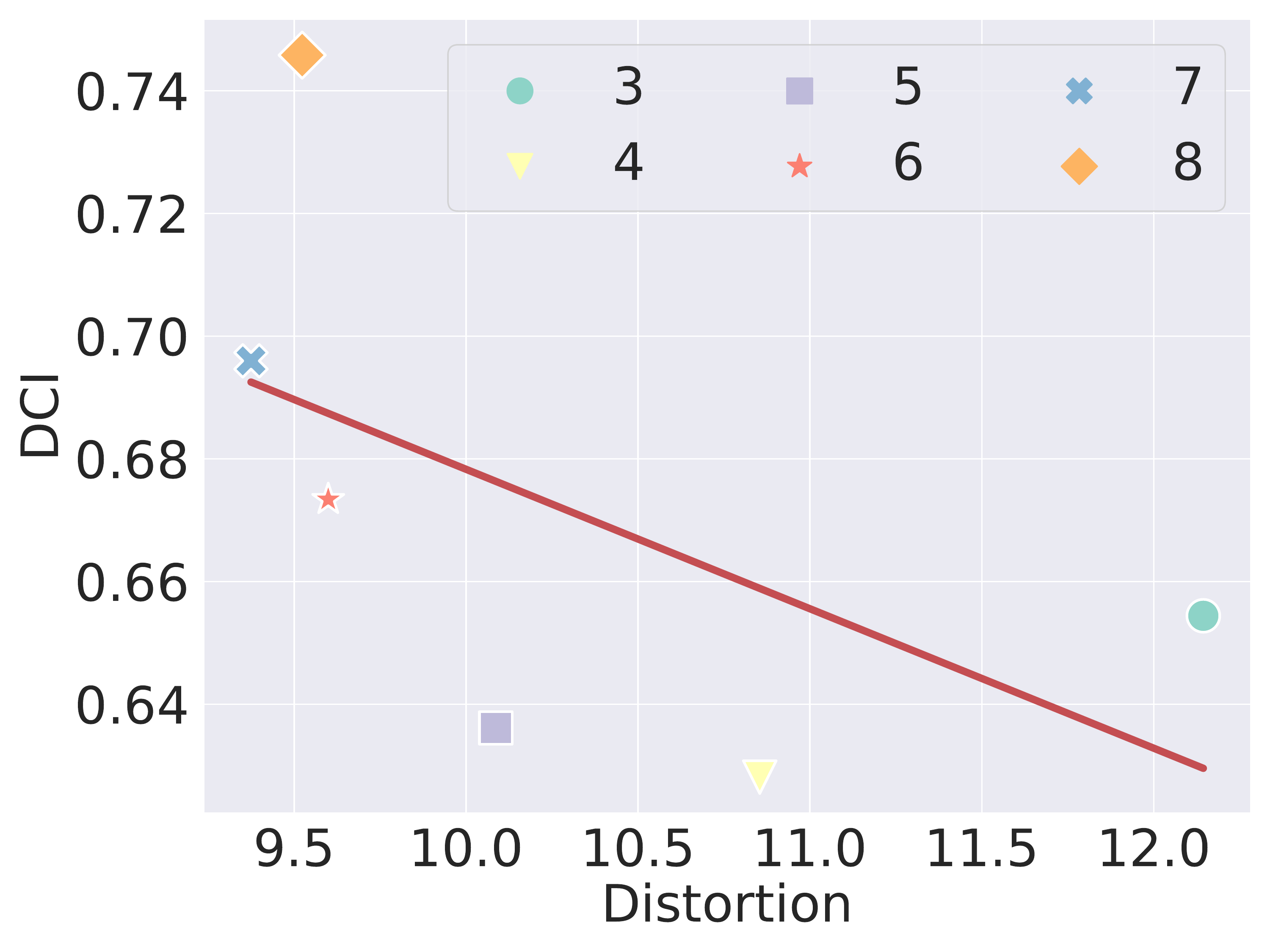}
    \caption{$\theta_{pre}=0.01$, $\rho=-0.555$}
    \end{subfigure} 
    \\
    \begin{subfigure}[b]{0.4\columnwidth}
    \centering
    \includegraphics[width=\columnwidth]{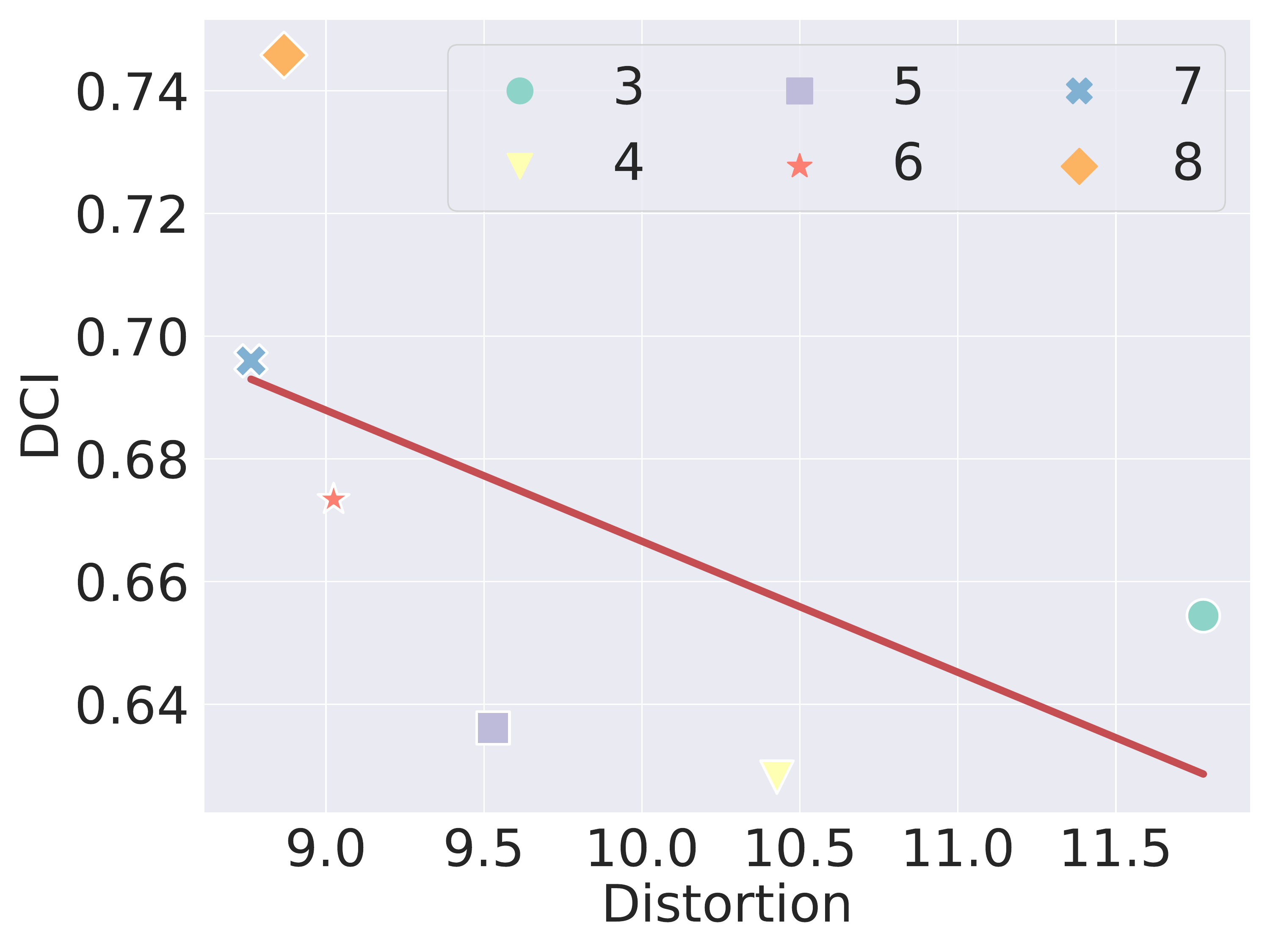}
    \caption{$\theta_{pre}=0.0005$, $\rho=-0.574$}
    \end{subfigure} 
    \quad
    \begin{subfigure}[b]{0.4\columnwidth}
    \centering
    \includegraphics[width=\columnwidth]{figure/DCI/DCI_0.005_corr_-0.5690857100346843.pdf}
    \caption{$\theta_{pre}=0.005$, $\rho=-0.569$}
    \end{subfigure} 
    \caption{
    \textbf{Correlation between Distortion metric and DCI} of StyleGAN2 on FFHQ.
    Each DCI \citep{DCIMetric} score is evaluated for 10k samples of generated images, while the attribute label is generated by 40 attribute classifiers pre-trained on CelebA \citep{CelebA}. As in Fig \ref{fig:Dist_fid_app}, each point and red-line represents the intermediate layers and linear regression, respectively. The Distortion and DCI score show a negative correlation.
    }
\end{figure}

\begin{figure}[t]
    \centering
    \begin{subfigure}[b]{0.4\columnwidth}
    \centering
    \includegraphics[width=\columnwidth]{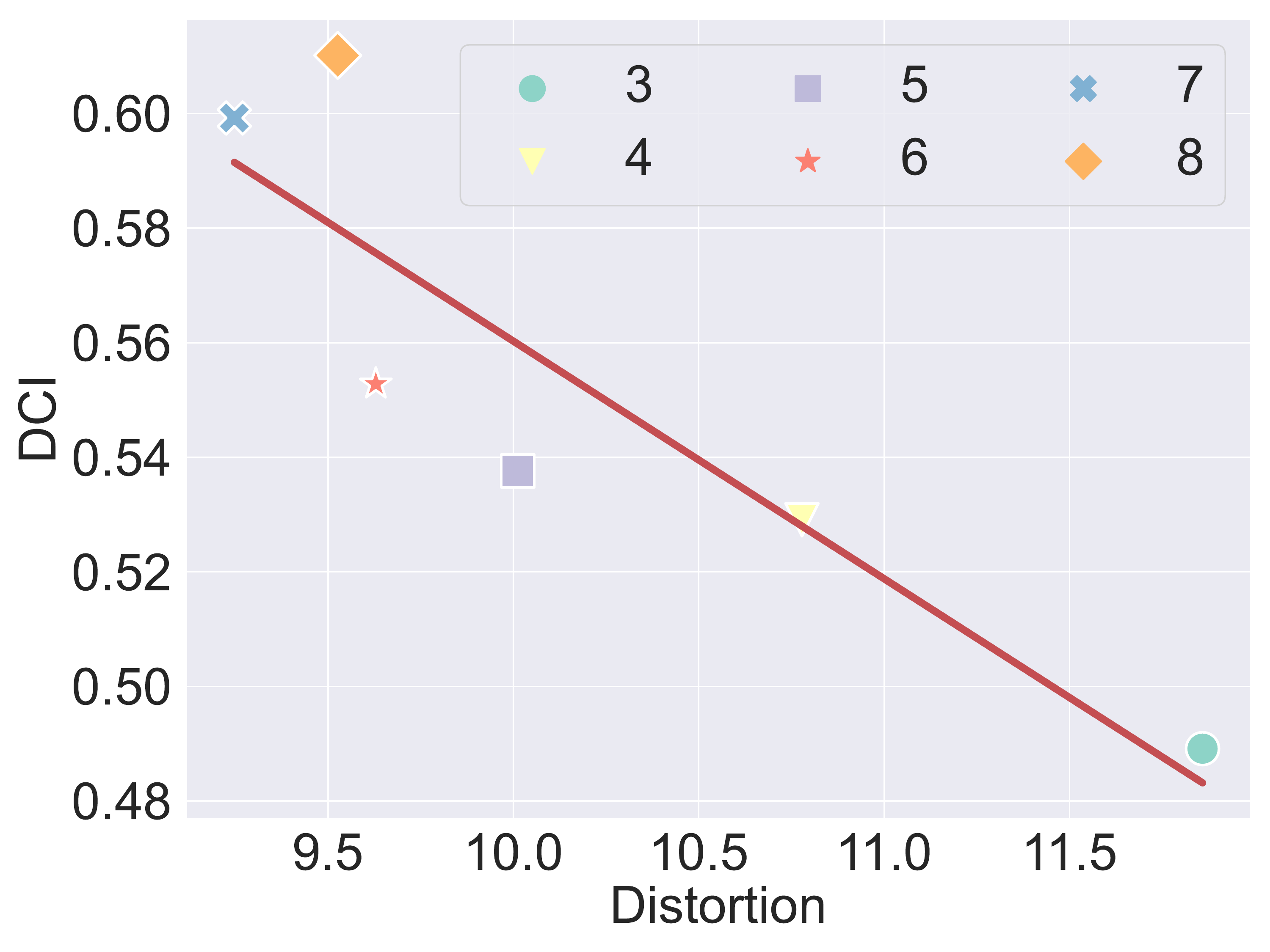}
    \caption{$\theta_{pre}=0.001$, $\rho=-0.896$}
    \end{subfigure}
    \quad
    \begin{subfigure}[b]{0.4\columnwidth}
    \centering
    \includegraphics[width=\columnwidth]{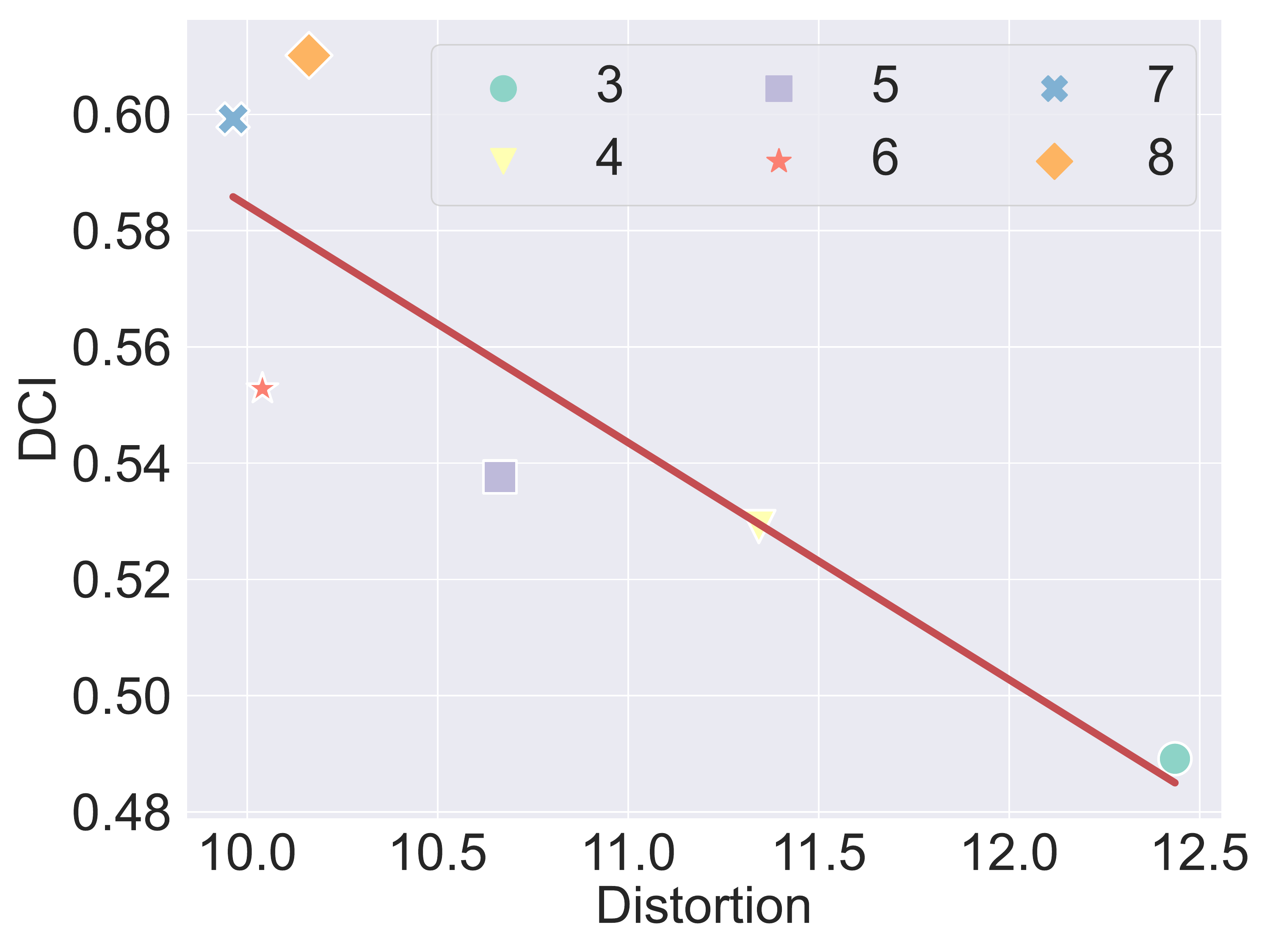}
    \caption{$\theta_{pre}=0.01$, $\rho=-0.868$}
    \end{subfigure} 
    \\
    \begin{subfigure}[b]{0.4\columnwidth}
    \centering
    \includegraphics[width=\columnwidth]{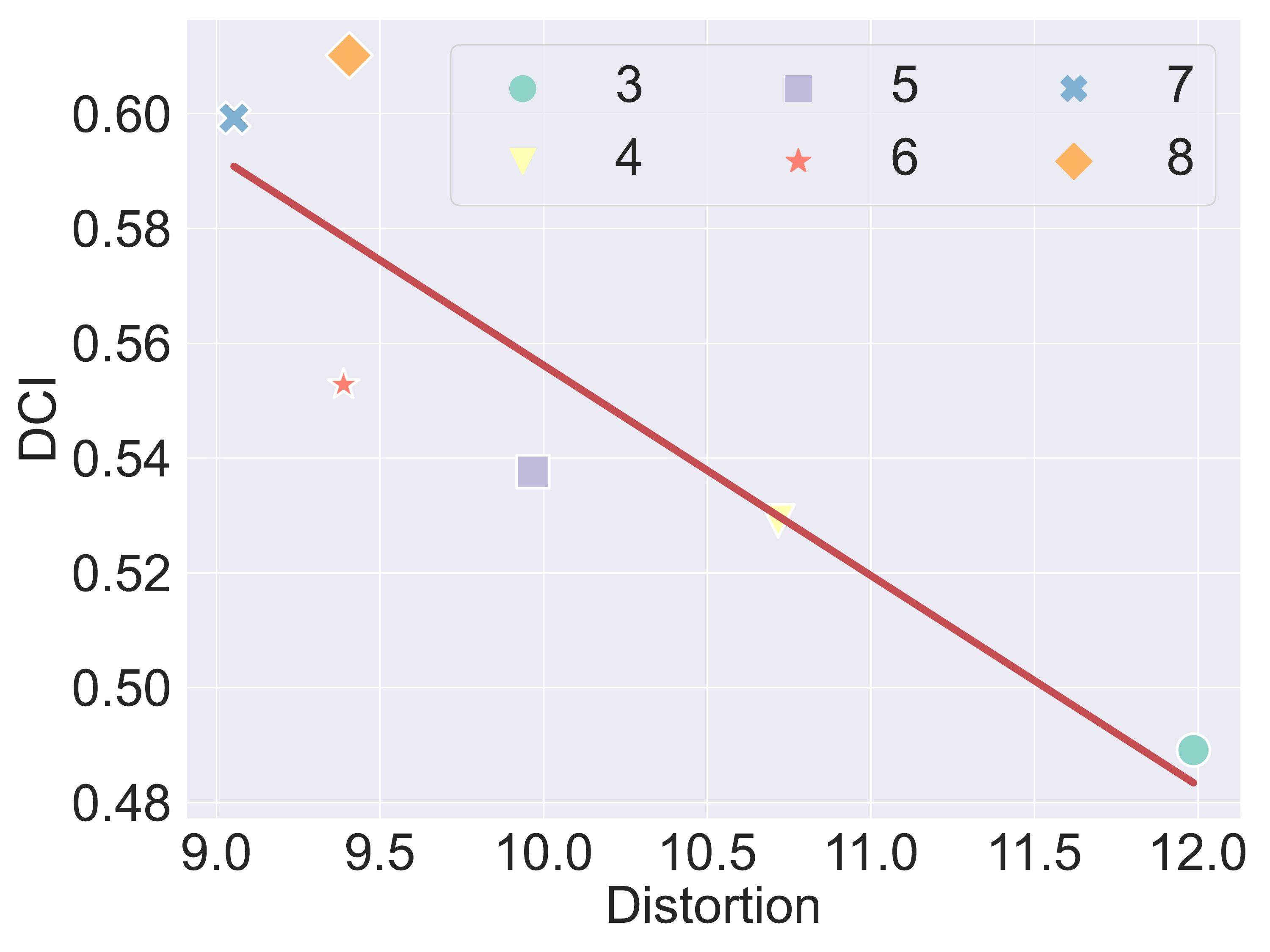}
    \caption{$\theta_{pre}=0.0005$, $\rho=-0.887$}
    \end{subfigure} 
    \quad
    \begin{subfigure}[b]{0.4\columnwidth}
    \centering
    \includegraphics[width=\columnwidth]{figure/DCI/geodesic_DCI_StyleGAN2-e_0.005_corr_-0.9105663103655632.pdf}
    \caption{$\theta_{pre}=0.005$, $\rho=-0.910$}
    \end{subfigure} 
    \caption{
    \textbf{Correlation between Distortion metric and DCI} of StyleGAN2 with config E on FFHQ.
    }
\end{figure}

\begin{figure}[t]
    \centering
    \begin{subfigure}[b]{0.4\columnwidth}
    \centering
    \includegraphics[width=\columnwidth]{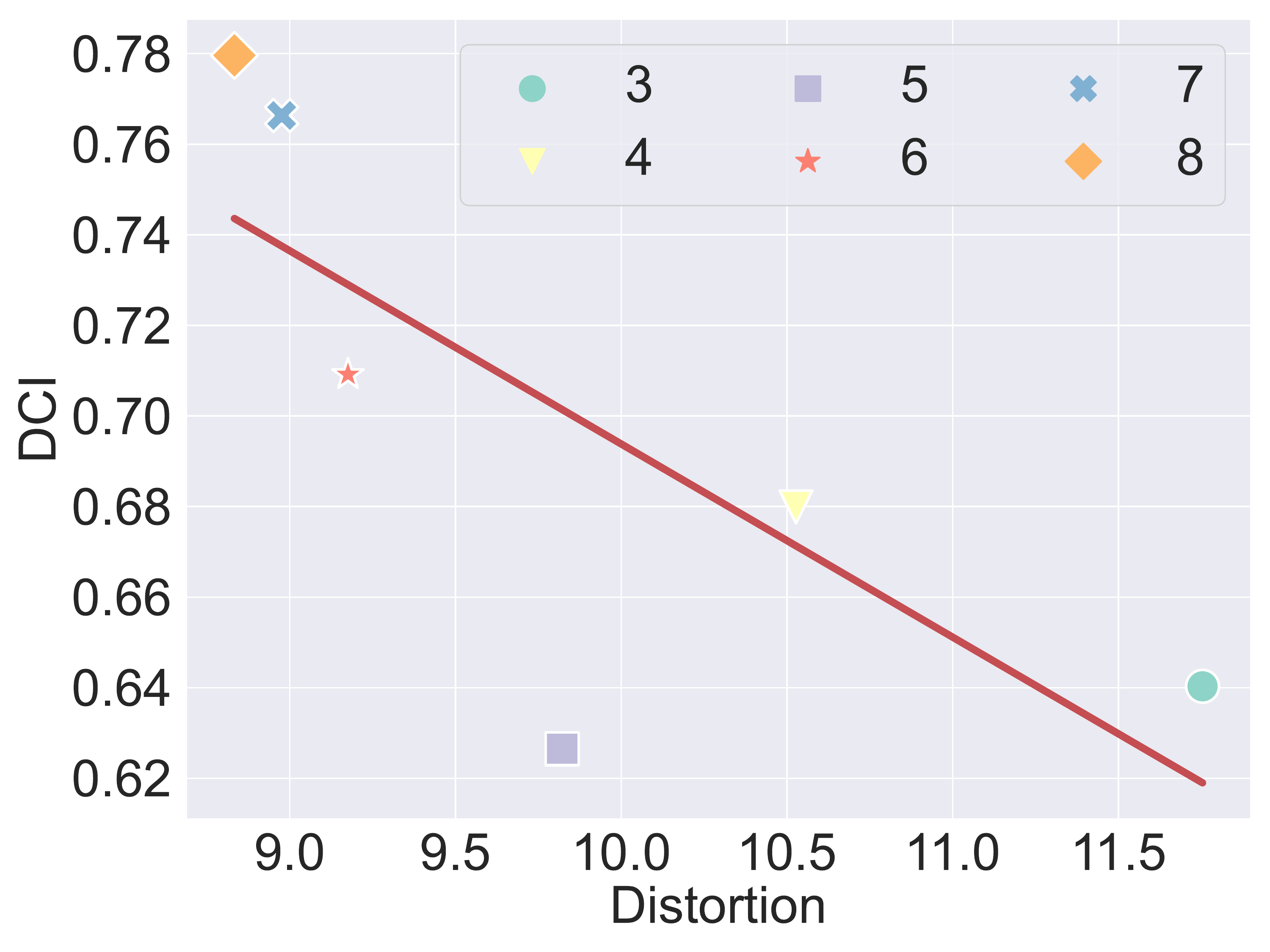}
    \caption{$\theta_{pre}=0.001$, $\rho=-0.756$}
    \end{subfigure}
    \quad
    \begin{subfigure}[b]{0.4\columnwidth}
    \centering
    \includegraphics[width=\columnwidth]{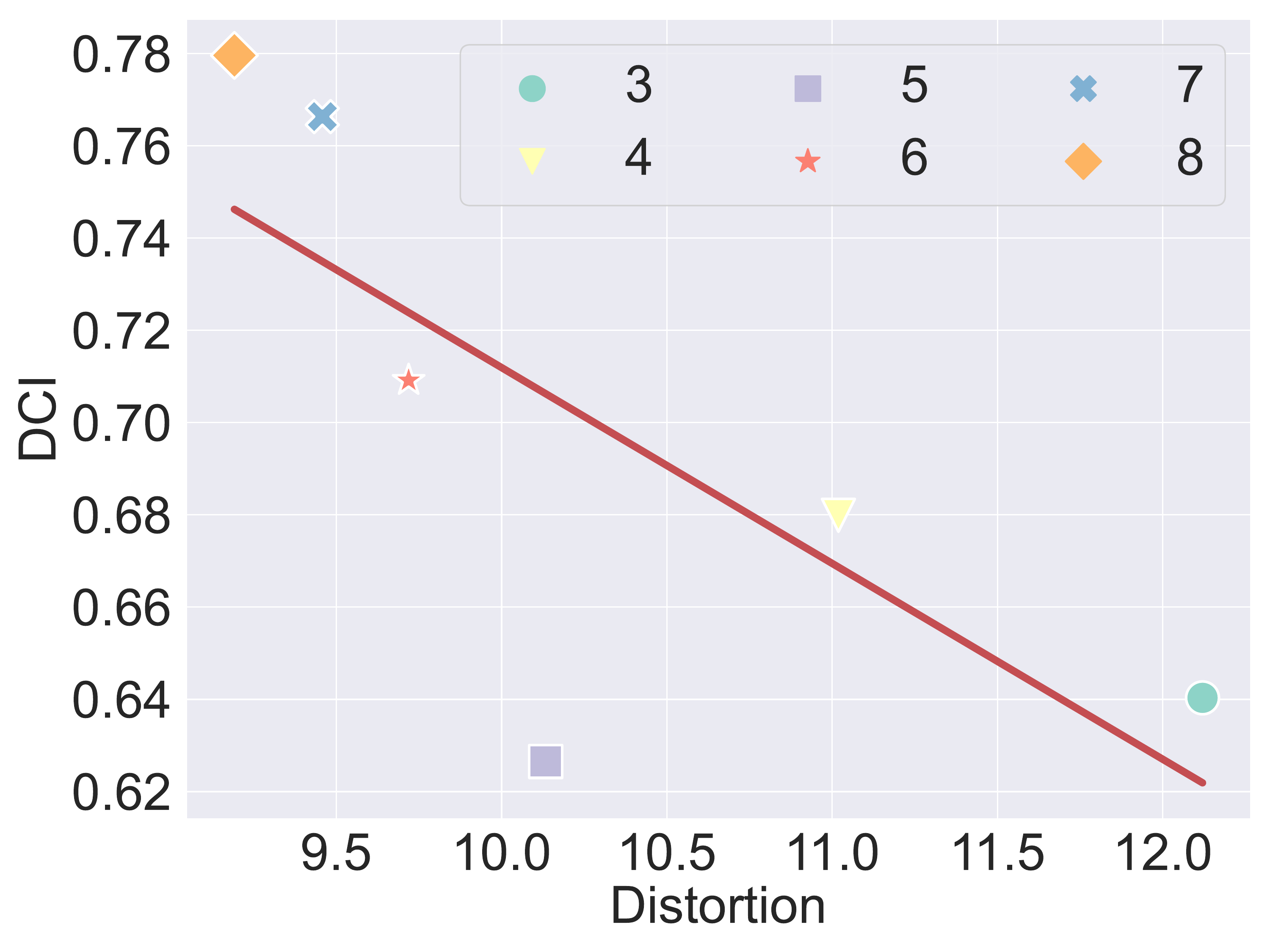}
    \caption{$\theta_{pre}=0.01$, $\rho=-0.740$}
    \end{subfigure} 
    \\
    \begin{subfigure}[b]{0.4\columnwidth}
    \centering
    \includegraphics[width=\columnwidth]{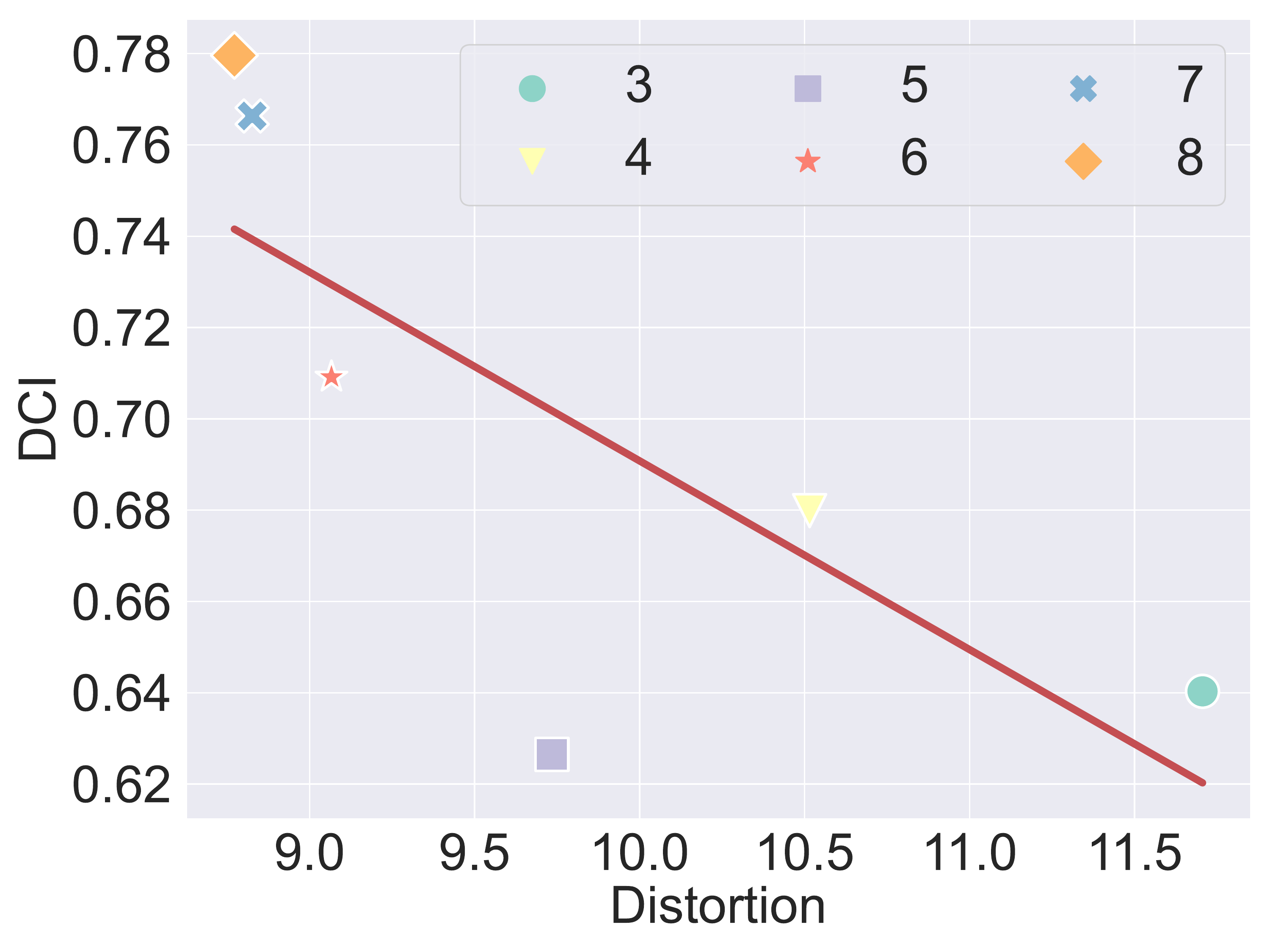}
    \caption{$\theta_{pre}=0.0005$, $\rho=-0.752$}
    \end{subfigure} 
    \quad
    \begin{subfigure}[b]{0.4\columnwidth}
    \centering
    \includegraphics[width=\columnwidth]{figure/DCI/geodesic_DCI_StyleGAN1_0.005_corr_-0.742038540929386.pdf}
    \caption{$\theta_{pre}=0.005$, $\rho=-0.742$}
    \end{subfigure} 
    \caption{
    \textbf{Correlation between Distortion metric and DCI} of StyleGAN1 on FFHQ.
    }
\end{figure}

\clearpage
\section{Additional Experimental results} \label{sec:additional_results}
\begin{figure}[h]
    \centering
    \begin{subfigure}[b]{0.4\columnwidth}
    \centering
    \includegraphics[width=\columnwidth]{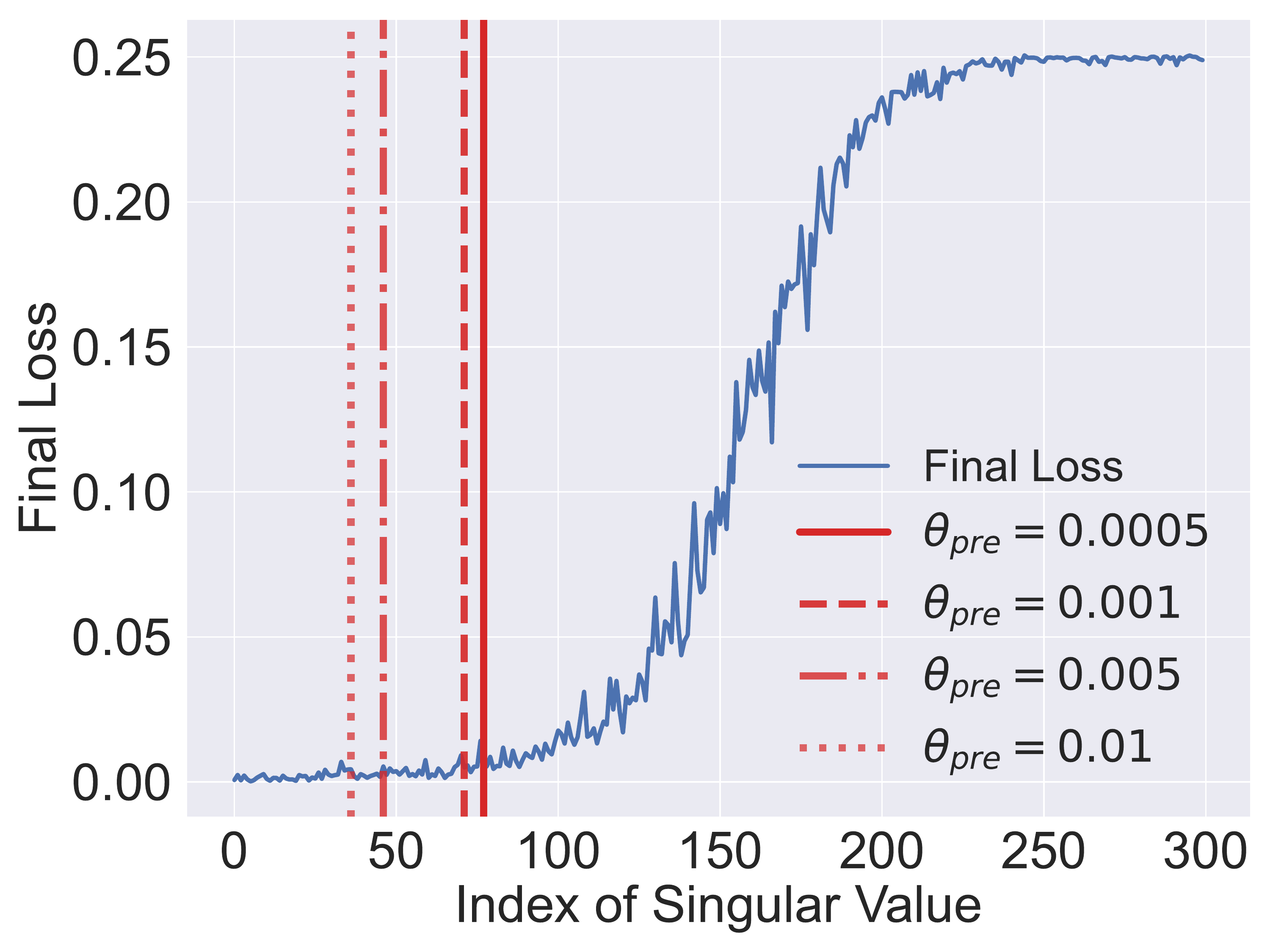}
    \caption{$\|\wvar_{ptb}-\wvar_{init}\|=0.5$}
    \end{subfigure}
    \quad
    \begin{subfigure}[b]{0.4\columnwidth}
    \centering
    \includegraphics[width=\columnwidth]{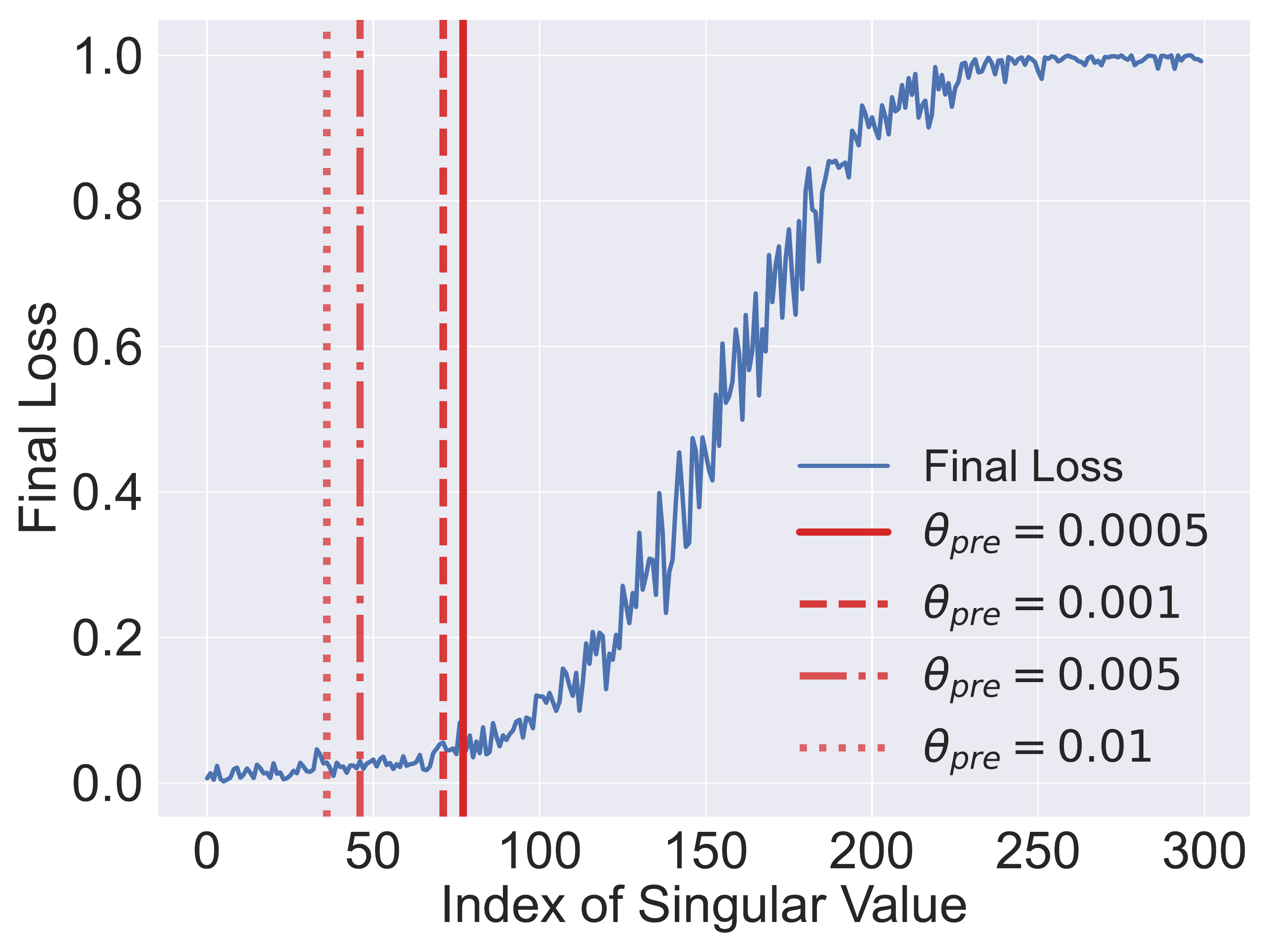}
    \caption{$\|\wvar_{ptb}-\wvar_{init}\|=1$}
    \end{subfigure} 
    \\
    \begin{subfigure}[b]{0.4\columnwidth}
    \centering
    \includegraphics[width=\columnwidth]{figure/OffMfd/Off_mfd_final_ptb_2.pdf}
    \caption{$\|\wvar_{ptb}-\wvar_{init}\|=2$}
    \end{subfigure} 
    \quad
    \begin{subfigure}[b]{0.4\columnwidth}
    \centering
    \includegraphics[width=\columnwidth]{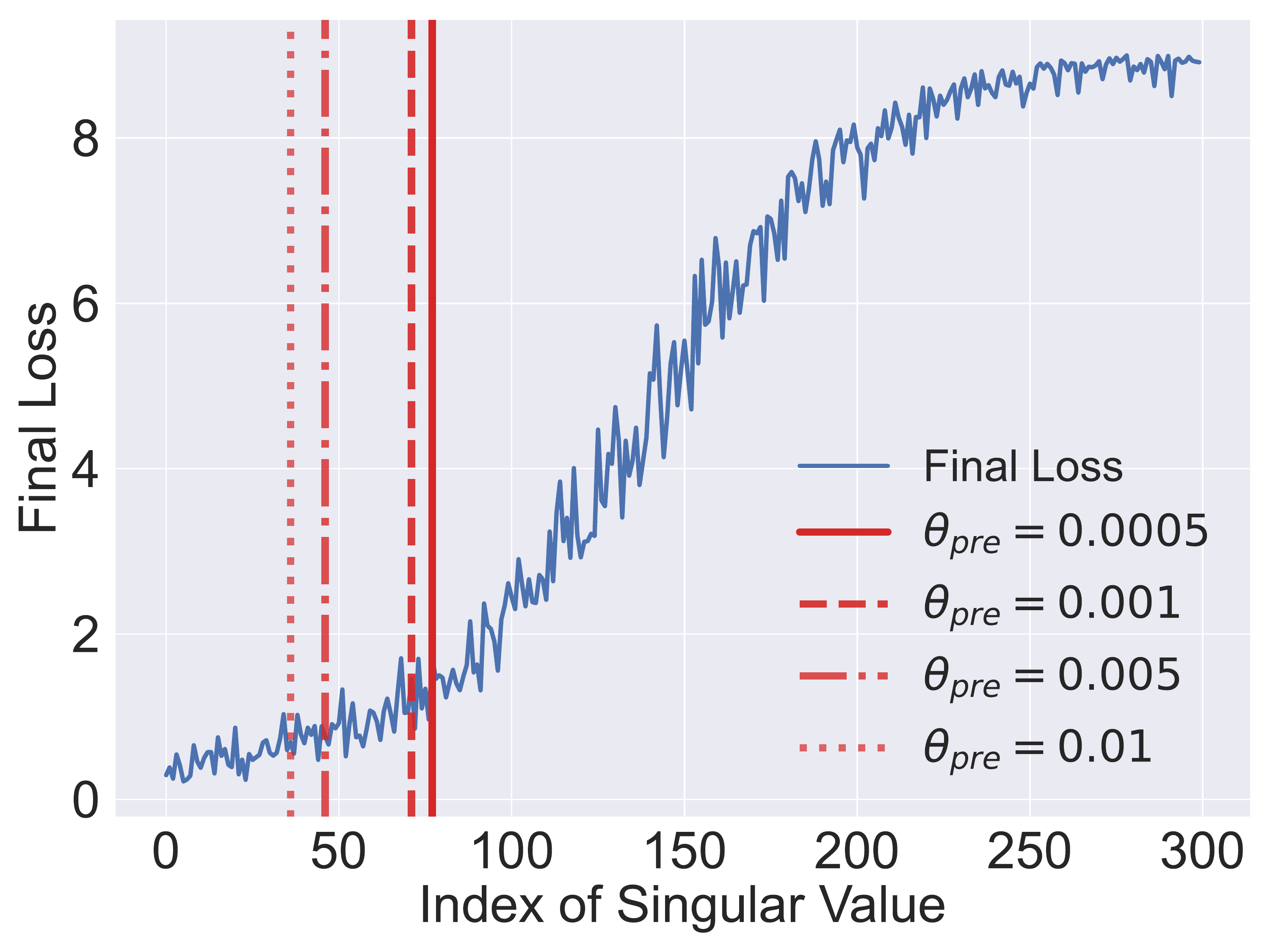}
    \caption{$\|\wvar_{ptb}-\wvar_{init}\|=3$}
    \end{subfigure} 
    \caption{
    \textbf{Off-manifold Results} in $\wset$-space of StyleGAN2 on FFHQ. The red vertical lines denote the estimated local dimension for various $\theta_{pre}$. The small final loss implies that the linear perturbation along that Local Basis component stays inside the learned latent manifold. For every perturbation intensity, there is a transition point from a slow increase to a sharp increase, which is interpreted as an escape from the manifold. The results demonstrate that the proposed dimension estimation algorithm finds a reasonable point without crossing the transition point. In our case, these results are interpreted as choosing the principal part of the manifold without overestimating its dimension.
    }
\end{figure}

\begin{figure}
    \centering
    \begin{subfigure}[b]{0.4\columnwidth}
    \centering
    \includegraphics[width=\columnwidth]{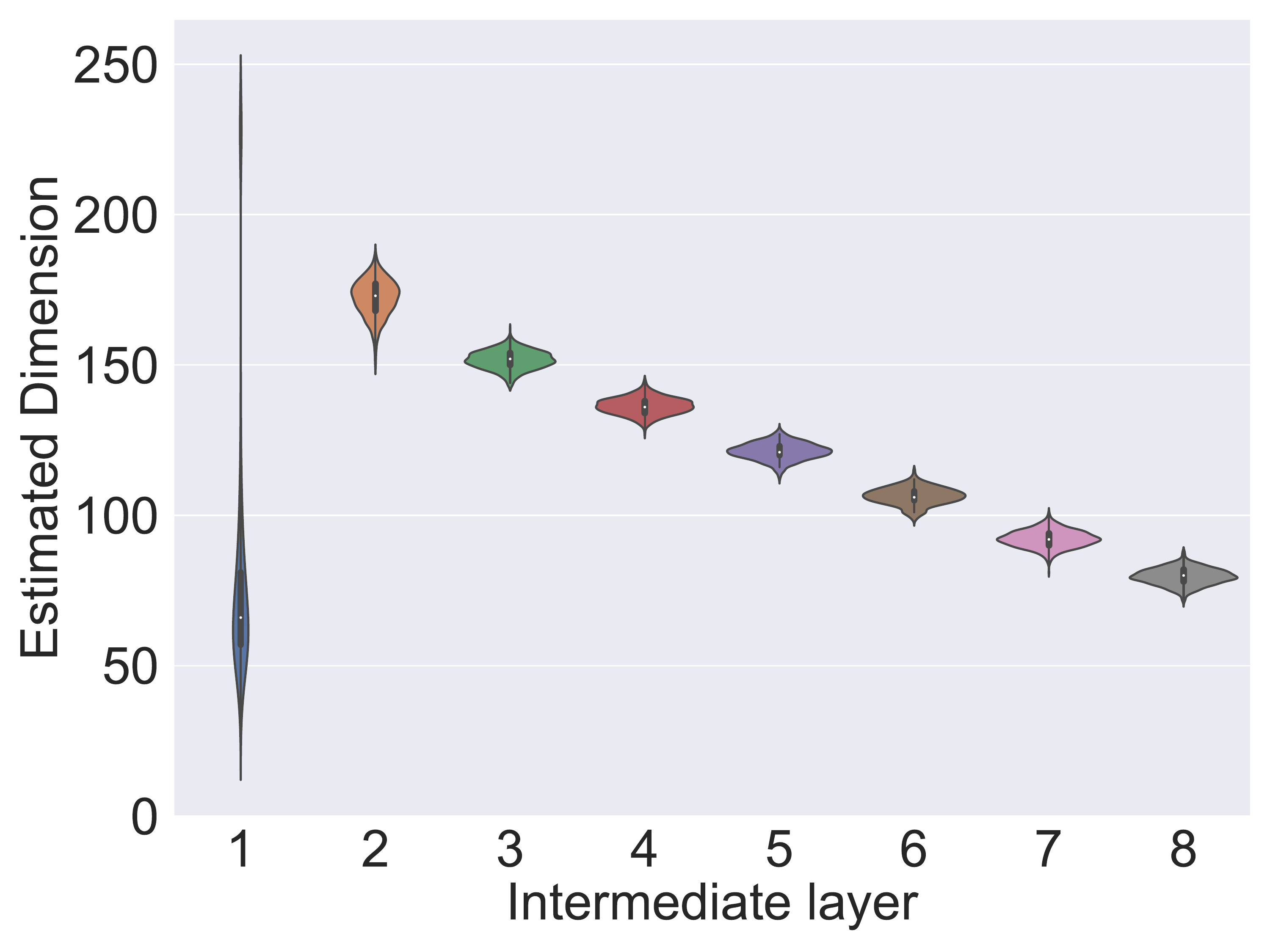}
    \caption{$\theta_{pre} = 0.0005$}
    \end{subfigure} 
    \quad
    \begin{subfigure}[b]{0.4\columnwidth}
    \centering
    \includegraphics[width=\columnwidth]{figure/RankStats/RankStats_SVthres_0.001.pdf}
    \caption{$\theta_{pre} = 0.001$}
    \end{subfigure}
    \\
    \begin{subfigure}[b]{0.4\columnwidth}
    \centering
    \includegraphics[width=\columnwidth]{figure/RankStats/RankStats_SVthres_0.005.pdf}
    \caption{$\theta_{pre} = 0.005$}
    \end{subfigure} 
    \quad
    \begin{subfigure}[b]{0.4\columnwidth}
    \centering
    \includegraphics[width=\columnwidth]{figure/RankStats/RankStats_SVthres_0.01.pdf}
    \caption{$\theta_{pre} = 0.01$}
    \end{subfigure} 
    \caption{
    \textbf{Local Dimension Distribution} of the intermediate layers in the mapping network of StyleGAN2 on FFHQ. Each figure presents the distribution of estimated local dimension at each intermediate layer as we vary $\theta_{pre}$. The distributions are illustrated for 1k samples, respectively.
    The algorithm gives a rather unstable dimension estimate on the most unsparse first layer (Fig \ref{fig:sv_dist}) due to its isotropic gaussian assumption. However, this phenomenon is not observed in the layers with moderate depth, i.e., from 3 to 8. As we introduce the higher preprocessing ratio $\theta_{pre}$, the algorithm gives more strict, i.e., smaller, dimension estimates. Nevertheless, the relative trend between layers is the same. The deeper the latent manifold, the smaller its dimension.
    }
\end{figure}

\begin{figure}
    \centering
    \begin{subfigure}[b]{0.4\columnwidth}
    \centering
    \includegraphics[width=\columnwidth]{figure/traversal_2strip/366745668_Local_Basis_2dim_x0_y1_ptb5.0_sublayer8.pdf}
    \caption{Traversal along Axis 0 and 1}
    \label{fig:rank_traversal_app_1}
    \end{subfigure}
    \quad
    \begin{subfigure}[b]{0.4\columnwidth}
    \centering
    \includegraphics[width=\columnwidth]{figure/traversal_2strip/366745668_Local_Basis_svth0.01_2dim_x34_y35_ptb5.0_sublayer8.pdf}
    \caption{Traversal along Axis 34 and 35 - $\theta_{pre}=0.01$}
    \label{fig:rank_traversal_app_2}
    \end{subfigure} 
    \\
    \begin{subfigure}[b]{0.4\columnwidth}
    \centering
    \includegraphics[width=\columnwidth]{figure/traversal_2strip/366745668_Local_Basis_svth0.005_2dim_x43_y44_ptb5.0_sublayer8.pdf}
    \caption{Traversal along Axis 43 and 44 - $\theta_{pre}=0.005$}
    \label{fig:rank_traversal_app_3}
    \end{subfigure} 
    \quad
    \begin{subfigure}[b]{0.4\columnwidth}
    \centering
    \includegraphics[width=\columnwidth]{figure/traversal_2strip/366745668_Local_Basis_svth0.001_2dim_x67_y68_ptb5.0_sublayer8.pdf}
    \caption{Traversal along Axis 67 and 68 - $\theta_{pre}=0.001$}
    \label{fig:rank_traversal_app_4}
    \end{subfigure} 
    \\
    \begin{subfigure}[b]{0.4\columnwidth}
    \centering
    \includegraphics[width=\columnwidth]{figure/traversal_2strip/366745668_Local_Basis_svth0.0005_2dim_x78_y79_ptb5.0_sublayer8.pdf}
    \caption{Traversal along Axis 78 and 79 - $\theta_{pre}=0.0005$}
    \label{fig:rank_traversal_app_5}
    \end{subfigure} 
    \\
    \begin{subfigure}[b]{0.4\columnwidth}
    \centering
    \includegraphics[width=\columnwidth]{figure/Image_variation_intensity.pdf}
    \caption{Image variation intensity}
    \label{fig:rank_image_var_app}
    \end{subfigure} 
    \caption{ 
    \textbf{Local Dimension Evaluation in Image Space} of StyleGAN2 on FFHQ. Figure \ref{fig:rank_traversal_app_2}-\ref{fig:rank_traversal_app_5} show image traversals along the $(d-1)$-th and $d$-th axis where $d$ denotes the estimated local dimension for each $\theta_{pre}$.
    Fig \ref{fig:rank_image_var_app} presents the image variation intensity $\| \nabla_{\vvar_{i}^{\wvar}}g(\wvar) \|_{F}$ along each Local Basis $\vvar_{i}$. $\| \nabla_{\vvar_{i}^{\wvar}}g(\wvar) \|_{F}$ is evaluated by the finite difference with stepsize=0.01. Note that while Local Basis is discovered by analyzing only the subnetwork from the input to target latent space, the corresponding image variation monotonically decreases and saturates. Moreover, the estimated local dimension includes the major variations in the image space. This can be observed in the image traversals. The image traversals around the estimated dimension (Fig \ref{fig:rank_traversal_app_2}-\ref{fig:rank_traversal_app_5}) show much smaller image variations compared to Axis 0 and 1 (Fig \ref{fig:rank_traversal_app_1}).
    }
\end{figure}

\begin{figure}[]
    \centering
    \begin{subfigure}[b]{0.4\columnwidth}
    \centering
    \includegraphics[width=\columnwidth]{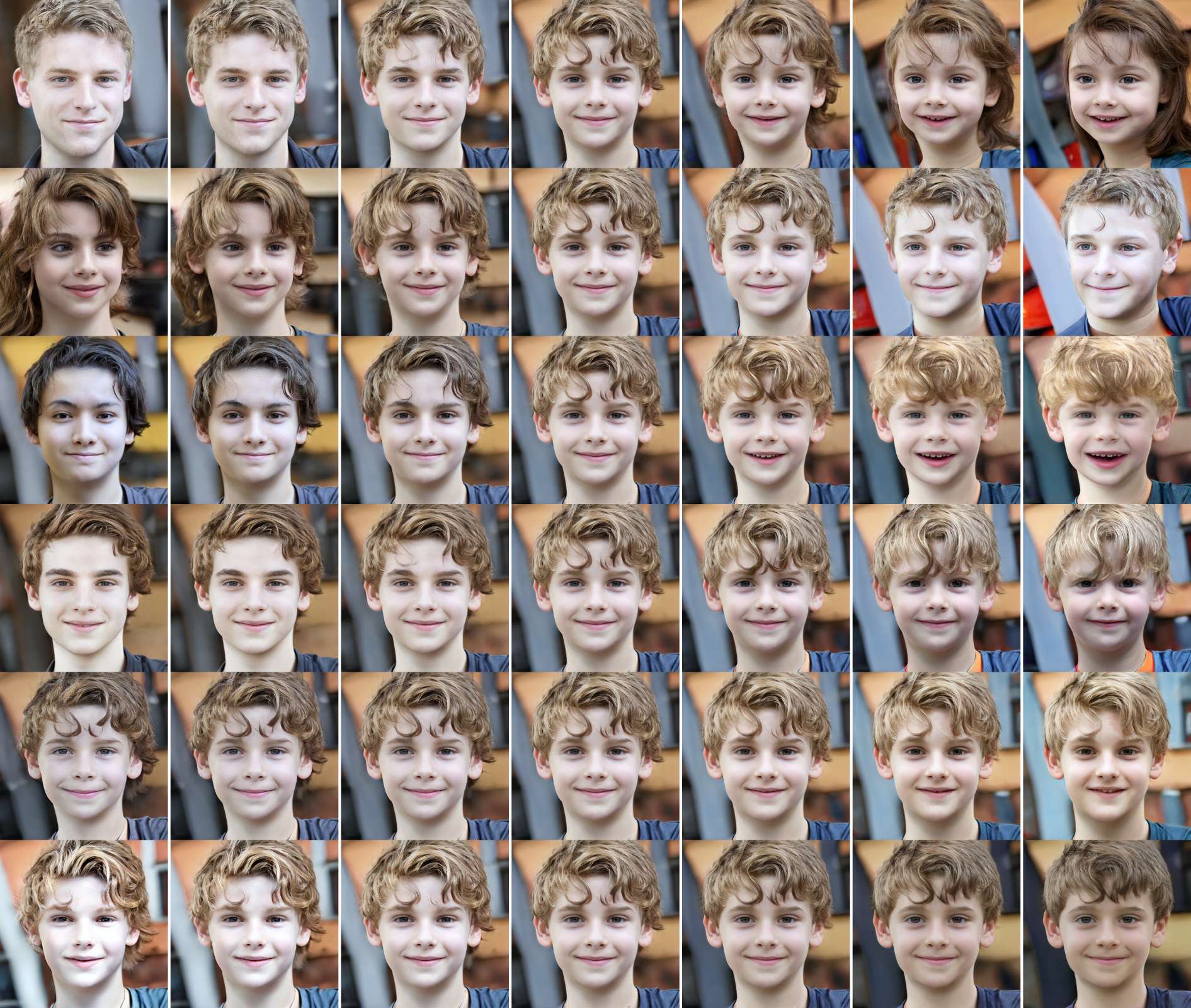}
    \caption{Axes $\{ 0,1,15,30,43,44 \}$ from top to bottom}
    \end{subfigure}
    \quad
    \begin{subfigure}[b]{0.4\columnwidth}
    \centering
    \includegraphics[width=\columnwidth]{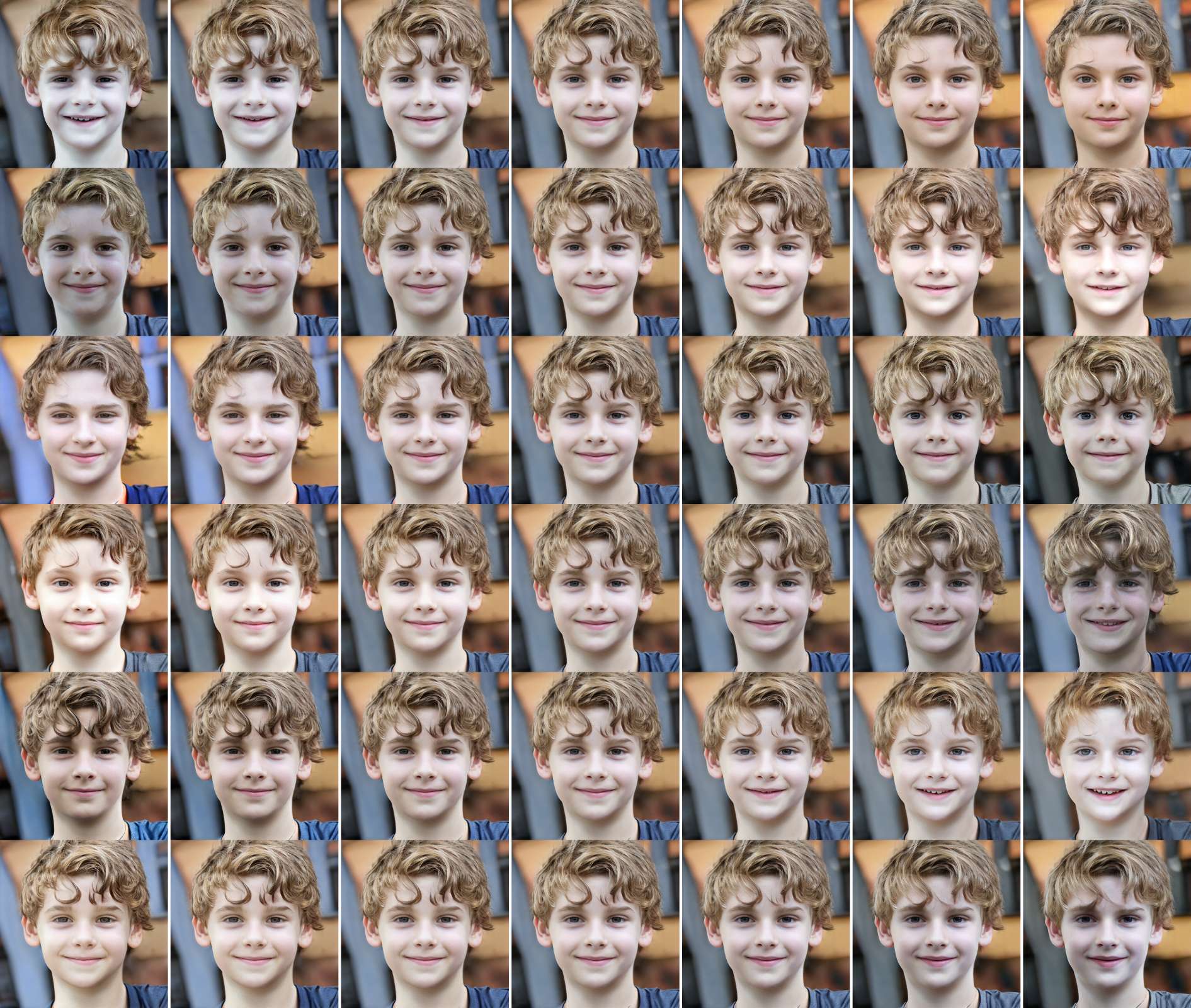}
    \caption{Axes $\{100,150,200,300,400,500\}$}
    \end{subfigure}
    \caption{
    \textbf{Linear Traversals along the various LB axis} on StyleGAN2-FFHQ. At this latent variable, the estimated dimension with $\theta_{pre}=0.01$ is $d=44$. When we traverse along the axis $i \gg d$, the images present minor variations compared to the axis $i \leq d$. Traversals along the 1 - 44 axes display various semantic variations such as gender, skin color, age, etc. 
    }
    \label{fig:1dim_comparison_full}
\end{figure}

\clearpage
\section{Comparison of Semantic Disentanglement} \label{sec:additional_semantic}
Using our Distortion metric, we chose the Max-distorted layer 3 and Min-distorted layer 7 from the mapping network of StyleGAN2-FFHQ \cite{karras2020analyzing}. These two layers are compared with the renowned $\wset$-space (layer 8). Each image traversal is obtained by perturbing a latent variable along the global basis from GANSpace \cite{harkonen2020ganspace}.

\vskip 1in

\begin{figure*}[h]
    \centering
    \begin{subfigure}[b]{0.7\textwidth}
    \centering
    \includegraphics[width=\columnwidth]{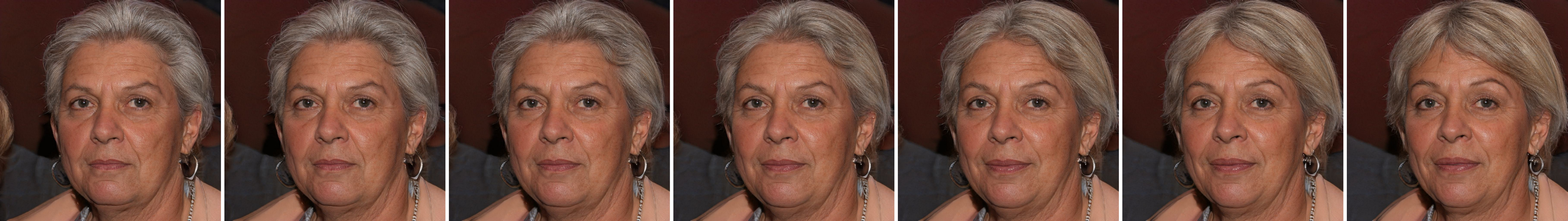}
    \caption{Max-distorted layer 3}
    \end{subfigure} 
    \\
    \begin{subfigure}[b]{0.7\textwidth}
    \centering
    \includegraphics[width=\columnwidth]{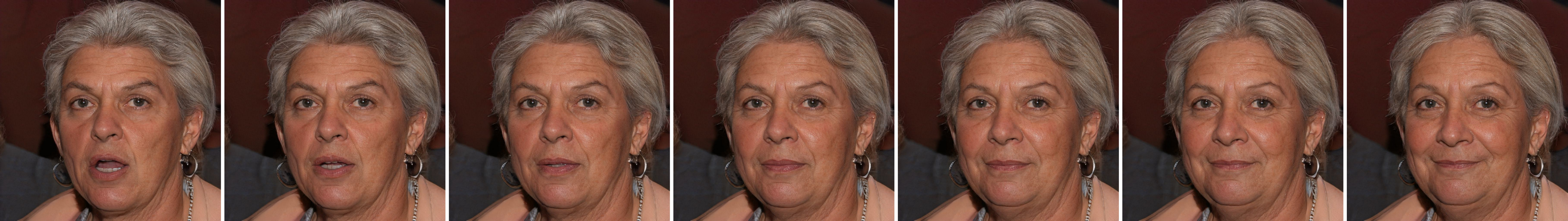}
    \caption{Min-distorted layer 7}
    \end{subfigure} 
    \\
    \begin{subfigure}[b]{0.7\textwidth}
    \centering
    \includegraphics[width=\columnwidth]{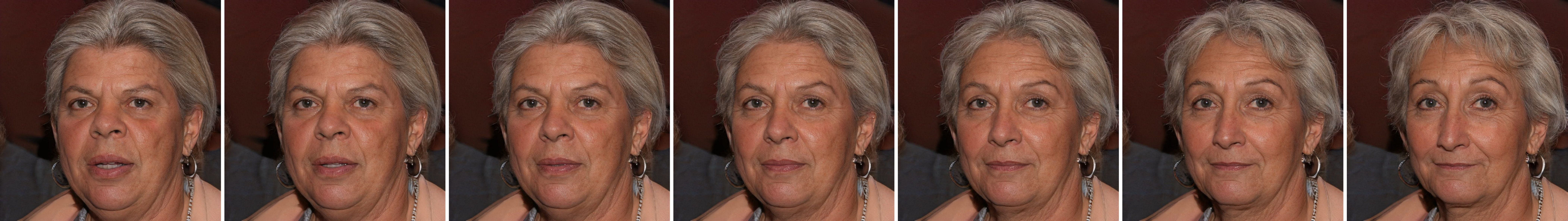}
    \caption{Layer 8 ($\wset$-space)}
    \end{subfigure} 
    \\
    \caption{
    \textbf{Comparison of Semantic Disentanglement between Layers 3, 7, 8} for the semantic "Open Mouth". Layer 7 exhibits a disentangled semantic variation of "Open mouth" without changing the face, such as a longer nose in Layer 8. Layer 3 shows only minor facial expression changes.
    }
\end{figure*}

\begin{figure*}[h]
    \centering
    \begin{subfigure}[b]{0.7\textwidth}
    \centering
    \includegraphics[width=\columnwidth]{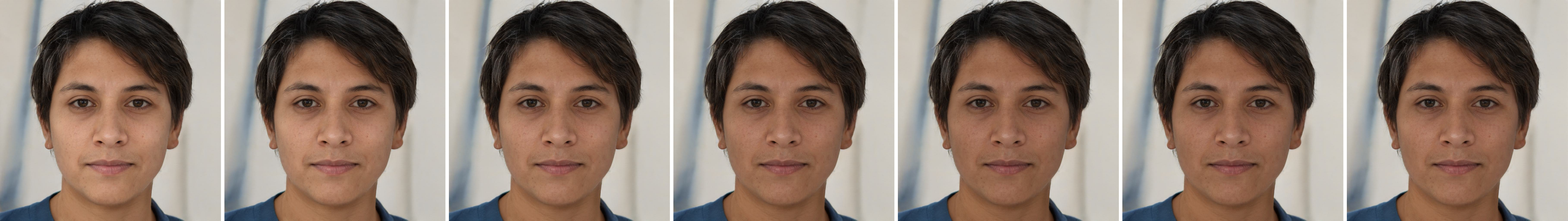}
    \caption{Max-distorted layer 3}
    \end{subfigure} 
    \\
    \begin{subfigure}[b]{0.7\textwidth}
    \centering
    \includegraphics[width=\columnwidth]{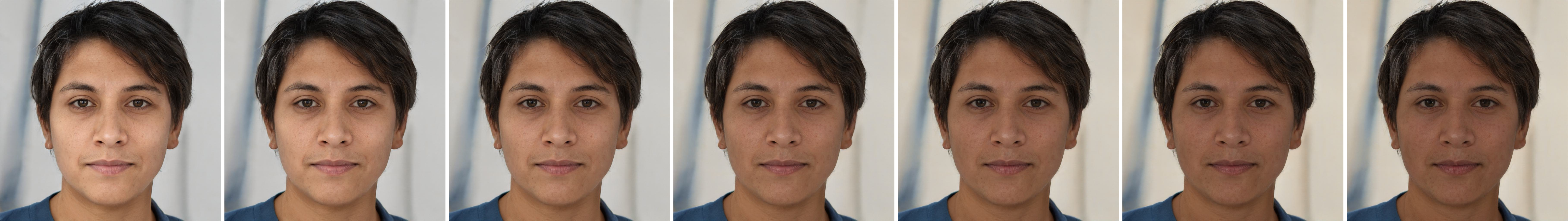}
    \caption{Min-distorted layer 7}
    \end{subfigure} 
    \\
    \begin{subfigure}[b]{0.7\textwidth}
    \centering
    \includegraphics[width=\columnwidth]{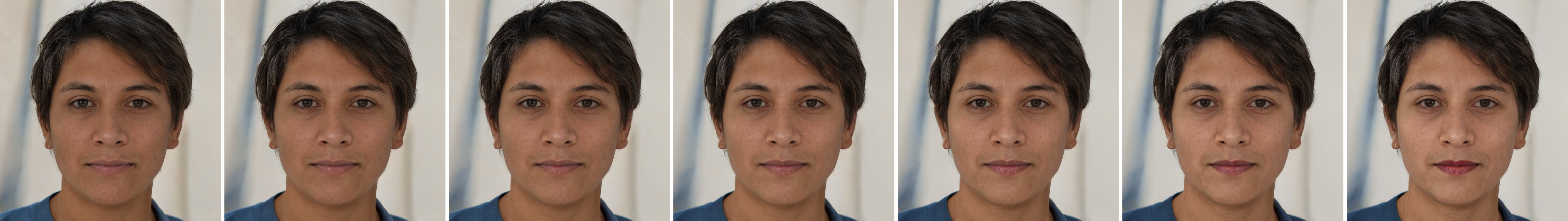}
    \caption{Layer 8 ($\wset$-space)}
    \end{subfigure} 
    \\
    \caption{
    \textbf{Comparison of Semantic Disentanglement between Layers 3, 7, 8} for the semantic "Face Bright". Layer 8 shows an entangled semantic variation of "Lipstick" and "Face Bright" on the right end. On Layer 3, the traversal toward the right end does not show "Face Dark" semantics.
    }
\end{figure*}

\begin{figure*}[h]
    \centering
    \begin{subfigure}[b]{0.7\textwidth}
    \centering
    \includegraphics[width=\columnwidth]{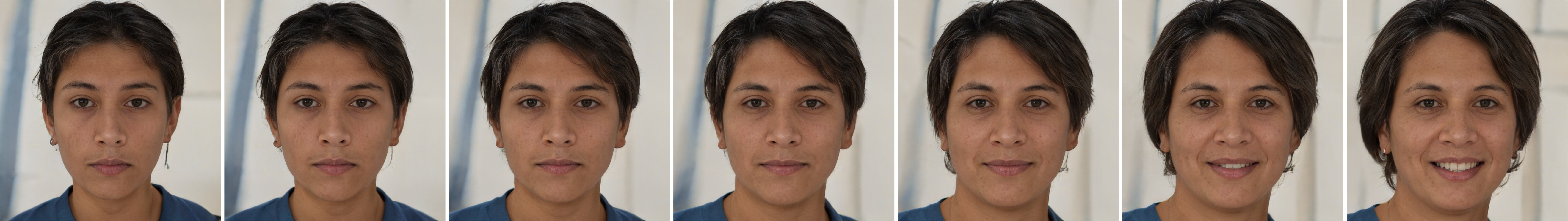}
    \caption{Max-distorted layer 3}
    \end{subfigure} 
    \\
    \begin{subfigure}[b]{0.7\textwidth}
    \centering
    \includegraphics[width=\columnwidth]{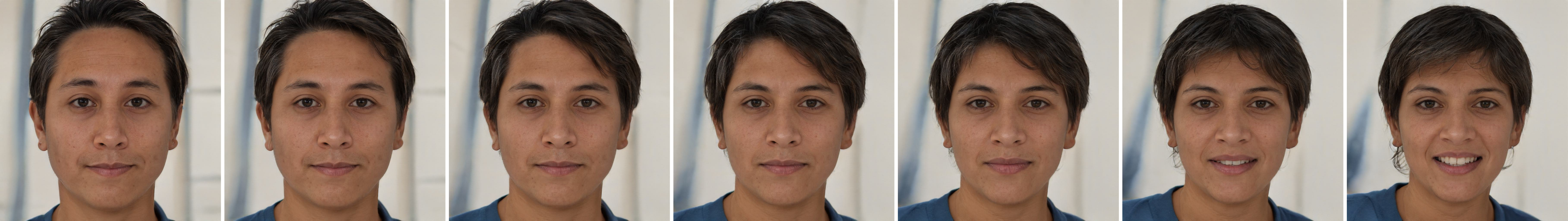}
    \caption{Min-distorted layer 7}
    \end{subfigure} 
    \\
    \begin{subfigure}[b]{0.7\textwidth}
    \centering
    \includegraphics[width=\columnwidth]{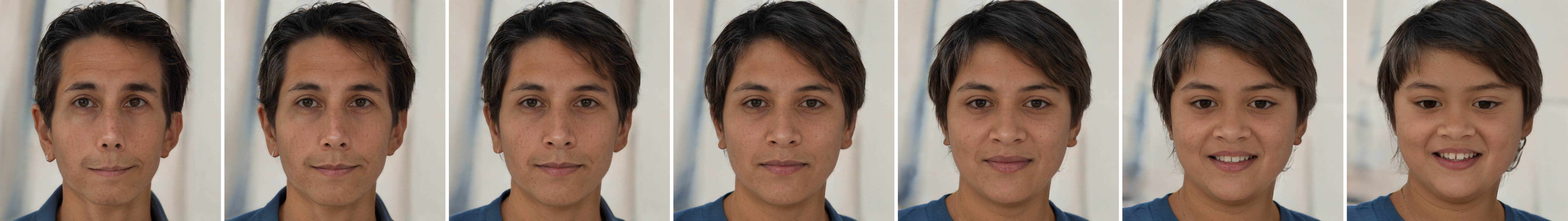}
    \caption{Layer 8 ($\wset$-space)}
    \end{subfigure} 
    \\
    \caption{
    \textbf{Comparison of Semantic Disentanglement between Layers 3, 7, 8} for the semantic "Gender". On Layer 3, the image traversal shows an entangled semantic variation of "Gender" and "Big face" on the right end. Also, Layer 8 exhibits an entangled semantic variation of "Gender" and "Age".
    }
\end{figure*}

\begin{figure*}[h]
    \centering
    \begin{subfigure}[b]{0.7\textwidth}
    \centering
    \includegraphics[width=\columnwidth]{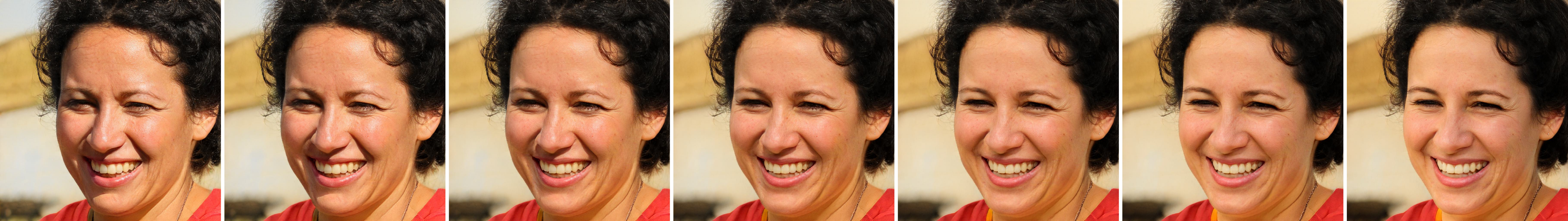}
    \caption{Max-distorted layer 3}
    \end{subfigure} 
    \\
    \begin{subfigure}[b]{0.7\textwidth}
    \centering
    \includegraphics[width=\columnwidth]{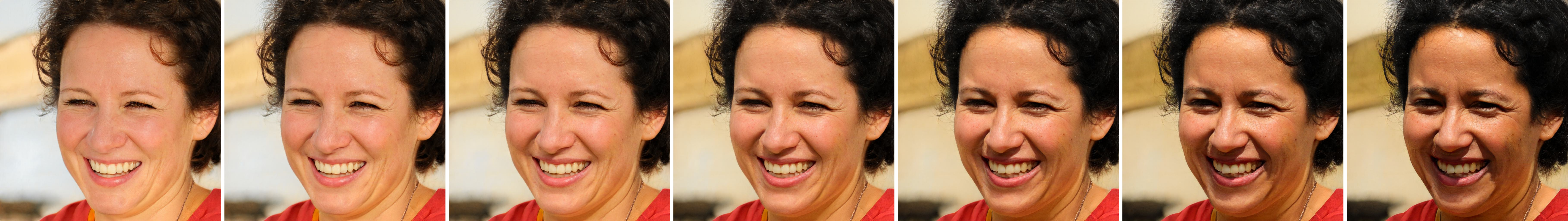}
    \caption{Min-distorted layer 7}
    \end{subfigure} 
    \\
    \begin{subfigure}[b]{0.7\textwidth}
    \centering
    \includegraphics[width=\columnwidth]{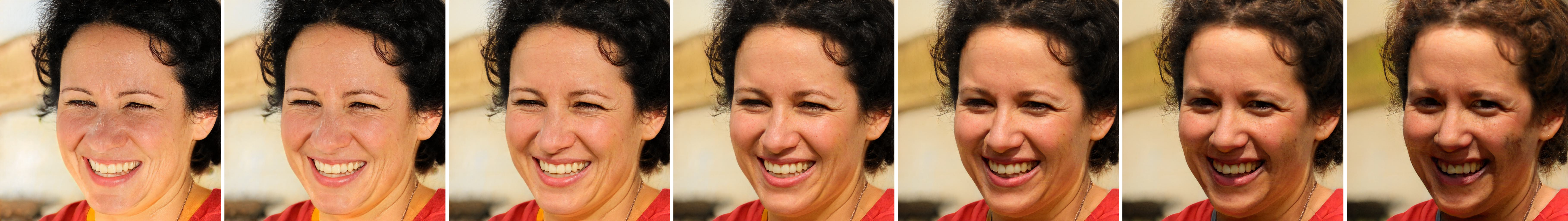}
    \caption{Layer 8 ($\wset$-space)}
    \end{subfigure} 
    \\
    \caption{
    \textbf{Comparison of Semantic Disentanglement between Layers 3, 7, 8} for the semantic "Light Up". Layer 8 exhibits an entangled semantic variation of "Light Down" and "Brown hair" on the right end. On the left end of Layer 3, the traversal shows the "Bright right" transformation. However, the traversal at the right end of Layer 3 does not present the "Bright left" transformation, while showing no shadow on the face. But, the traversal on Layer 7 shows a natural brightness change on the entire face. 
    }
\end{figure*}


\clearpage
\section{Additional Subspace Traversal Comparisons} \label{sec:additional_traversal}
\begin{figure}[h]
    \centering
    \begin{subfigure}[b]{0.4\columnwidth}
    \centering
    \includegraphics[width=\columnwidth]{figure/traversal_2dim/Global_Basis_798602383_2dim_x0_y1_ptb9_sublayer7.pdf}
    \caption{Min-distorted layer - Global Basis}
    \end{subfigure}
    \quad
    \begin{subfigure}[b]{0.4\columnwidth}
    \centering
    \includegraphics[width=\columnwidth]{figure/traversal_2dim/Local_Basis_798602383_2dim_x0_y1_ptb9_sublayer7.pdf}
    \caption{Min-distorted layer - Local Basis}
    \end{subfigure}
    \\
    \centering
    \begin{subfigure}[b]{0.4\columnwidth}
    \centering
    \includegraphics[width=\columnwidth]{figure/traversal_2dim/Global_Basis_798602383_2dim_x0_y1_ptb9_sublayer3.pdf}
    \caption{Max-distorted layer - Global Basis}
    \end{subfigure}
    \quad
    \begin{subfigure}[b]{0.4\columnwidth}
    \centering
    \includegraphics[width=\columnwidth]{figure/traversal_2dim/Local_Basis_798602383_2dim_x0_y1_ptb9_sublayer3.pdf}
    \caption{Max-distorted layer - Local Basis}
    \end{subfigure}
    \caption{
    \textbf{Subspace Traversal \citep{LocalBasis} on the min-distorted ($7$th) and max-distorted ($3$rd) intermediate layers} along the global basis \citep{harkonen2020ganspace} and Local Basis \citep{LocalBasis} of StyleGAN2 on FFHQ. The initial image (center) is traversed along the $1$st (horizontal) and $2$nd (vertical) components of the chosen traversal directions with the perturbation intensity 9. The global basis shows a decent image quality on the min-distorted layer, similar to Local Basis. However, on the max-distorted layer, the subspace traversal along global basis exhibits significant failures at corners, such as image collapse (lower-left), visual artifacts (lower-right), and unnatural transformations (top-left).
    }
\end{figure}

\begin{figure}[]
    \centering
    \begin{subfigure}[b]{0.3\columnwidth}
    \centering
    \includegraphics[width=\columnwidth]{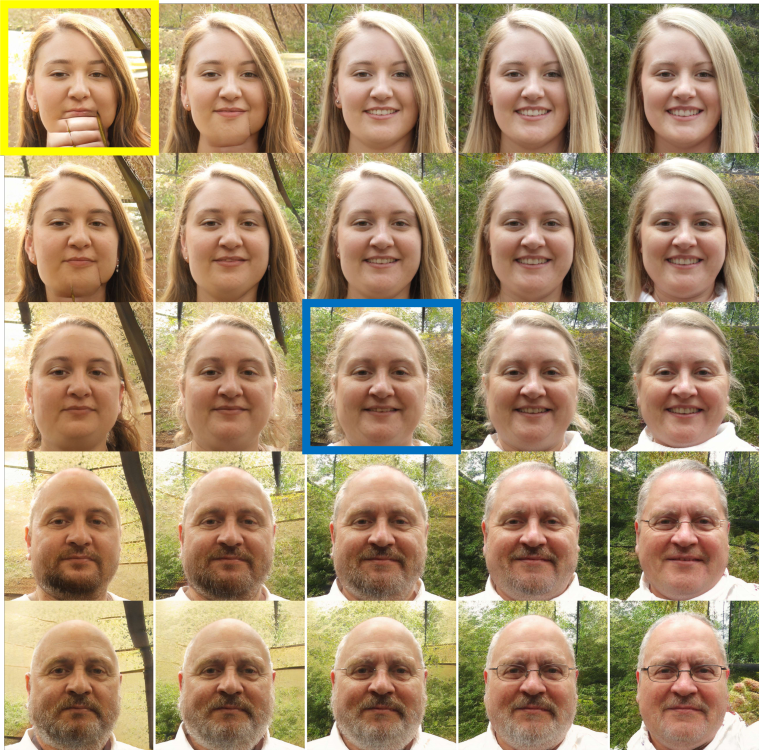}
    \caption{Max-distorted layer 3}
    \end{subfigure}
    \quad
    \begin{subfigure}[b]{0.3\columnwidth}
    \centering
    \includegraphics[width=\columnwidth]{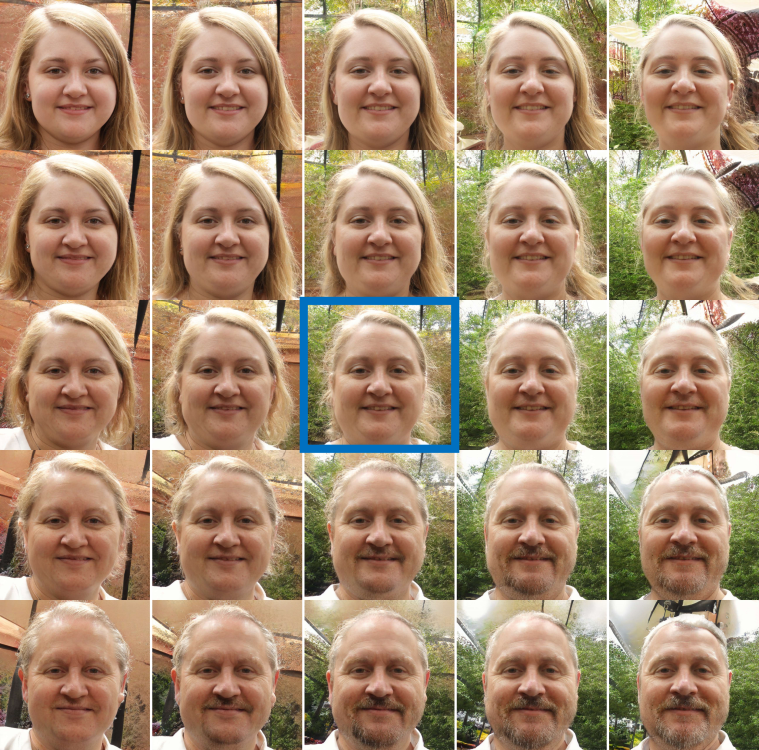}
    \caption{Min-distorted layer 7}
    \end{subfigure}
    \quad
    \begin{subfigure}[b]{0.3\columnwidth}
    \centering
    \includegraphics[width=\columnwidth]{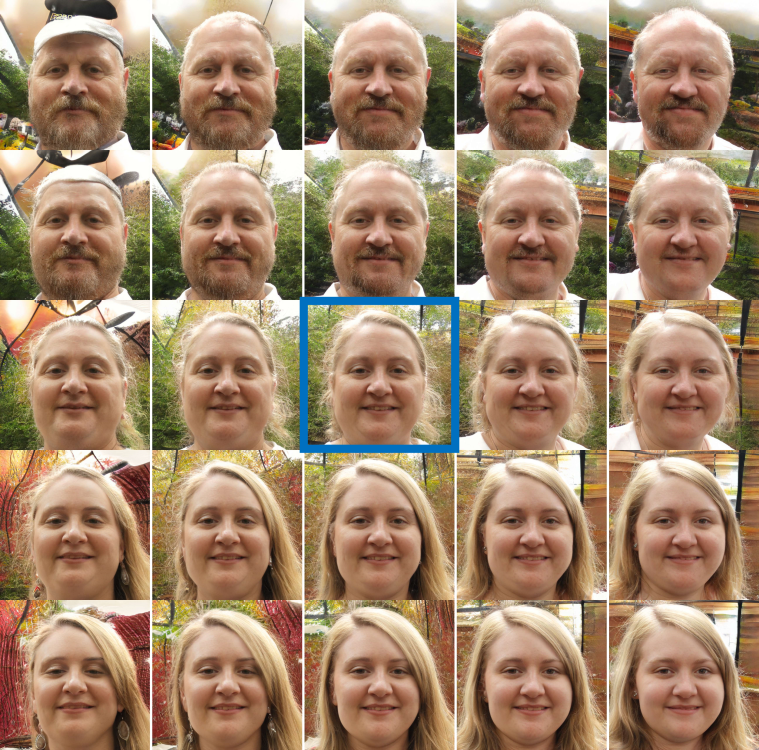}
    \caption{Layer 8 ($\wset$-space)}
    \end{subfigure}
    \caption{
    \textbf{Subspace Traversal} on StyleGAN2-FFHQ. The upper-left corner of layer 3 is severely deteriorated.
    }
\end{figure}


\begin{figure}[]
    \centering
    \begin{subfigure}[b]{0.3\columnwidth}
    \centering
    \includegraphics[width=\columnwidth]{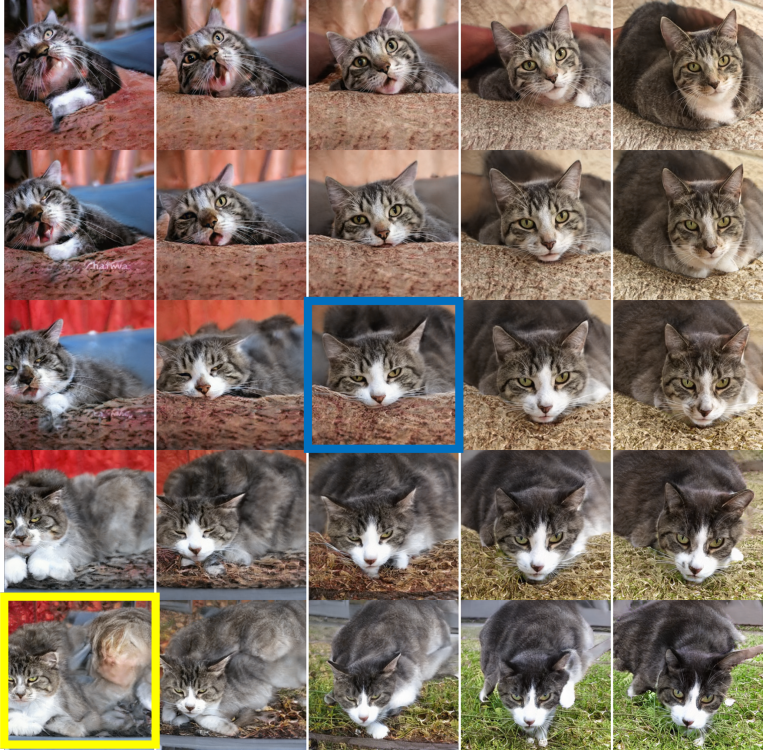}
    \caption{Max-distorted layer 3}
    \end{subfigure}
    \quad
    \begin{subfigure}[b]{0.3\columnwidth}
    \centering
    \includegraphics[width=\columnwidth]{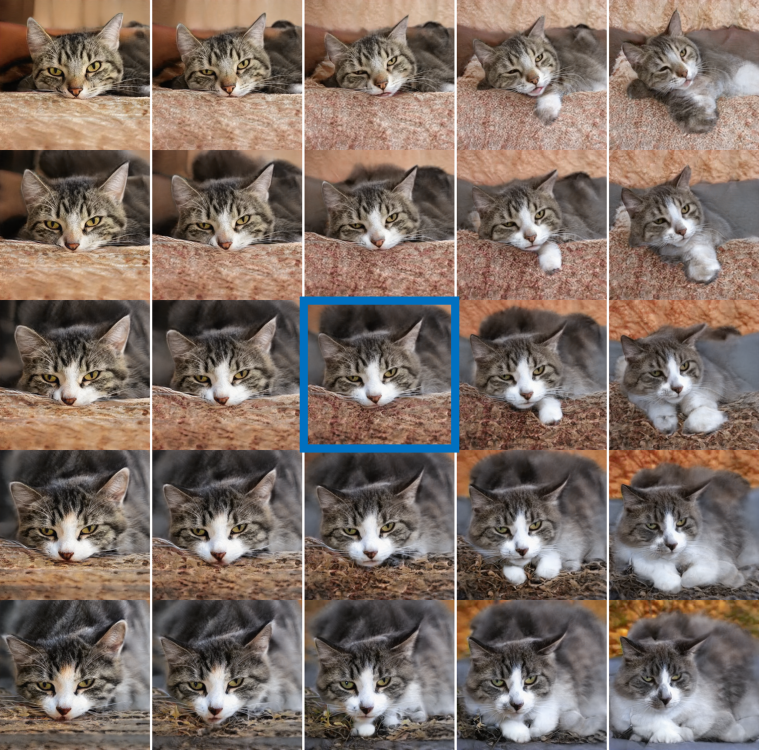}
    \caption{Min-distorted layer 6}
    \end{subfigure}
    \quad
    \begin{subfigure}[b]{0.3\columnwidth}
    \centering
    \includegraphics[width=\columnwidth]{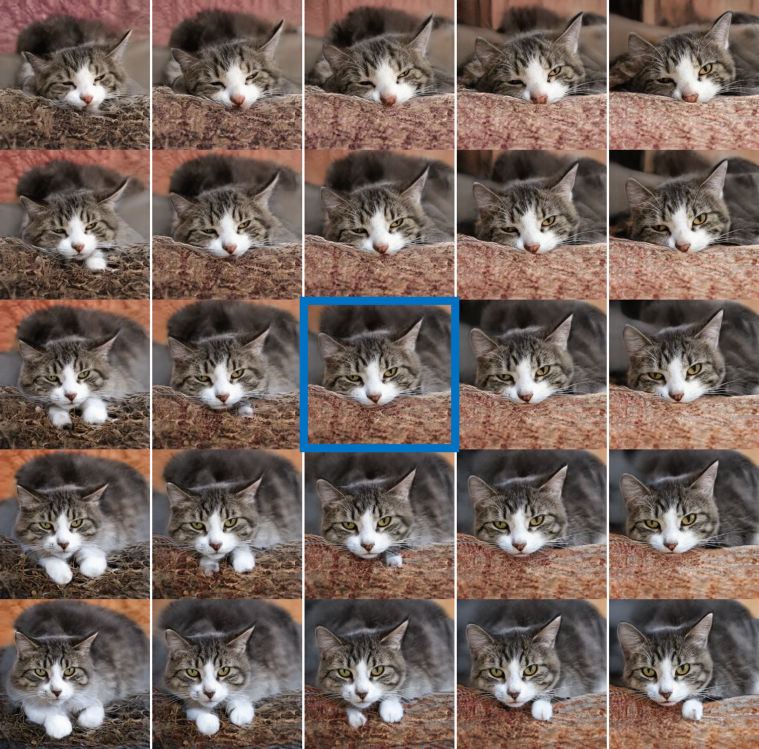}
    \caption{Layer 8 ($\wset$-space) }
    \end{subfigure}
    \caption{
    \textbf{Subspace Traversal} on StyleGAN2-LSUN Cat. The lower-left corner of layer 3 are severely deteriorated.}
\end{figure}

\begin{figure}[]
    \centering
    \begin{subfigure}[b]{0.3\columnwidth}
    \centering
    \includegraphics[width=\columnwidth]{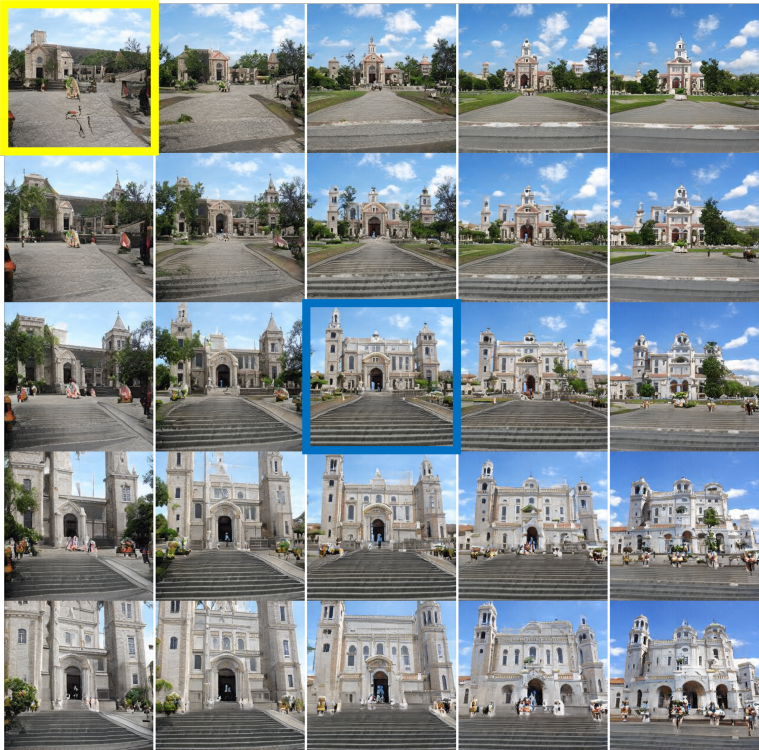}
    \caption{Max-distorted layer 3}
    \end{subfigure}
    \quad
    \begin{subfigure}[b]{0.3\columnwidth}
    \centering
    \includegraphics[width=\columnwidth]{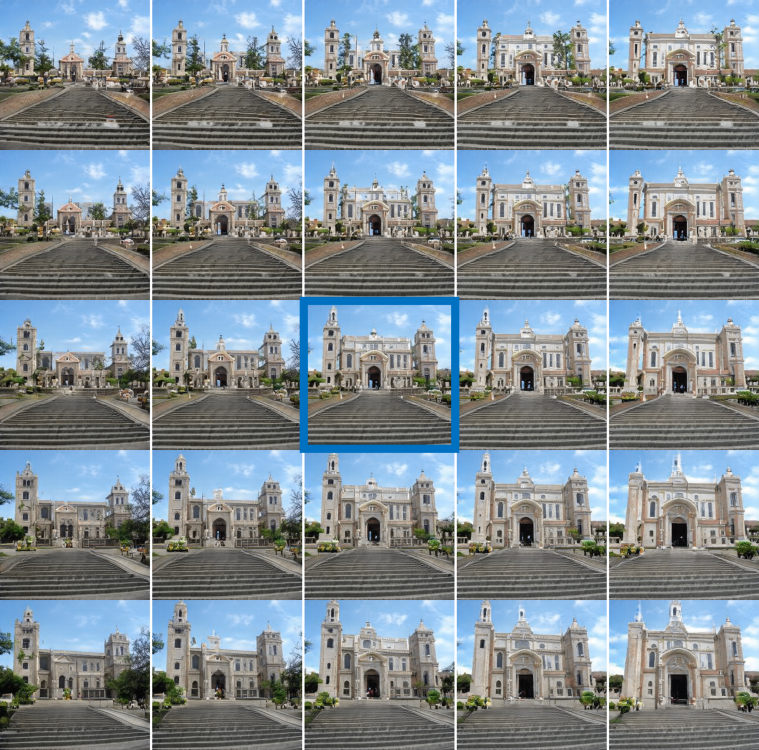}
    \caption{Min-distorted layer 6}
    \end{subfigure}
    \quad
    \begin{subfigure}[b]{0.3\columnwidth}
    \centering
    \includegraphics[width=\columnwidth]{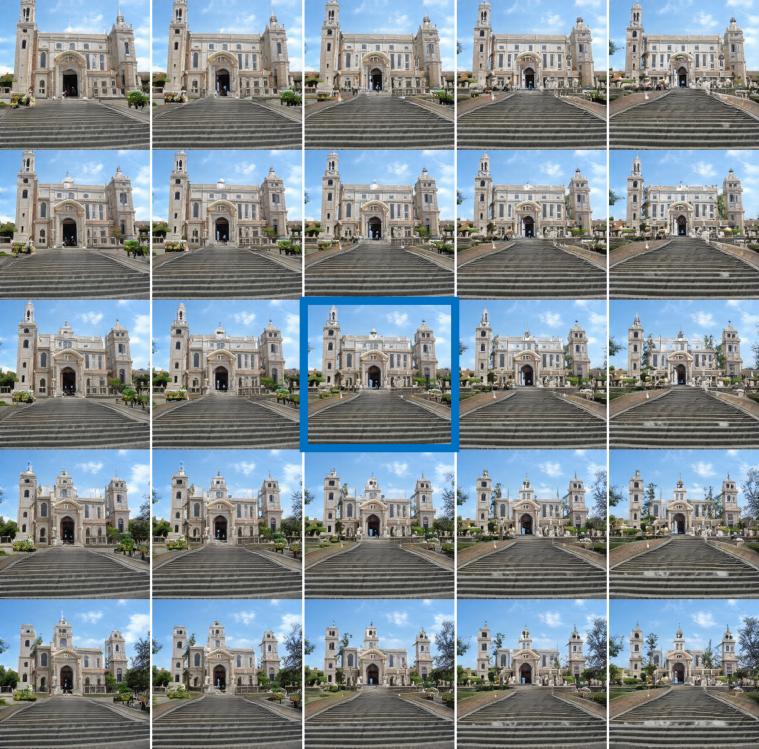}
    \caption{Layer 8 ($\wset$-space) }
    \end{subfigure}
    \caption{
    \textbf{Subspace Traversal} on StyleGAN2-LSUN Church. The upper-left corner of layer 3 are severely deteriorated.}
\end{figure}

\begin{figure}[]
    \centering
    \begin{subfigure}[b]{0.3\columnwidth}
    \centering
    \includegraphics[width=\columnwidth]{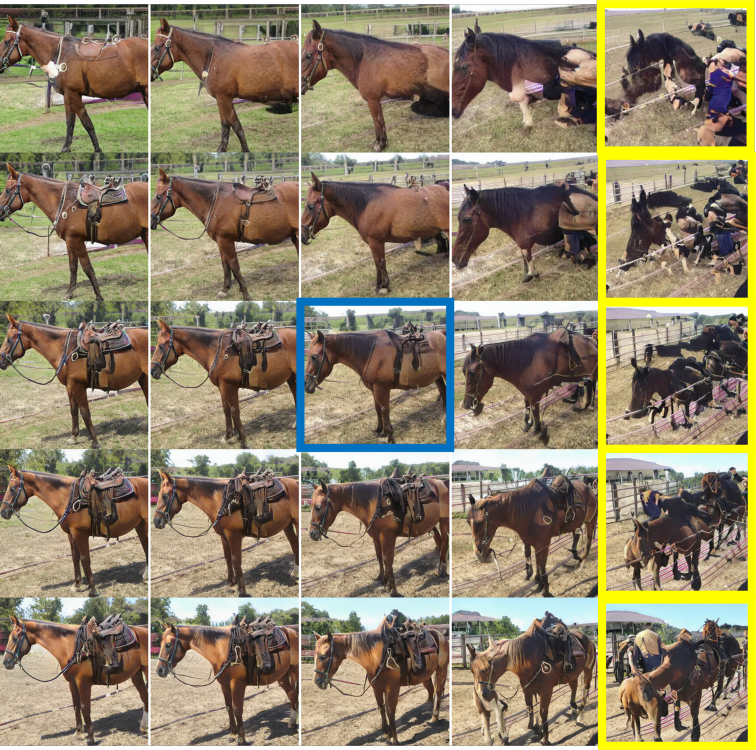}
    \caption{Max-distorted layer 3}
    \end{subfigure}
    \quad
    \begin{subfigure}[b]{0.3\columnwidth}
    \centering
    \includegraphics[width=\columnwidth]{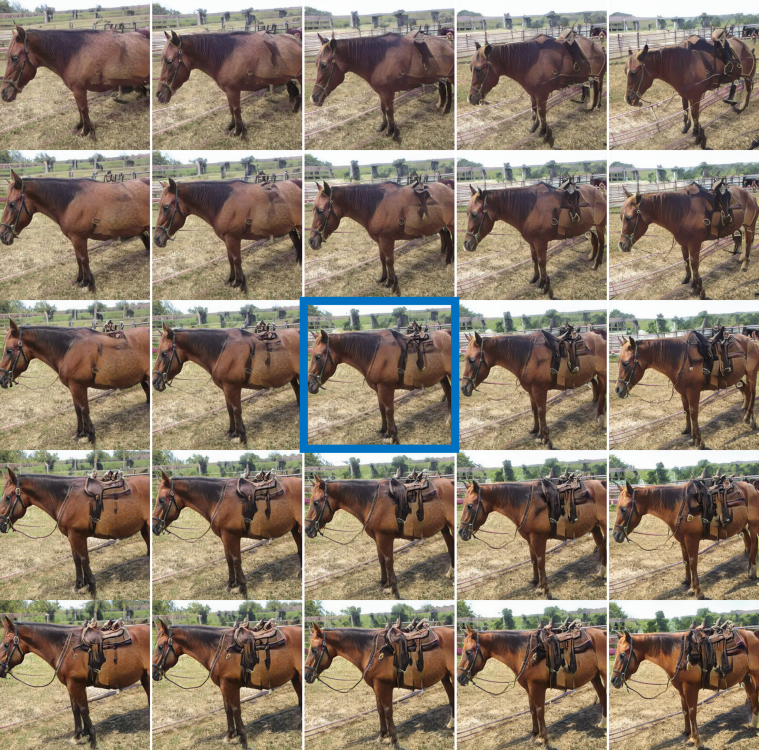}
    \caption{Min-distorted layer 6}
    \end{subfigure}
    \quad
    \begin{subfigure}[b]{0.3\columnwidth}
    \centering
    \includegraphics[width=\columnwidth]{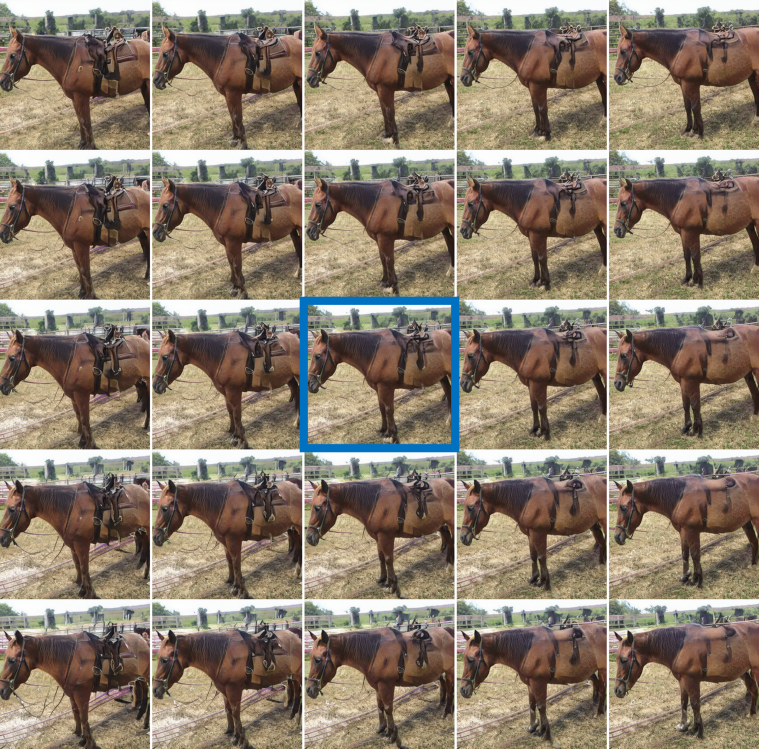}
    \caption{Layer 8 ($\wset$-space) }
    \end{subfigure}
    \caption{
    \textbf{Subspace Traversal} on StyleGAN2-LSUN Horse. The right sides of layer 3 are severely deteriorated.}
\end{figure}

\begin{figure}[]
    \centering
    \begin{subfigure}[b]{0.3\columnwidth}
    \centering
    \includegraphics[width=\columnwidth]{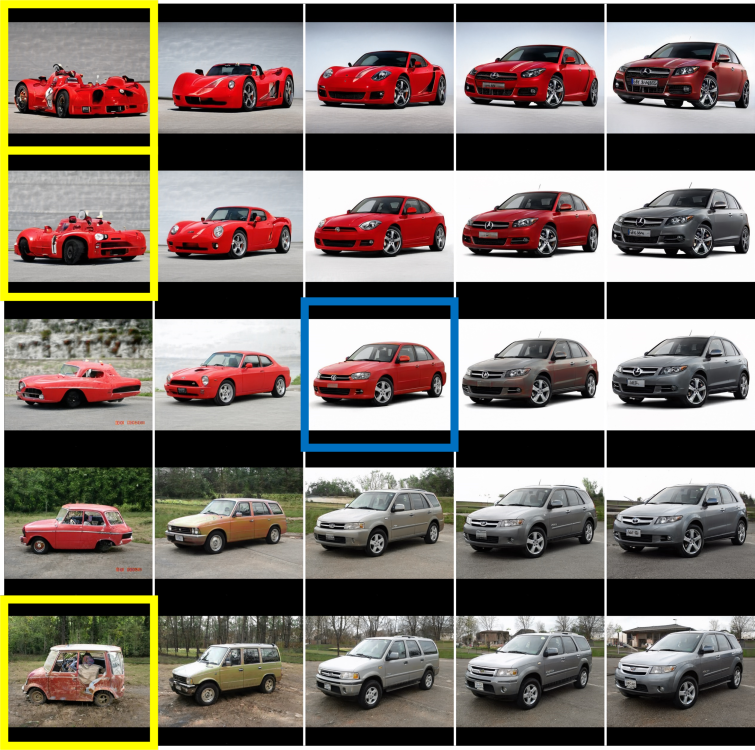}
    \caption{Max-distorted layer 3}
    \end{subfigure}
    \quad
    \begin{subfigure}[b]{0.3\columnwidth}
    \centering
    \includegraphics[width=\columnwidth]{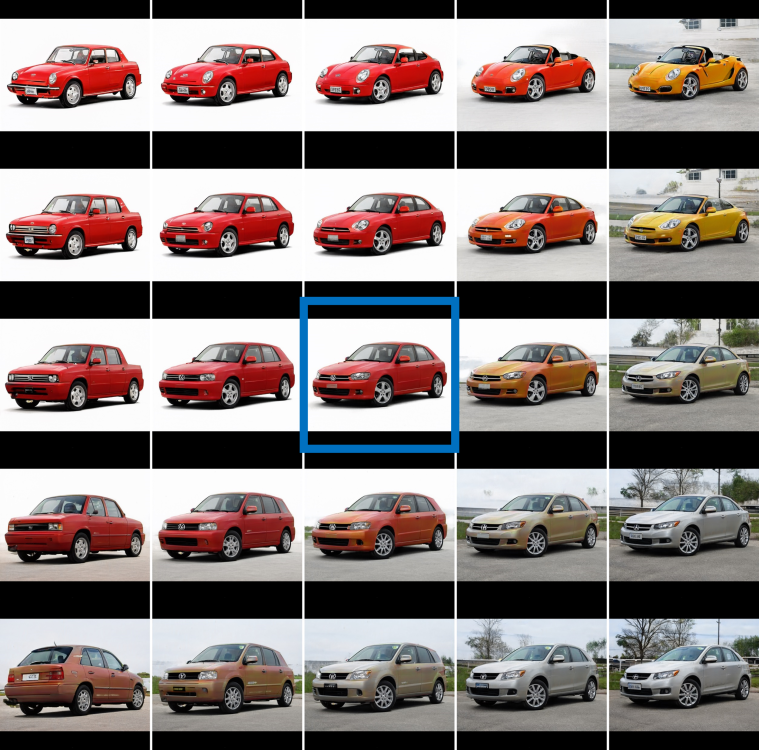}
    \caption{Min-distorted layer 7}
    \end{subfigure}
    \quad
    \begin{subfigure}[b]{0.3\columnwidth}
    \centering
    \includegraphics[width=\columnwidth]{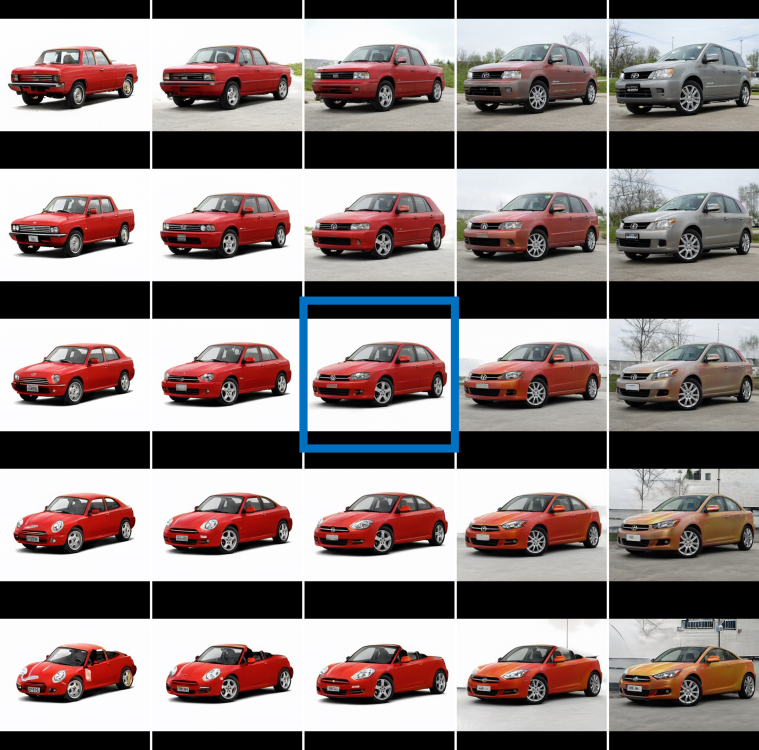}
    \caption{Layer 8 ($\wset$-space)}
    \end{subfigure}
    \caption{
    \textbf{Linear Traversal} on StyleGAN2-LSUN Cars. The left sides of layer 3 are severely deteriorated.}
\end{figure}

\end{document}